\theoremstyle{definition}
\newtheorem{theorem}{Theorem}[section]
\newtheorem{definition}[theorem]{Definition}
\newtheorem{proposition}[theorem]{Proposition}
\newtheorem{corollary}[theorem]{Corollary}
\newtheorem{lemma}[theorem]{Lemma}
\newtheorem{example}[theorem]{Example}
\newcommand{\be}{\begin{equation}}
\newcommand{\ee}{\end{equation}}
\newcommand{\bea}{\begin{equation*}\begin{aligned}}
\newcommand{\eea}{\end{aligned}\end{equation*}}
\newcommand{\ds}{\displaystyle}
\newcommand{\R}{\mathbb{R}}
\newcommand{\Min}{\min\limits_}
\newcommand{\Sup}{\sup\limits_}
\newcommand{\wh}{\widehat}
\newcommand{\mc}{\mathcal}
\newcommand{\mbb}{\mathbb}
\newcommand{\PP}{\mbb P}
\newcommand{\Pnom}{\wh{\mbb P}}
\newcommand{\QQ}{\mbb Q}
\newcommand{\DD}{\mbb D}
\newcommand{\dd}{\mathrm{d}}
\DeclareMathOperator{\st}{s.t.}
\newcommand{\Let}{\triangleq}
\newcommand{\opt}{^\star}
\newcommand{\Wass}{\mathds{W}}
\newcommand{\EE}{\mathds{E}}
\newcommand{\modif}[1]{#1}
\author{
	Viet Anh Nguyen \qquad \qquad  Fan Zhang \qquad \qquad Jos\'{e} Blanchet\\
	Stanford University, United States \\
	\texttt{ \{viet-anh.nguyen, fzh, jose.blanchet\}@stanford.edu } 
	\AND
	Erick Delage \\
	HEC Montr\'{e}al, Canada \\
	\texttt{erick.delage@hec.ca} 
	\AND
	Yinyu Ye \\
	Stanford University, United States\\
	\texttt{yinyu-ye@stanford.edu} 
}
\newcommand{\quoteIt}[1]{``#1''}
\newcommand{\myRe}{\mathbb{R}}
\title{Distributionally Robust Local Non-parametric Conditional Estimation}
\begin{document}
	\maketitle
	\begin{abstract}
		Conditional estimation given specific covariate values (i.e., local conditional estimation or functional estimation) is ubiquitously useful with applications in engineering, social and natural sciences. Existing data-driven non-parametric estimators mostly focus on structured homogeneous data (e.g., weakly independent and stationary data), thus they are sensitive to adversarial noise and may perform poorly under a low sample size. To alleviate these issues, we propose a new distributionally robust estimator that generates non-parametric local estimates by minimizing the worst-case conditional expected loss over all adversarial distributions in a Wasserstein ambiguity set. We show that despite being generally intractable, the local estimator can be efficiently found via convex optimization under broadly applicable settings, and it is robust to the corruption and heterogeneity of the data. Experiments with synthetic and MNIST datasets show the competitive performance of this new class of estimators.
	\end{abstract}

	\section{Introduction}

	We consider the estimation of conditional statistics of a response variable, $Y\in\myRe^m$, given the value of a predictor or covariate $X\in\myRe^n$. The single most important instance of these types of problems involves estimating the conditional mean, or also known as the regression function. 	Under finite variance assumptions, the conditional mean $\EE_\PP[Y | X = x_0]$ is technically defined as $\psi\opt(x_0)$ for some measurable function $\psi\opt$ that solves the minimum mean square error problem
		\be \notag
		\Min{\psi} ~\EE_\PP[ \| Y - \psi(X) \|_2^2 ],
		\ee
		where the minimization is taken over the space of all measurable functions from $\R^n$ to $\R^m$. While the optimal solution $\psi\opt$ is unique up to sets of $\PP$-measure zero, unfortunately, solving for $\psi\opt$ is challenging because it is an infinite-dimensional optimization problem. The regression function $\psi\opt$ can be efficiently found only under specific settings, for example, if one assumes that $(X, Y)$ follows a jointly Gaussian distribution. However, these specific situations are overly restrictive in practice.
		
		In order to bypass the infinite-dimensional challenge involved in directly computing $\psi\opt$, we may instead consider a family of optimization problems that are parametrized by $x_0$. More specifically, in the presence of a regular conditional distribution, the conditional mean $\EE_{\PP}[Y | X = x_0]$ can be estimated pointwise by $\wh \beta$ defined as
		\[
	     \wh \beta \in \arg \Min{\beta} \EE_{\PP} [ \| Y - \beta \|_2^2 | X = x_0 ]
		\]
		for any covariate value $x_0$ of interest. This presents the challenge of effectively accessing the conditional distribution, which is particularly difficult if the event $X=x_0$ has $\PP$-probability zero.
		
		Using an analogous argument, if we are interested in the conditional $(\tau \times 100\%)$-quantile of $Y$ given $X$, then this conditional statistics can be estimated pointwise at any location $x_0$ of interest by
		\[
		    \wh \beta \in \arg \Min{\beta } \EE_{\PP}[ \max\{-\tau(Y - \beta), (1-\tau)(Y - \beta)\} | X = x_0].
		\]
The previous examples illustrate that the estimation of a wide range of conditional statistics can be recast into solving a family of finite-dimensional optimization problems parametrically in $x_0$
	\be \label{eq:original}
	\Min{\beta}~\EE_{\PP} [ \ell(Y, \beta) | X = x_0]
    \ee
	with an appropriately chosen statistical loss function $\ell$. 
	
	Problem \eqref{eq:original} poses several challenges, some of which were alluded to earlier. First, it requires the integration with respect to a difficult to compute conditional probability distribution. Second, the probability measure $\PP$ is generally unknown, hence we lack a fundamental input to solve \eqref{eq:original}. Finally, in a data-driven setting, there may be few, or even no, observations with value covariate $X = x_0$.
	
	
	To alleviate these difficulties, our formulation, as we shall explain, involves two features. First, we consider a relaxation of problem~\eqref{eq:original} in which the event $X=x_0$ is replaced by a neighborhood $\mc N_{\gamma}(x_0)$ of a suitable radius $\gamma \ge 0$ around $x_0$. Second, we introduce a data-driven distributionally robust optimization (DRO) formulation (e.g. \cite{ref:delage2010distributionally, ref:blanchet2019quantifying, ref:kuhn2019wasserstein}) in order to mitigate the problem that $\PP$ is unknown. In turn, the DRO formulation involves a novel class of conditional ambiguity set which copes with the underlying {\it conditional distribution} being unknown.  
		
	In particular, we propose the following \textit{distributionally robust local conditional estimation problem}
	\be \label{eq:local_DRO}
	    \Min{\beta } \Sup{\QQ \in \mbb B_\rho^\infty, \QQ (X \in \mc N_{\gamma}(x_0)) > 0 } \EE_{\QQ} \big[ \ell(Y, \beta) | X \in \mc N_{\gamma}(x_0) \big],
	\ee
	where the maximization is taken over all probability measures $\QQ$ that are within $\rho$ distance in the $\infty$-Wasserstein sense of a benchmark nominal model, which often corresponds to the empirical distribution of available data. The probability measures $\QQ$ are constrained so that $\QQ(X \in \mc N_{\gamma}(x_0))>0$ to eliminate the complication of conditioning on a set of measure zero.

	\textbf{Contributions.} Resting on formulation \eqref{eq:local_DRO}, our main contributions are summarized as follows.  
	\begin{enumerate}[leftmargin = 5mm]
	    \item We introduce a novel paradigm of non-parametric local \textit{conditional} estimation based on distributionally robust optimization. In contrast to classical non-parametric conditional estimators, our new class of estimators are endowed by design with robustness features. They are structurally built to mitigate the impact of model contamination and therefore they may be reasonably applied to heterogeneous data (e.g., non i.i.d.~input).
	    \item We demonstrate that when the ambiguity set is a type-$\infty$ Wasserstein ball around the empirical measure, the proposed min-max estimation problem can be efficiently solved in many applicable settings, including notably the local conditional mean and quantile estimation.
	    \item We show that this class of type-$\infty$ Wasserstein local conditional estimators can be considered as a systematic robustification of the $k$-nearest neighbor estimator. We also provide further insights on the statistical properties of our approach and empirical evidence, with both a synthetic and real data sets, that our approach can provide more accurate estimations in practically relevant settings.
	\end{enumerate}


\textbf{Related work.} One can argue that every single prediction task in machine learning ultimately relates to conditional estimation. So, attempting to provide a full literature survey on non-parametric conditional estimation is an impossible task. Since our contribution is primarily on introducing a novel conceptual paradigm powered by DRO, we focus on discussing well-understood estimators that encompass most of the conceptual ideas used to mitigate the challenges exposed earlier. 

The challenges of conditioning on zero probability events and the fact that $x_0$ may not be a part of the sample are addressed based on the idea of averaging around a neighborhood of the point of interest and smoothing. This gives rise to estimators such as $k$-NN (see, for example, \cite{ref:devroye1978uniform}), and kernel density estimators, including, for instance the Nadaraya-Watson estimator (\cite{ref:nadaraya1964estimating, ref:watson1964smooth}) and the Epanechnikov estimator \cite{ref:epanechnikov1969non}, among others. Additional averaging methods include, for example, random forests~\cite{ref:breiman2001random} and Classification and Regression Trees (CARTs, \cite{ref:breiman1984classification}), see also \cite{ref:hastie2009elements} for other techniques.  

These averaging and smoothing ideas are well understood, leading to the optimal selection (in a suitable sense) of the kernel along with the associated tuning parameters such as the bandwidth size. These choices are then used to deal with the ignorance of the true data generating distribution by assuming a certain degree of homogeneity in the data, such as stationarity and weak dependence, in order to guarantee consistency and recovery of the underlying generating model. However, none of these estimators are directly designed to cope with the problem of general (potentially adversarial) data contamination. 

The later issue revolving around the evaluation of an unknown conditional probability model is connected with robustness, another classical topic in statistics \cite{ref:huber1992robust}. Much of the classical literature on robustness focuses on the impact of outliers. The work of \cite{ref:zhao2016robust} studies robust-against-outliers kernel regression which enjoys asymptotic consistency and normality under i.i.d.~assumptions in a setting where the data contamination becomes negligible. In contrast to this type of contamination, our estimators are designed to be min-max optimal in the DRO sense by supplying the best response against a large (non-parametric) class of adversarial contamination.   

\modif{Our results can also be seen as connected to adversarial training, which has received a significant amount of attention in recent years \cite{ref:goodfellow2015texplaining,ref:kurakin2016adversarial,ref:madry2018towards,ref:tramer2018ensemble, ref:sinha2017certifying,ref:raghunathan2018certified}. Existing robustification of the nearest neighbors and of the nonparametric classifiers in general can be streamlined into two main strategies: i) global approaches that modify the whole training dataset, e.g., adversarial pruning \cite{ref:wang2018analyzing, ref:yang2020robustness, ref:bhattacharjee2020when}, and ii) local approaches that study well-crafted attack and seek appropriate defense for specific classifiers such as {$1$-NN} \cite{ref:khoury2018geometry, ref:wang2019evaluating,ref:li2019advknn}. Following this line of ideas, one can interpret our approach as a novel method to train conditional estimators against adversarial attacks. The difference, in the $k$-NN estimation setting for example, is that our attacks are optimal in a distributional sense. Our proposed estimator is thus provably the best for a uniform class of distributional attacks. Compared to the current literature, we believe that our approach is also more general in two significant ways: first, we start from a generic min-max estimation problem, and our ideas and methodology are easily applicable to other non-parametric settings, and second, we allow for perturbations on $Y$ to hedge against label contamination. 
}

DRO-based estimators have generated a great deal of interest because they possess various desirable properties in connection to various forms of regularization (e.g., variance~\cite{ref:namkoong2017variance}; norm~\cite{ref:shafieezadeh2019regularization}; shrinkage~\cite{ref:nguyen2018distributionally}). The tools that we employ are related to those currently being investigated. Our formulation considers adversarial perturbations based on the Wasserstein distance~\cite{ref:esfahani2018data, ref:blanchet2019quantifying, ref:gao2016distributionally, ref:kuhn2019wasserstein}. In particular, the type-$\infty$ Wasserstein distance \cite{ref:givens1984class} is recently applied in DRO formulations \cite{ref:bertsimas2018data, ref:bertsimas2019computational, ref:xie2019tractable}. In particular, the work of \cite{ref:bertsimas2019dynamic} considers adversarial conditional estimation, taking as input various classical estimators (e.g., $k$-NNs, kernel methods, etc.) and proposes a robustification approach considering only perturbation in the response variable. Our method whereas allows perturbations both to the covariate and response variables, which is technically more subtle because of the local conditioning problem. 
Within the $k$-NN DRO conditional robustification, our numerical experiments in Section~\ref{sect:numerical} show substantial benefits of our local conditioning approach, especially in dealing with non-homogeneity and sharp variations in the underlying density.

	\textbf{Notations.} For any integer $M \in \mbb N_+$, we denote by $[M]$ the set $\{1, \ldots, M\}$. For any set $\mc S$, $\mc M(\mc S)$ is the space of all probability measures supported on $\mc S$.
	
	\section{Local Conditional Estimate using Type-$\infty$ Wasserstein Ambiguity Set}
	\label{sect:infty-set}
	
	We start by delineating the building blocks of our distributionally robust estimation problem~\eqref{eq:local_DRO}. The nominal measure is set to the empirical distribution of the available data, $\Pnom = N^{-1} \sum_{i \in [N]} \delta_{(\wh x_i, \wh y_i)}$, where $\delta_{(\wh x, \wh y)}$ represents the Dirac distribution at $(\wh x, \wh y)$. The ambiguity set $\mbb B_\rho^\infty$ is a Wasserstein ball around $\Pnom$ that contains the true distribution $\PP$ with high confidence. 
	
	\begin{definition}[Wasserstein distance]
		Let $\DD$ be a metric on $\Xi$. The type-$p$ $(1 \leq p < +\infty)$ Wasserstein distance between $\QQ_1$ and $\QQ_2$ is defined as
		\[
		\Wass_{p}(\QQ_1, \QQ_2) \Let \inf \left\{ \big(\EE_\pi [\DD(\xi_1, \xi_2)^p] \big)^{\frac{1}{p}}:
		\pi \in \Pi(\QQ_1, \QQ_2)
		\right\},
		\]
		where $\Pi(\QQ_1 , \QQ_2)$ is the set of all probability measures on $\Xi \times \Xi$ with marginals $\QQ_1$ and $\QQ_2$, respectively. 
		The type-$\infty$ Wasserstein distance is defined as the limit of $\Wass_p$ as $p$ tends to $\infty$ and amounts to
		\[
		\Wass_{\infty}(\QQ_1, \QQ_2) \Let \inf \left\{ \mathrm{ess} \Sup{\pi} \big\{ \DD(\xi_1, \xi_2) : (\xi_1, \xi_2)  \in \Xi \times \Xi \big\} :
		\pi \in \Pi(\QQ_1, \QQ_2)
		\right\}.
		\]
	\end{definition}
	
	 We assume that $(X,Y)$ admits values in $\mc X \times \mc Y \subseteq \R^n \times \R^m$, and the distance $\DD$ on $\mc X \times \mc Y$ is 
	 \[
	    \DD\big( (x, y), (x', y') \big) = \DD_{\mc X}(x, x') + \DD_{\mc Y}(y, y') \qquad \forall (x, y), (x', y') \in \mc X \times \mc Y,
	 \]
	 where $\DD_{\mc X}$ and $\DD_{\mc Y}$ are continuous metric on $\mc X$ and $\mc Y$, respectively. The joint ambiguity set $\mbb B^\infty_\rho$ is now formally defined as a type-$\infty$ Wasserstein ball in the space of joint probability measures
	\[
	\mbb B^\infty_\rho \Let \left\{ \QQ \in \mc M(\mc X \times \mc Y): \Wass_\infty(\QQ, \Pnom) \leq \rho  \right\}.
	\]
	We assume further that the compact neighborhood $\mc N_\gamma(x_0)$ around $x_0$ is prescribed using the distance $\DD_{\mc X}$ as $\mc N_\gamma(x_0)\Let\{ x \in \mc X: \DD_{\mc X}(x,x_0)\le\gamma\}$, and the loss function $\ell$ is jointly continuous in $y$ and $\beta$.

    To solve the estimation problem~\eqref{eq:local_DRO}, we study the worst-case conditional expected loss function
	\[
	f(\beta) \Let\sup_{\QQ \in \mbb B_\rho, \QQ (X \in \mc N_{\gamma}(x_0)) > 0 } \EE_{\QQ} \big[ \ell(Y, \beta) | X \in \mc N_{\gamma}(x_0) \big],
	\]
	which corresponds to the inner maximization problem of~\eqref{eq:local_DRO}.
	To ensure that the value $f(\beta)$ is well-defined, we first investigate the conditions under which the above supremum problem has a non-empty feasible set. Towards this end, for any set $\mc N_\gamma(x_0) \subset \mc X$, define the quantities $\kappa_{i, \gamma}$ as
		\be \label{eq:kappa-def}
		0 \le \kappa_{i, \gamma} \Let \min_{x \in \mc N_\gamma(x_0)}\DD_{\mc X}(x, \wh x_i) + \inf_{y \in \mc Y}~\DD_{\mc Y}(y, \wh y_i)  \qquad \forall i \in [N].
		\ee
	The value $\kappa_{i, \gamma}$ signifies the unit cost of moving a point mass from an observation $(\wh x_i, \wh y_i)$ to the fiber set $ \mc N_\gamma(x_0) \times \mc Y$. We also define $\wh x_i^p$ as the projection of $\wh x_i$ onto the neighborhood $\mc N_\gamma(x_0)$, which coincides with the optimal solution in the variable $x$ of the minimization problem in~\eqref{eq:kappa-def}. The next proposition asserts that $f(\beta)$ is well-defined if the radius $\rho$ is sufficiently large.

	\begin{proposition}[Minimum radius]
		\label{prop:infty-vanilla-set}
		For any $x_0 \in \mc X$ and $\gamma \in \R_+$, there exists a distribution $\QQ \in \mbb B_\rho$ that satisfies $\QQ(X \in \mc N_\gamma(x_0)) > 0$ if and only if $\rho \ge \min_{i \in [N]} \kappa_{i, \gamma}$.
	\end{proposition}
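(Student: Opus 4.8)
The plan is to prove the equivalence by establishing the two implications separately: the \quoteIt{if} direction by exhibiting an explicit feasible member of the ball, and the \quoteIt{only if} direction by disintegrating a near-optimal type-$\infty$ transport plan and reading off the bound from the mass that lands in $\mc N_\gamma(x_0)$.

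For sufficiency, suppose $\rho\ge\min_{i\in[N]}\kappa_{i,\gamma}$ and let $i\opt$ attain the minimum. The idea is to build $\QQ$ by relocating the single atom $(\wh x_{i\opt},\wh y_{i\opt})$ of $\Pnom$ onto its optimal target in the fiber $\mc N_\gamma(x_0)\times\mc Y$ and leaving the remaining $N-1$ atoms fixed. Compactness of $\mc N_\gamma(x_0)$ and continuity of $\DD_\mc X$ guarantee that the minimum in~\eqref{eq:kappa-def} is attained, at $\wh x_{i\opt}^p$, while the infimum there is attained at $\wh y_{i\opt}\in\mc Y$ itself (and equals $0$); so the relocated atom becomes $(\wh x_{i\opt}^p,\wh y_{i\opt})$. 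The deterministic coupling supported on the graph of this relocation lies in $\Pi(\Pnom,\QQ)$ and its $\DD$-essential supremum equals $\kappa_{i\opt,\gamma}\le\rho$, which certifies $\Wass_\infty(\QQ,\Pnom)\le\rho$, i.e.\ $\QQ\in\mbb B_\rho$; and $\QQ(X\in\mc N_\gamma(x_0))\ge 1/N>0$ since the relocated mass now sits over $\mc N_\gamma(x_0)$.

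For necessity, suppose some $\QQ\in\mbb B_\rho$ has $\QQ(X\in\mc N_\gamma(x_0))>0$; fix $\eps>0$ and pick $\pi\in\Pi(\Pnom,\QQ)$ with $\DD$-essential supremum at most $\rho+\eps$. Since $\Pnom$ is finitely supported, $\pi$ decomposes as $N^{-1}\sum_{i}\delta_{(\wh x_i,\wh y_i)}\otimes\QQ_i$ with $\QQ=N^{-1}\sum_i\QQ_i$, and the essential-supremum bound forces each conditional $\QQ_i$ to be concentrated on the $(\rho+\eps)$-ball around $(\wh x_i,\wh y_i)$ in the metric $\DD$. From $\sum_i\QQ_i(X\in\mc N_\gamma(x_0))>0$ there is an index $j$ with $\QQ_j(X\in\mc N_\gamma(x_0))>0$, and intersecting this event with that full-measure ball yields a point $(x,y)\in\mc N_\gamma(x_0)\times\mc Y$ with $\DD_\mc X(x,\wh x_j)+\DD_\mc Y(y,\wh y_j)\le\rho+\eps$; by the definition of $\kappa_{j,\gamma}$ this gives $\min_{i\in[N]}\kappa_{i,\gamma}\le\kappa_{j,\gamma}\le\rho+\eps$, and letting $\eps\da0$ completes the argument.

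I expect the only delicate point to be the disintegration step in the necessity direction: translating \quoteIt{the plan's $\DD$-essential supremum is at most $\rho+\eps$} into the statement that each conditional $\QQ_i$ lives on a $(\rho+\eps)$-ball around the corresponding data atom, hence that whatever mass reaches $\mc N_\gamma(x_0)$ does so from within distance $\rho+\eps$ of its source. Everything else — the explicit one-atom construction, the use of compactness of $\mc N_\gamma(x_0)$ to justify that $\wh x_i^p$ is well defined, and the closing $\eps\da0$ limit — should be routine.
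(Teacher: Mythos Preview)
Your proposal is correct and follows essentially the same approach as the paper: both directions use the disintegration of couplings with the finitely supported $\Pnom$ into conditionals $\QQ_i$ (the paper's $\pi_i$), the sufficiency construction is identical (relocate only the $i\opt$-atom to $(\wh x_{i\opt}^p,\wh y_{i\opt})$), and necessity reads off $\kappa_{j,\gamma}\le\rho$ from whichever conditional places mass in $\mc N_\gamma(x_0)$. The only cosmetic difference is that the paper tacitly uses an optimal $\Wass_\infty$-coupling with essential supremum exactly $\le\rho$, whereas you work with a $(\rho+\eps)$-near-optimal coupling and let $\eps\da0$; your version is slightly more self-contained but not a different route.
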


	\begin{wrapfigure}{R}{0.42\textwidth}
		\centering
		\includegraphics[width=0.4\textwidth]{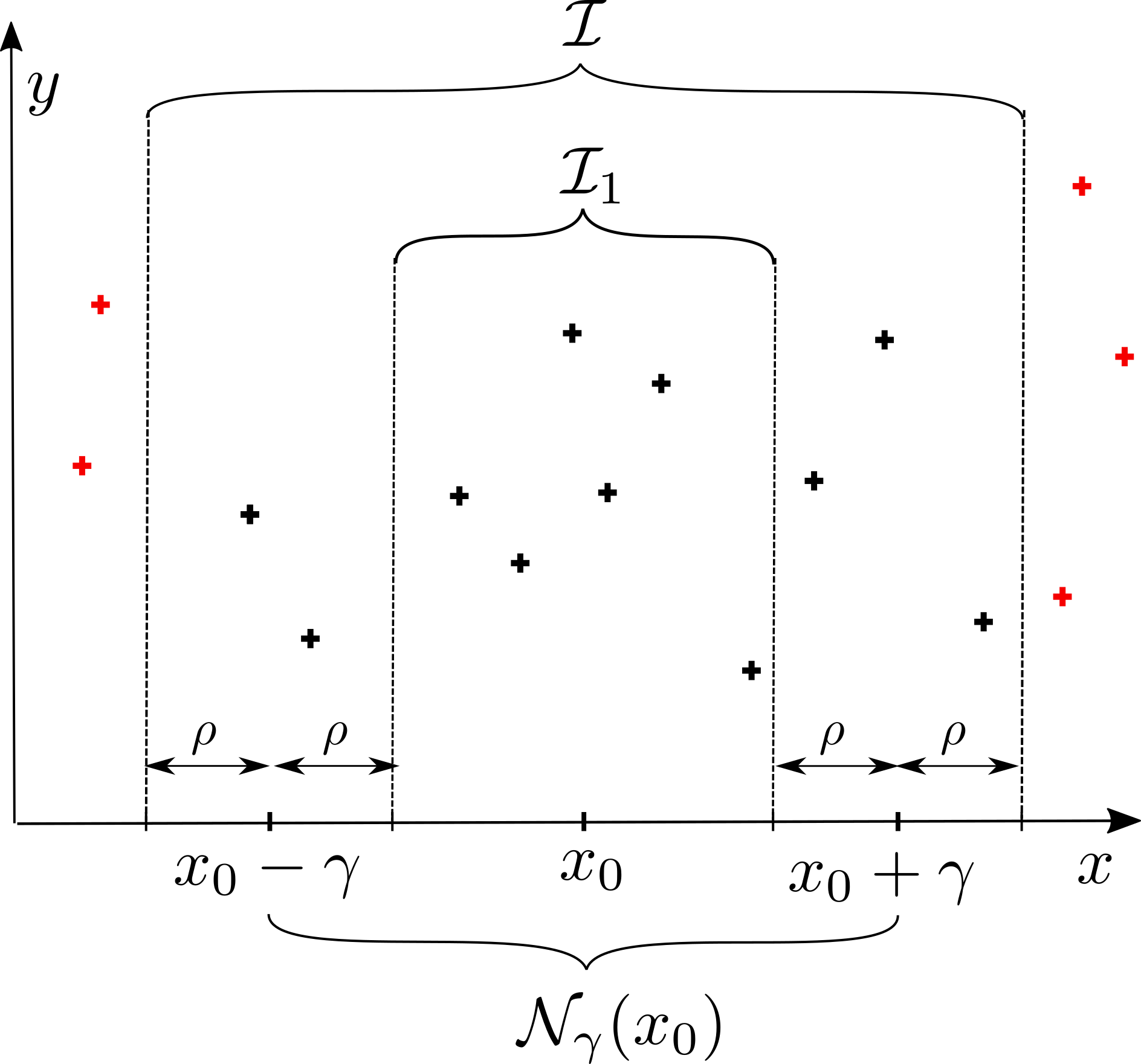}
		\caption{Illustration around the neighborhood of $x_0$ with $\rho < \gamma$. Black crosses are samples in the set $\mc I$.}
		\label{fig:I-set}
	\end{wrapfigure}
	We now proceed to the reformulation of $f(\beta)$. Let $\mc I$ be the index set defined as
    \begin{subequations}
	\be  \label{eq:I-def1}
	\mc I \Let \left\{ i \in [N] : \DD_{\mc X}(x_0, \wh x_i) \le \rho + \gamma \right\},
	\ee
	and $\mc I$ is decomposed further into two disjoint subsets
	\be \label{eq:I-def2}
	    \mc I_1 = \left\{ i \in \mc I: \DD_{\mc X}(x_0, \wh x_i) + \rho \le \gamma \right\} ~~ \text{and} ~~ \mc I_2 = \mc I \backslash \mc I_1.
	\ee
	\end{subequations}
	
	Intuitively speaking, $\mc I$ contains the indices of data points whose covariate $\wh x_i$ is sufficiently close to $x_0$ measured by $\DD_{\mc X}$, and are thus relevant to the local estimation problem. The index set $\mc I_1$ indicates the data points that lie strictly inside the neighborhood, while the set $\mc I_2$ contains those points that are on the boundary ring of width $\rho$ around the neighborhood $\mc N_{\gamma}(x_0)$. The value $f(\beta)$ can be efficiently computed in a quasi-closed form thanks to the following result.

	\begin{theorem}[Worst-case conditional expected loss computation]
		\label{thm:infty-refor}
		For any $\gamma \in \R_+$, suppose that $\rho \ge \min_{i \in [N]} \kappa_{i, \gamma}$. For any $\beta \in \mc Y$, let $v_i\opt(\beta)$ be defined as
		\begin{equation}
			\label{eq:vi-def}
			v_i \opt(\beta) \Let \Sup{y_i} \left\{ \ell(y_i, \beta) : y_i \in \mc Y,~\DD_{\mc Y}(y_i, \wh y_i) \leq \rho - \DD_{\mc X}(\wh x_i^p, \wh x_i) \right\}	\quad \forall i \in \mc I.
		\end{equation}		    
		The worst-case conditional expected loss is equal to 
		$
		f(\beta) =
		\big(\sum_{i \in \mc I} \alpha_i\big)^{-1} \sum_{i \in \mc I}  \alpha_i v_i\opt(\beta),
		$ 
		where $\alpha$ admits the value 
        \be \notag 
        \forall i \in \mc I: \quad \alpha_i = \begin{cases}
            1 & \text{if } i \in \mc I_1 \text{ or } (\mc I_1 = \emptyset \text{ and } v_i\opt(\beta) = \max_{j \in \mc I_2} v_j\opt(\beta)), \\
            1 & \text{if } \ds v_i\opt(\beta) > \frac{\sum_{i \in \mc I_1} v_i\opt(\beta) + \sum_{j \in \mc I_2: v_j\opt(\beta) > v_i\opt(\beta)} v_j\opt(\beta)}{|\mc I_1| + |\{j \in \mc I_2: v_j\opt(\beta) > v_i\opt(\beta) \}|},\\
            0 & \text{otherwise.}
        \end{cases} 
    \ee
	\end{theorem}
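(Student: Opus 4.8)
The plan is to exploit the discreteness of the nominal measure $\Pnom=N^{-1}\sum_{i\in[N]}\delta_{(\wh x_i,\wh y_i)}$ to collapse the infinite-dimensional inner supremum into a finite linear-fractional program over ``conditional masses'', and then to solve that program in closed form. Since the second marginal $\Pnom$ is atomic, any $\QQ$ with $\Wass_\infty(\QQ,\Pnom)\le\rho$ admits a coupling $\pi$ whose disintegration against $\Pnom$ writes $\QQ=N^{-1}\sum_{i\in[N]}\QQ_i$ with each $\QQ_i\in\mc M(\mc X\times\mc Y)$ supported on the ball $\{(x,y):\DD_{\mc X}(x,\wh x_i)+\DD_{\mc Y}(y,\wh y_i)\le\rho\}$, and conversely every such mixture is feasible. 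Setting $q_i\Let\QQ_i(X\in\mc N_\gamma(x_0))$, the conditional expectation equals $\big(\sum_i q_i\big)^{-1}\sum_i\EE_{\QQ_i}[\ell(Y,\beta)\mathds{1}\{X\in\mc N_\gamma(x_0)\}]$, so once the masses $(q_i)$ are fixed the adversary optimizes each $\QQ_i$ separately. Conditioning $\QQ_i$ on $\{X\in\mc N_\gamma(x_0)\}$ and taking the $x$-coordinate to be the projection $\wh x_i^p$ (which maximizes the remaining $\DD_{\mc Y}$-budget for $y$) shows that the supremum of $\EE_{\QQ_i}[\ell(Y,\beta)\mathds{1}\{X\in\mc N_\gamma(x_0)\}]$ over feasible $\QQ_i$ with prescribed mass $q_i$ equals $q_i\,v_i\opt(\beta)$; such routing with $q_i>0$ is possible exactly when $\DD_{\mc X}(\wh x_i^p,\wh x_i)\le\rho$, which in particular forces $q_i=0$ for every $i\notin\mc I$.

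The crux is then the asymmetry between $\mc I_1$ and $\mc I_2$. For $i\in\mc I_1$ every feasible $(x,y)$ obeys $\DD_{\mc X}(x,x_0)\le\DD_{\mc X}(x,\wh x_i)+\DD_{\mc X}(\wh x_i,x_0)\le\rho+(\gamma-\rho)=\gamma$, so the whole $i$-th atom is trapped inside $\mc N_\gamma(x_0)$ and $q_i=1$ is forced; for $i\in\mc I_2$ one can split mass across the boundary of $\mc N_\gamma(x_0)$, so $q_i$ ranges freely over $[0,1]$. Hence
\[
f(\beta)=\sup\Bigg\{\frac{\sum_{i\in\mc I_1}v_i\opt(\beta)+\sum_{i\in\mc I_2}q_i\,v_i\opt(\beta)}{|\mc I_1|+\sum_{i\in\mc I_2}q_i}\;:\;q_i\in[0,1]\ \ \forall i\in\mc I_2\Bigg\},
\]
where the constraint set is nonempty precisely under the hypothesis $\rho\ge\min_i\kappa_{i,\gamma}$ of Proposition~\ref{prop:infty-vanilla-set} (with the extra requirement $\sum_{i\in\mc I_2}q_i>0$ when $\mc I_1=\emptyset$).

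It remains to solve this linear-fractional program. When $\mc I_1\neq\emptyset$ the denominator is bounded below by $|\mc I_1|\ge1$, so the maximum over the box is attained at a $0/1$ vertex, i.e.\ at some $S\subseteq\mc I_2$; an exchange argument (trading an included index for an excluded one with larger $v\opt$ leaves the denominator unchanged and raises the numerator) shows the optimal $S$ is a ``top set''. Sorting the values $v_{(1)}\ge v_{(2)}\ge\cdots$ over $\mc I_2$ and writing $h(k)=\big(\sum_{i\in\mc I_1}v_i\opt(\beta)+\sum_{j\le k}v_{(j)}\big)/(|\mc I_1|+k)$, a one-line computation gives $h(k)-h(k-1)\propto v_{(k)}-h(k-1)$; since the $v_{(j)}$ are nonincreasing, once $h$ stops increasing it never increases again, so $h$ is unimodal and the optimal cut is the largest $k$ with $v_{(k)}>h(k-1)$. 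Reading the inequality $v_{(k)}>h(k-1)$ back in terms of the original indices — with $h(k-1)$ being the average of $v\opt(\beta)$ over $\mc I_1$ together with the $\mc I_2$-indices whose value strictly exceeds $v_{(k)}$ — is exactly the stated rule for $\alpha$, and $f(\beta)=\big(\sum_{i\in\mc I}\alpha_i\big)^{-1}\sum_{i\in\mc I}\alpha_i v_i\opt(\beta)$. The degenerate case $\mc I_1=\emptyset$ turns the ratio into a weighted average of $\{v_i\opt(\beta)\}_{i\in\mc I_2}$, maximized by concentrating all mass on an argmax — the first branch of the formula — and ties among the $v\opt$ values neither change the optimal value nor the displayed $\alpha$, being absorbed by the strict inequality.

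I expect the main obstacle to be the two reduction steps, namely justifying rigorously that a worst-case distribution has the product (per-atom) form and, above all, pinning down the exact feasible set of conditional masses $(q_i)$ — in particular that $q_i\equiv1$ on $\mc I_1$ while $q_i$ is free on $\mc I_2$, which is precisely what makes the deep-interior points enter the averaging rule unconditionally. Once the problem is correctly reduced to the finite linear-fractional program, the combinatorial step is elementary. A minor technical wrinkle is attainment of the suprema defining $v_i\opt(\beta)$ when $\mc Y$ is unbounded, which is handled by passing to maximizing sequences (the identity for $f(\beta)$ persists verbatim with all maxima read as suprema).
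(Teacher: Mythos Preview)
Your proposal is correct and follows essentially the same route as the paper: decompose $\QQ$ along the atoms of $\Pnom$ into per-sample measures supported on $\rho$-balls, reduce the conditional expectation to the linear-fractional program in the masses $q_i$ (with $q_i\equiv1$ forced on $\mc I_1$ and $q_i\in[0,1]$ free on $\mc I_2$), and then solve that program by the sorting/unimodality argument you describe --- which is exactly the paper's Lemma~\ref{lemma:fractional}. The only cosmetic differences are that the paper packages the fractional-LP step as a separate lemma and makes the two-sided inequality between the measure problem and the finite program explicit via an $\epsilon$-construction, whereas you argue both inline.
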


	If we possess an oracle that evaluates~\eqref{eq:vi-def} at a complexity $\mc O$, then by Theorem~\ref{thm:infty-refor}, quantifying $f(\beta)$ is reduced to calculating $|\mc I|$ values of $v_i\opt(\beta)$ and then sorting these values in order to determine the value of $\alpha$. Thus, computing $f(\beta)$ takes an amount of time of order $O\big(| \mc I | (\log | \mc I| + \mc O)\big)$. Moreover,  $f(\beta)$ depends solely on the observations in the locality of $x_0$ whose indices belong to the index set $\mc I$, the cardinality of which can be substantially smaller than the total number of training samples $N$.
	
	If $\ell$ is a convex function in $\beta$, then a standard result from convex analysis implies that $f$, being a pointwise supremum of convex functions, is also convex.
	If $\mc Y$, and hence $\beta$, is unidimensional, a golden section search algorithm can be utilized to identify the local conditional estimate $\beta^*$ that solves~\eqref{eq:local_DRO} in an amount of time of order $O\big(\log(1/\epsilon) |\mc I| (\log(|\mc I|)+\mc O)\big)$, where $\epsilon> 0$ is an arbitrary accuracy level. Fortunately, in the case of conditional mean and quantile estimation, we also have access to the closed form expressions of $v_i\opt(\beta)$ as long as
	$\DD_{\mc Y}$ is an absolute distance.
	
	\begin{corollary}[Value of $v_i\opt(\beta)$] \label{corol:vi-opt}
	Suppose that $\mc Y= [a, b] \subseteq [-\infty, +\infty]$ and $\DD_{\mc Y}(y_i, \wh y_i)=|y_i- \wh y_i|$.
	\begin{enumerate}[label=(\roman*), leftmargin=6mm]
		\item Conditional mean estimation: if $\ell(y, \beta) = (y - \beta)^2$, then $\forall i \in \mc I$
		\[
		    v_i\opt(\beta) = \max\big\{ (\max\{ \wh y_i + \rho - \DD_{\mc X}(\wh x_i^p, \wh x_i), a\} - \beta)^2, (\min\{ \wh y_i + \rho - \DD_{\mc X}(\wh x_i^p, \wh x_i), b\} - \beta)^2
		    \big\}.
		\]
		\item Conditional quantile estimation: if $\ell(y, \beta)=\max\{-\tau(y - \beta), (1-\tau)(y - \beta)\}$, then $\forall i \in \mc I$
		\[
		    v_i\opt(\beta) \!=\!\max\big\{-\tau(\max\{ \wh y_i + \rho - \DD_{\mc X}(\wh x_i^p, \wh x_i), a\} - \beta), (1-\tau)(\min\{ \wh y_i + \rho - \DD_{\mc X}(\wh x_i^p, \wh x_i), b\} - \beta)\big\}.
		\]
	\end{enumerate}
	\end{corollary}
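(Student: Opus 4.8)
The plan is to exploit that, once $\DD_{\mc Y}$ is the absolute value and $\mc Y=[a,b]$, the feasible region of the inner problem~\eqref{eq:vi-def} collapses to a single compact interval of $\R$, that $y\mapsto\ell(y,\beta)$ is convex in both displayed cases, and that the maximum of a convex function over a compact interval is attained at an endpoint; evaluating $\ell$ at the two endpoints then yields the closed forms.

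First I would fix $i\in\mc I$ and $\beta\in\mc Y$ and rewrite the constraint in~\eqref{eq:vi-def}. Setting $r_i\Let\rho-\DD_{\mc X}(\wh x_i^p,\wh x_i)$, note that $\wh y_i\in\mc Y$ forces $\inf_{y\in\mc Y}\DD_{\mc Y}(y,\wh y_i)=0$, so by~\eqref{eq:kappa-def} we have $\DD_{\mc X}(\wh x_i^p,\wh x_i)=\kappa_{i,\gamma}$; hence $r_i\le\rho<\infty$, and $r_i\ge 0$ for $i\in\mc I$ (this nonnegativity is exactly what makes~\eqref{eq:vi-def} well-posed in Theorem~\ref{thm:infty-refor}). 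Since $\DD_{\mc Y}(y_i,\wh y_i)=|y_i-\wh y_i|$, the constraint set is the nonempty compact interval $[\underline y_i,\overline y_i]$ with $\underline y_i\Let\max\{\wh y_i-r_i,a\}$ and $\overline y_i\Let\min\{\wh y_i+r_i,b\}$; it is nonempty because it contains $\wh y_i$, and bounded even when $a=-\infty$ or $b=+\infty$ because $r_i\le\rho$. Thus $v_i\opt(\beta)=\sup\{\ell(y,\beta):y\in[\underline y_i,\overline y_i]\}$.

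Next I would observe that $y\mapsto\ell(y,\beta)$ is convex: in case (i) it is $(y-\beta)^2$, and in case (ii) it is a pointwise maximum of two affine functions of $y$. A convex function on $[\underline y_i,\overline y_i]$ satisfies $\ell(\lambda\underline y_i+(1-\lambda)\overline y_i,\beta)\le\max\{\ell(\underline y_i,\beta),\ell(\overline y_i,\beta)\}$ for all $\lambda\in[0,1]$, and the supremum is in fact a maximum by compactness of $[\underline y_i,\overline y_i]$ and joint continuity of $\ell$; hence $v_i\opt(\beta)=\max\{\ell(\underline y_i,\beta),\ell(\overline y_i,\beta)\}$. For case (i) this is already the claimed expression once $\underline y_i$ and $\overline y_i$ are substituted. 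For case (ii) I would expand $\ell(\underline y_i,\beta)$ and $\ell(\overline y_i,\beta)$ through the definition of the pinball loss, so $v_i\opt(\beta)$ is the maximum of the four numbers $-\tau(\underline y_i-\beta)$, $(1-\tau)(\underline y_i-\beta)$, $-\tau(\overline y_i-\beta)$, $(1-\tau)(\overline y_i-\beta)$; since $\underline y_i\le\overline y_i$ and $0<\tau<1$, the inequalities $-\tau(\underline y_i-\beta)\ge-\tau(\overline y_i-\beta)$ and $(1-\tau)(\overline y_i-\beta)\ge(1-\tau)(\underline y_i-\beta)$ let me discard two dominated terms, leaving $v_i\opt(\beta)=\max\{-\tau(\underline y_i-\beta),(1-\tau)(\overline y_i-\beta)\}$, and substituting the endpoints gives the stated formula.

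I expect no genuine obstacle: this is a corollary and the argument is elementary. The only items requiring a little attention are the bookkeeping in the first step --- confirming the feasible interval is nonempty and, crucially, bounded when $\mc Y$ is a half-line or all of $\R$ --- and the short dominance argument in the quantile case that collapses the four-term maximum to the two-term expression.
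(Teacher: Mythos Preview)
Your proposal is correct and follows essentially the same route as the paper: identify the feasible set of~\eqref{eq:vi-def} as the interval $[\max\{a,\wh y_i-r_i\},\min\{b,\wh y_i+r_i\}]$, invoke convexity of $y\mapsto\ell(y,\beta)$ so that the supremum is attained at an endpoint, and evaluate. The paper's proof is terser---it does not spell out the nonnegativity of $r_i$, the boundedness check when $a=-\infty$ or $b=+\infty$, or the four-to-two dominance reduction in the quantile case---so your added bookkeeping is a genuine (if minor) refinement rather than a different method.
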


If $\mc Y$ is multidimensional, the structure of $\ell(y,\beta)$ and $\DD_{\mc Y}$ might be exploited to identify tractable optimization reformulations. The next result focuses on the local conditional mean estimation.
\begin{proposition}[Multivariate conditional mean estimation]  \label{prop:cond-exp}
		Let $\mc Y = \R^m$ and $\ell(y, \beta) = \| y - \beta \|_2^2$.
		\begin{enumerate}[label=(\roman*), leftmargin=6mm]
			\item \label{item:cond-exp-2}  Suppose that $\DD_{\mc Y}$ is a 2-norm on $\mc Y$, that is, $\DD_{\mc Y}(y, \wh y) = \| y - \wh y\|_2$. The distributionally robust local conditional estimation problem~\eqref{eq:local_DRO} is equivalent to the second-order cone program
			\[
             \begin{array}{cll}
			        \min & \lambda \\
			        \st & \beta\in \R^m,\;\lambda \in \R,\;u_i \in \R\;\forall i \in \mc I_1,\;u_i \in \R_+\;\forall i \in \mc I_2,\;t_i \in \R_+\;\forall i \in \mc I \\
			        & \sum_{i \in \mc I} u_i \le 0, \quad t_i \geq \|\wh y_i-\beta\|_2 \quad \forall i \in \mc I\\
			        & \|[t_i+\rho-\DD_{\mc X}(\wh x_i^p, \wh x_i)\;;\;(1/2)(1-\lambda-u_i)]\|_2\le (1/2)(1+\lambda + u_i) \quad \forall i \in \mc I.
			    \end{array}
			\]
			\item \label{item:cond-exp-infty}
			Suppose that $\DD_{\mc Y}$ is a $\infty$-norm on $\mc Y$, that is, $\DD_{\mc Y}(y, \wh y) = \| y - \wh y\|_\infty$. The distributionally robust local conditional estimation problem~\eqref{eq:local_DRO} is equivalent to the second-order cone program
			\[
		    \begin{array}{cll}
		        \min & \lambda \\
		        \st & \beta\in \R^m,\;\lambda \in \R,\; T \in \R_+^{|\mc I| \times m},\;u_i \in \R \;\forall i \in \mc I_1,\;u_i \in \R_+\;\forall i \in \mc I_2 \\
		        & \sum_{i \in \mc I} u_i \le 0, \;;\; \|[T_{i1}\;;\; T_{i2}\;;\;\cdots\;;\;T_{im}\;;\;\frac{1}{2}(1-\lambda-u_i)]\|_2\leq \frac{1}{2}(1+\lambda + u_i) \;\; \forall i \in \mc I \\
		       & \hspace{-2mm}\left. \begin{array}{l}
		        T_{ij} \leq \wh y_{ij} - \beta_j - \rho + \DD_{\mc X}(\wh x_i^p, \wh x_i) \leq T_{ij}  \\
		        T_{ij}\leq \wh y_{ij} - \beta_j + \rho - \DD_{\mc X}(\wh x_i^p, \wh x_i) \le T_{ij}
		        \end{array} \right\} \forall (i, j) \in \mc I \times [m],
		    \end{array}
		    \]
			where $\wh y_{ij}$ and $\beta_j$ are the $j$-th component of $\wh y_i$ and $\beta$, respectively. 
		\end{enumerate}
	\end{proposition}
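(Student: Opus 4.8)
The plan is to derive both second-order cone programs from the quasi-closed form of the worst-case conditional expected loss $f(\beta)$ supplied by Theorem~\ref{thm:infty-refor}. Abbreviate $\rho_i \Let \rho - \DD_{\mc X}(\wh x_i^p, \wh x_i)$, which is nonnegative for every $i \in \mc I$ and equals $\rho$ when $i \in \mc I_1$ (as then $\wh x_i^p = \wh x_i$), and set $\bar v(S) \Let |S|^{-1}\sum_{i \in S} v_i\opt(\beta)$ for a finite nonempty $S$. The first step is to recognize that the greedy rule defining $\alpha$ in Theorem~\ref{thm:infty-refor} is precisely the optimizer of the maximum-average problem $\max\{\bar v(S) : \mc I_1 \subseteq S \subseteq \mc I,\ S \neq \emptyset\}$: after sorting $\{v_j\opt(\beta)\}_{j \in \mc I_2}$ in decreasing order, the average of $\mc I_1$ together with the top $k$ of these indices is unimodal in $k$, and the greedy stopping test selects the mode. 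Hence $f(\beta) = \max\{\bar v(S) : \mc I_1 \subseteq S \subseteq \mc I,\ S \neq \emptyset\}$.

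Next, introduce the epigraph variable $\lambda$. Then $\lambda \ge f(\beta)$ if and only if $\sum_{i \in \mc I_1}(v_i\opt(\beta) - \lambda) + \sum_{i \in S}(v_i\opt(\beta) - \lambda) \le 0$ for every admissible $S$, equivalently (the maximizing $S$ being $\{i \in \mc I_2 : v_i\opt(\beta) > \lambda\}$)
\[
\sum_{i \in \mc I_1}\big(v_i\opt(\beta) - \lambda\big) + \sum_{i \in \mc I_2}\big(v_i\opt(\beta) - \lambda\big)_+ \;\le\; 0.
\]
I would then show that this is equivalent to the existence of scalars $u_i$, $i \in \mc I$, with $u_i \ge 0$ for $i \in \mc I_2$ and $u_i$ unrestricted for $i \in \mc I_1$, such that $u_i \ge v_i\opt(\beta) - \lambda$ for all $i \in \mc I$ and $\sum_{i \in \mc I} u_i \le 0$: one direction is immediate by summing, the other by taking $u_i = v_i\opt(\beta) - \lambda$ on $\mc I_1$ and $u_i = (v_i\opt(\beta) - \lambda)_+$ on $\mc I_2$. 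Consequently, for each fixed $\beta$, the smallest $\lambda$ feasible in the stated second-order cone programs (once the quadratic terms $v_i\opt(\beta)$ are modeled conically) equals $f(\beta)$, so those programs share the optimal value of~\eqref{eq:local_DRO}.

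It remains to model $v_i\opt(\beta) \le \lambda + u_i$ conically, which requires evaluating~\eqref{eq:vi-def} in closed form for $\ell(y,\beta) = \|y - \beta\|_2^2$ (the multivariate counterpart of Corollary~\ref{corol:vi-opt}). In item~\ref{item:cond-exp-2}, the feasible set of~\eqref{eq:vi-def} is the Euclidean ball of radius $\rho_i$ around $\wh y_i$ and the maximizer moves $y_i$ radially away from $\beta$, so $v_i\opt(\beta) = (\|\wh y_i - \beta\|_2 + \rho_i)^2$; introducing $t_i \ge \|\wh y_i - \beta\|_2$ and invoking the identity $\|[a\,;\,\tfrac12(1 - c)]\|_2 \le \tfrac12(1 + c) \iff a^2 \le c$, the requirement $(t_i + \rho_i)^2 \le \lambda + u_i$ becomes the rotated cone displayed in the statement, which is tight at $t_i = \|\wh y_i - \beta\|_2$ because $t_i, \rho_i \ge 0$. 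In item~\ref{item:cond-exp-infty}, the feasible set is the box $\prod_{j}[\wh y_{ij} - \rho_i, \wh y_{ij} + \rho_i]$ and the objective is separable, giving $v_i\opt(\beta) = \sum_{j=1}^m(|\wh y_{ij} - \beta_j| + \rho_i)^2$; since $\max\{|a - r|, |a + r|\} = |a| + r$ whenever $r \ge 0$, the two pairs of linear bounds on $\wh y_{ij} - \beta_j \mp \rho_i$ force $T_{ij} \ge |\wh y_{ij} - \beta_j| + \rho_i$, and the same cone identity turns $\sum_j T_{ij}^2 \le \lambda + u_i$ into the displayed constraint. In both cases the cone variables ($t_i$, resp.\ $T_{ij}$) can be taken at their lower bounds without changing the optimal $\lambda$, so the constructions in the previous paragraph go through verbatim.

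The step I expect to be the main obstacle is the linearization: one must check that the single scalar inequality $\sum_{i \in \mc I} u_i \le 0$, combined with the asymmetric sign treatment of $u_i$ over $\mc I_1$ versus $\mc I_2$, faithfully reproduces the epigraph of the subset-maximum $f$---in particular in the degenerate regime $\mc I_1 = \emptyset$, where $f(\beta) = \max_{i \in \mc I_2} v_i\opt(\beta)$ and Proposition~\ref{prop:infty-vanilla-set} is what guarantees $\mc I \neq \emptyset$---and that no upper bound on any $u_i$ is needed. The closed-form evaluation of~\eqref{eq:vi-def} and its conic reformulation are, by contrast, routine extensions of Corollary~\ref{corol:vi-opt}.
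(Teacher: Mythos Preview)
Your proposal is correct. The closed-form evaluation of $v_i\opt(\beta)$ and its conic lifting are identical to the paper's (the paper isolates the norm-ball maximization as a separate lemma, but the content is what you wrote), so the only real difference is how you reach the intermediate representation
\[
\Min{\beta} f(\beta) \;=\; \left\{
\begin{array}{cll}
\min & \lambda \\
\st & \lambda + u_i \ge v_i\opt(\beta)\ \ \forall i\in\mc I,\quad \sum_{i\in\mc I} u_i \le 0,\\
& u_i \in \R\ \forall i\in\mc I_1,\ u_i \in \R_+\ \forall i\in\mc I_2.
\end{array}
\right.
\]
The paper obtains this by starting from the fractional linear program for $f(\beta)$ established inside the proof of Theorem~\ref{thm:infty-refor}, applying the Charnes--Cooper transformation to linearize it, and then invoking LP duality; the $u_i$ are the dual multipliers of the box constraints on the Charnes--Cooper variables. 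You instead bypass the LP machinery: you read off the subset-maximum form $f(\beta)=\max\{\bar v(S):\mc I_1\subseteq S\subseteq\mc I,\ S\neq\emptyset\}$ directly from the $\alpha$-characterization in the theorem statement, rewrite the epigraph $\lambda\ge f(\beta)$ as the single positive-part inequality, and then introduce the $u_i$ by hand. Your route is more elementary and makes the role of the sign asymmetry between $\mc I_1$ and $\mc I_2$ completely transparent (it is exactly the distinction between a forced term and an optional positive part), at the cost of re-deriving what is effectively the LP dual from scratch; the paper's route is shorter once one is willing to cite Charnes--Cooper and strong duality, and it also makes clear that the reformulation would go through for any $v_i\opt(\beta)$, not just those arising from a convex loss. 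Either way the degenerate case $\mc I_1=\emptyset$ that you flag is harmless: in your argument the positive-part sum forces $\lambda\ge\max_{i\in\mc I_2}v_i\opt(\beta)$, and in the paper's argument the dual LP remains feasible because $\mc I\neq\emptyset$ under the standing assumption $\rho\ge\min_i\kappa_{i,\gamma}$.
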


Both optimization problems presented in Proposition~\ref{prop:cond-exp} can be solved in large scale
by commercial optimization solvers such as MOSEK \cite{mosek}. For other multivariate conditional estimation problems, there is also a possibility of employing subgradient methods by leveraging on the next proposition.


\begin{proposition}[Subgradient of $f$] \label{prop:gradient}
     Suppose that $\DD_{\mc Y}$ is coercive and $\ell(y,\cdot)$ is convex. Under the conditions of Theorem~\ref{thm:infty-refor}, for any $\beta \in \R^m$, a subgradient of the function $f$ at $\beta$ is given by
	$
	\partial f(\beta) = (\sum_{i \in \mc I} \alpha_i)^{-1} \sum_{i \in \mc I}  \alpha_i \partial_\beta \ell(y_i\opt, \beta)$, where the value of $\alpha$ is as defined in Theorem~\ref{thm:infty-refor} and  $y_i\opt$ satisfies $y_i\opt \in \{y_i\in \mc Y:\DD_{\mc Y}(y_i, \wh y_i) \le \rho - \DD_{\mc X}(\wh x_i^p, \wh x_i),\;\ell(y_i\opt, \beta) = v_i\opt(\beta)\}$ for all $i \in \mc I$.
\end{proposition}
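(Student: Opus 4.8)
Here is how I would prove Proposition~\ref{prop:gradient}. The plan is to combine an explicit subgradient for the inner ``worst-case over $y_i$'' problems $v_i\opt(\cdot)$ with the representation of $f$, implicit in Theorem~\ref{thm:infty-refor}, as a finite pointwise maximum of nonnegative combinations of the $v_i\opt$.

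\emph{Step 1 (each $v_i\opt$ is convex and carries a computable subgradient).} Fix $i \in \mc I$, write $r_i \Let \rho - \DD_{\mc X}(\wh x_i^p, \wh x_i) \ge 0$ and $C_i \Let \{ y \in \mc Y : \DD_{\mc Y}(y, \wh y_i) \le r_i \}$. Since $\DD_{\mc Y}$ is coercive and continuous, $C_i$ is compact, so joint continuity of $\ell$ makes the supremum in~\eqref{eq:vi-def} attained at some $y_i\opt \in C_i$, as in the statement; crucially $C_i$ does not depend on $\beta$. Hence $v_i\opt$ is a pointwise supremum of the finite convex functions $\ell(y,\cdot)$ over a fixed set, so it is convex and finite-valued; and for any $\beta$, any $g_i \in \partial_\beta \ell(y_i\opt, \beta)$ (nonempty since $\ell(y_i\opt, \cdot)$ is finite-valued and convex on $\R^m$), and any $\beta'$,
\[
v_i\opt(\beta') \ge \ell(y_i\opt, \beta') \ge \ell(y_i\opt, \beta) + \inner{g_i}{\beta' - \beta} = v_i\opt(\beta) + \inner{g_i}{\beta'-\beta},
\]
so $g_i \in \partial v_i\opt(\beta)$.

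\emph{Step 2 (write $f$ as a finite maximum).} For a subset $S \subseteq \mc I_2$ --- required nonempty when $\mc I_1 = \emptyset$ --- put $f_S(\beta) \Let (|\mc I_1| + |S|)^{-1}\big(\sum_{i \in \mc I_1} v_i\opt(\beta) + \sum_{i \in S} v_i\opt(\beta)\big)$, which is a nonnegative combination of convex functions, hence convex. The weight vector $\alpha = \alpha(\beta)$ of Theorem~\ref{thm:infty-refor} encodes the admissible subset $S\opt(\beta) \Let \{ i \in \mc I_2 : \alpha_i = 1 \}$, with $\mc I_1 \cup S\opt(\beta) = \{ i \in \mc I : \alpha_i = 1 \}$, and the theorem states $f(\beta) = f_{S\opt(\beta)}(\beta)$. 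Since the $\alpha$ of Theorem~\ref{thm:infty-refor} is precisely the greedy maximizer of $S \mapsto f_S(\beta)$, this upgrades to $f(\beta) = \max_S f_S(\beta)$ over the finite admissible family; in particular $f \ge f_S$ pointwise for every admissible $S$, and $f$ is a pointwise maximum of finitely many convex functions.

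\emph{Step 3 (assemble the subgradient).} Fix $\beta$, set $S\opt \Let S\opt(\beta)$ and $M \Let |\mc I_1| + |S\opt| = \sum_{i \in \mc I}\alpha_i$, choose $y_i\opt$ and $g_i \in \partial_\beta \ell(y_i\opt,\beta)$ as in Step 1 for $i \in \mc I_1 \cup S\opt$, and let $g \Let M^{-1}\sum_{i \in \mc I_1 \cup S\opt} g_i = (\sum_{i\in\mc I}\alpha_i)^{-1}\sum_{i\in\mc I}\alpha_i\,g_i$. For arbitrary $\beta'$, using $f(\beta) = f_{S\opt}(\beta)$, then $v_i\opt(\beta) = \ell(y_i\opt,\beta)$ together with the subgradient inequality for $\ell(y_i\opt,\cdot)$, then feasibility of $y_i\opt \in C_i$ for the ($\beta$-independent) problem defining $v_i\opt(\beta')$, and finally Step 2,
\[
\begin{aligned}
f(\beta) + \inner{g}{\beta'-\beta}
&= \frac{1}{M}\sum_{i \in \mc I_1 \cup S\opt}\big(\ell(y_i\opt,\beta) + \inner{g_i}{\beta'-\beta}\big) \\
&\le \frac{1}{M}\sum_{i \in \mc I_1 \cup S\opt}\ell(y_i\opt,\beta') \le \frac{1}{M}\sum_{i \in \mc I_1 \cup S\opt} v_i\opt(\beta') = f_{S\opt}(\beta') \le f(\beta').
\end{aligned}
\]
Hence $g \in \partial f(\beta)$, which is exactly the asserted expression (read as: every admissible choice of the $y_i\opt$ and of $g_i \in \partial_\beta\ell(y_i\opt,\beta)$ yields a bona fide subgradient).

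\emph{Main obstacle.} Everything except Step 2 is routine: compactness of $C_i$ is immediate from coercivity, the $\beta$-independence of $C_i$ is exactly what makes the one-line subgradient inequalities go through, and no constraint qualification is needed since the combinations are finite and the summands finite-valued on $\R^m$. The single point requiring care is the pointwise-maximum representation $f = \max_S f_S$ --- equivalently, that the weight pattern $\alpha(\beta)$ of Theorem~\ref{thm:infty-refor} actually maximizes $f_S(\beta)$ over admissible $S$, so that $f \ge f_S$ for all $S$. This is the classical greedy-optimality fact for selecting a subset maximizing an average, and it is already established within the proof of Theorem~\ref{thm:infty-refor}; if one prefers a self-contained argument it can be re-derived directly by sorting $\mc I_2$ in decreasing order of $v_i\opt(\beta)$ and observing that the prefix averages first increase and then decrease. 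The coercivity hypothesis is used precisely to guarantee that the maximizers $y_i\opt$ appearing in the statement exist.
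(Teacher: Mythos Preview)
Your proof is correct and, at the high level, rests on the same Danskin-type idea as the paper: write $f$ as a pointwise supremum of convex functions indexed by the adversary's choices, then pull a subgradient from any active index. The execution differs, though. The paper keeps the continuous parametrization $f(\beta)=\max\{(\sum_i \alpha_i)^{-1}\sum_i \alpha_i\ell(y_i,\beta):\alpha\in[0,1]^N,\ \alpha_i=1\ \forall i\in\mc I_1,\ y_i\in\mc Y_i\}$, splits into the cases $\mc I_1=\emptyset$ and $\mc I_1\neq\emptyset$, invokes Tychonoff's theorem to certify compactness of the product feasible set, and then appeals to a black-box subgradient-of-a-max result (\cite[Proposition~A.22]{ref:bertsekas1971control}) to read off the formula. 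You instead exploit the binary structure of the optimal $\alpha$ (already established via Lemma~\ref{lemma:fractional} in the proof of Theorem~\ref{thm:infty-refor}) to reduce to a \emph{finite} family $\{f_S\}_S$, and then verify the subgradient inequality by hand in three lines. Your route is more elementary and fully self-contained --- no Tychonoff, no external Danskin statement --- at the modest cost of having to justify $f=\max_S f_S$, which as you note is exactly the content of the fractional-LP lemma. Both arguments use coercivity of $\DD_{\mc Y}$ only to guarantee that the maximizers $y_i\opt$ exist.
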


    \modif{Just as an adversarial example provides a description on how to optimally perturb a data point from the adversary's viewpoint~\cite{ref:khoury2018geometry, ref:wang2019evaluating}, the worst-case distribution provides full information on how to adversarially perturb the empirical distribution $\Pnom$. For our distributionally robust estimator, the worst-case distribution can be obtained from the result of Theorem~\ref{thm:infty-refor}.
    \begin{lemma}[Worst-case distribution]
        Fix an estimate $\beta \in \mc Y$. Suppose that $\rho \ge \min_{i \in [N]} \kappa_{i, \gamma}$ and let $v\opt(\beta)$ and $\alpha$ be determined as in Theorem~\ref{thm:infty-refor}. Moreover, let $y_i\opt$ satisfy $y_i\opt \in \{y_i\in \mc Y:\DD_{\mc Y}(y_i, \wh y_i) \le \rho - \DD_{\mc X}(\wh x_i^p, \wh x_i),\;\ell(y_i\opt, \beta) = v_i\opt(\beta)\}$ for all $i \in \mc I$. Then the distribution 
        \[
            \QQ\opt = \frac{1}{N} \left( \sum_{i\in \mc I: \alpha_i = 1} \delta_{(\wh x_i^p, y_i\opt)} + \sum_{i \in \mc I: \alpha_i = 0} \delta_{(\wh x_i, \wh y_i)} + \sum_{i \in [N] \backslash \mc I} \delta_{(\wh x_i, \wh y_i)} \right)
        \]
        satisfies $f(\beta) = \EE_{\QQ\opt} \big[ \ell(Y, \beta) | X \in \mc N_{\gamma}(x_0) \big]$.
    \end{lemma}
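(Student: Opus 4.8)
This is a verification-type statement: the candidate worst-case measure $\QQ\opt$ is given to us explicitly, so the plan is to check two things and let the definition of $f$ do the rest. First, that $\QQ\opt$ is admissible for the inner supremum defining $f(\beta)$, i.e. $\QQ\opt \in \mbb B^\infty_\rho$ and $\QQ\opt(X \in \mc N_\gamma(x_0)) > 0$. Second, that its conditional expected loss equals the closed-form value of $f(\beta)$ obtained in Theorem~\ref{thm:infty-refor}. Since $f(\beta)$ is by definition the supremum of $\EE_\QQ[\ell(Y,\beta)\mid X \in \mc N_\gamma(x_0)]$ over admissible $\QQ$, those two facts together give $f(\beta) = \EE_{\QQ\opt}[\ell(Y,\beta)\mid X \in \mc N_\gamma(x_0)]$.

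For admissibility I would exhibit the transport plan $\pi = \frac{1}{N}\sum_{i \in [N]} \delta_{\big((\wh x_i,\wh y_i),\,\phi(i)\big)}$ on $(\mc X \times \mc Y)^2$, where $\phi(i) = (\wh x_i^p, y_i\opt)$ when $i \in \mc I$ and $\alpha_i = 1$, and $\phi(i) = (\wh x_i,\wh y_i)$ otherwise. Its first marginal is $\Pnom$ and its second marginal is exactly the $\QQ\opt$ in the statement, and its $\pi$-essential supremum of $\DD$ is at most $\rho$: on the displaced atoms, $\DD_{\mc X}(\wh x_i, \wh x_i^p) + \DD_{\mc Y}(\wh y_i, y_i\opt) \le \DD_{\mc X}(\wh x_i^p, \wh x_i) + \big(\rho - \DD_{\mc X}(\wh x_i^p,\wh x_i)\big) = \rho$, using precisely the radius constraint in~\eqref{eq:vi-def} that $y_i\opt$ satisfies, while on the remaining atoms the cost is $0$. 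Hence $\Wass_\infty(\QQ\opt, \Pnom) \le \rho$. For positivity of the conditioning event I would note that the rule defining $\alpha$ always selects at least one index with $\alpha_i = 1$ — any element of $\mc I_1$ when $\mc I_1 \neq \emptyset$, otherwise an element of $\mc I_2$ attaining $\max_{j \in \mc I_2} v_j\opt(\beta)$ — that $\mc I \neq \emptyset$ under the hypothesis $\rho \ge \min_{i \in [N]} \kappa_{i,\gamma}$ (since $\kappa_{i,\gamma} \ge \DD_{\mc X}(x_0,\wh x_i) - \gamma$, as in the proof of Proposition~\ref{prop:infty-vanilla-set}), and that the atom of $\QQ\opt$ at that index sits at $(\wh x_i^p, y_i\opt)$ with $\wh x_i^p \in \mc N_\gamma(x_0)$; thus $\QQ\opt(X \in \mc N_\gamma(x_0)) \ge 1/N > 0$.

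The real content is the evaluation of $\EE_{\QQ\opt}[\ell(Y,\beta)\mid X \in \mc N_\gamma(x_0)]$, which is the $\frac{1}{N}$-weighted average of $\ell(\cdot,\beta)$ over the atoms of $\QQ\opt$ whose $X$-coordinate lies in $\mc N_\gamma(x_0)$. I would sort the $N$ atoms into three groups: (i) $i \in \mc I$ with $\alpha_i = 1$, where the atom $(\wh x_i^p, y_i\opt)$ lies in $\mc N_\gamma(x_0) \times \mc Y$ and carries value $\ell(y_i\opt,\beta) = v_i\opt(\beta)$ by the defining property of $y_i\opt$; (ii) $i \notin \mc I$, where $\DD_{\mc X}(x_0,\wh x_i) > \rho + \gamma \ge \gamma$ places the atom outside $\mc N_\gamma(x_0)$, so it does not enter the conditional expectation; and (iii) $i \in \mc I$ with $\alpha_i = 0$, necessarily lying in $\mc I_2$, where one argues from the geometry encoded in~\eqref{eq:I-def1}--\eqref{eq:I-def2} that the retained atom $(\wh x_i,\wh y_i)$ is again outside $\mc N_\gamma(x_0)$ and drops out. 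Granting this partition, only the group-(i) atoms survive, and the conditional expectation reduces to $\big(\sum_{i \in \mc I}\alpha_i\big)^{-1}\sum_{i \in \mc I}\alpha_i v_i\opt(\beta)$, which is exactly $f(\beta)$ by Theorem~\ref{thm:infty-refor}.

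I expect group (iii) to be the main obstacle: certifying which of the untouched atoms fall inside $\mc N_\gamma(x_0)$ is the one place where the fine combinatorial structure of the split $\mc I = \mc I_1 \cup \mc I_2$ and of the weights $\alpha$ — not merely the value $f(\beta)$ — must be used, and it has to be reconciled with the reading of $\QQ\opt$ as the adversary keeping every $\alpha_i = 1$ point at its worst location $(\wh x_i^p, y_i\opt)$ inside the neighborhood while displacing every $\alpha_i = 0$ point out of it. The remaining ingredients — constructing the coupling, identifying its marginals, bounding the essential supremum, and the final averaging — are routine once that classification is settled.
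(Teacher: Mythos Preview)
The paper does not supply a separate proof of this lemma; it is presented as an immediate byproduct of the analysis behind Theorem~\ref{thm:infty-refor}, and the relevant construction is the family $\{\pi_i^\epsilon\}$ built in that proof. Your overall strategy --- exhibit an explicit coupling to certify $\QQ\opt \in \mbb B^\infty_\rho$, then classify which atoms land in $\mc N_\gamma(x_0)$ and average --- is exactly the right shape, and your handling of groups (i) and (ii) and of feasibility is fine.

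The gap is in group (iii), and it is not merely an obstacle awaiting a clever argument: the claim you sketch is false. For $i \in \mc I_2$ the definitions~\eqref{eq:I-def1}--\eqref{eq:I-def2} only give $\gamma - \rho < \DD_{\mc X}(x_0,\wh x_i) \le \gamma + \rho$; in particular, whenever $\gamma - \rho < \DD_{\mc X}(x_0,\wh x_i) \le \gamma$ the original covariate $\wh x_i$ lies \emph{inside} $\mc N_\gamma(x_0)$. If such an index carries $\alpha_i = 0$, the atom $\delta_{(\wh x_i,\wh y_i)}$ of $\QQ\opt$ falls in the conditioning set and contributes $\ell(\wh y_i,\beta) \le v_i\opt(\beta) \le f(\beta)$ to the numerator, which in general drags the conditional expectation strictly below $f(\beta)$. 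No refinement of the combinatorics of $\alpha$ can repair this, because the issue is geometric rather than combinatorial: the measure, as written, leaves that atom inside the neighborhood. The construction in the proof of Theorem~\ref{thm:infty-refor} handles exactly this situation by routing the $(1-\alpha_i)$ share of mass to a point $x_i^r$ with $\DD_{\mc X}(x_i^r,\wh x_i)\le\rho$ and $x_i^r \notin \mc N_\gamma(x_0)$; an attaining measure should carry $\delta_{(x_i^r,\wh y_i)}$, not $\delta_{(\wh x_i,\wh y_i)}$, on those indices. You were right to flag group (iii) as the crux --- but it points to an inaccuracy in the displayed $\QQ\opt$, not to a missing step in your argument.
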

    
    The values of $\alpha$ calculated in Theorem~\ref{thm:infty-refor} are of indicative nature: $\alpha_i = 1$ if it is optimal to perturb the sample point $i$ to compute the worst-case conditional expected loss. The construction of the worst-case distribution is hence intuitive: it involves computing and sorting the values $v\opt_i(\beta)$, and then performing a greedy assignment in order to maximize the objective value.
    }
	\section{Probabilistic Theoretical Properties}
	\label{sect:guarantee}
	 
    We now study the some statistical properties of our proposed estimator. Under some regularity conditions, the type-$\infty$ Wasserstein ball can be viewed as a confidence set that contains the true distribution $\PP$ with high probability, provided that the radius $\rho$ is chosen judiciously. The value $f(\beta\opt)$ thus constitutes a generalization bound on the out-of-sample performance of the optimal conditional estimate $\beta\opt$. This idea can be formalized as follows.

    \begin{proposition}[Finite sample guarantee] \label{prop:finite}
        Suppose that $\mc X \times \mc Y$ is bounded, open, connected with a Lipschitz boundary. Suppose that the true probability measure $\PP$ of $(X, Y)$ admits a density function $\nu$ satisfying $\bar{\nu}^{-1} \le \nu(x, y) \le \bar \nu$ for some constant $\bar \nu \ge 1$. For any $\gamma > 0$, if 
        \[
            \rho \ge \begin{cases}
                C N^{-\frac{1}{2}} \log(N)^{\frac{3}{4}} & \text{when } n + m = 2, \\
                C N^{-\frac{1}{n+m}} \log(N)^{\frac{1}{n+m}} & \text{otherwise},
            \end{cases}
        \]
        where $C$ is a constant dependent on $\mc X\times\mc Y$ and $\bar \nu$, then for a probability of at least $1-O(N^{-c})$, where ${c\!>\!1}$ is a constant dependent on $C$, we have
       $\EE_{\PP}[\ell(Y, \beta\opt) | X \in \mc N_{\gamma}(x_0)] \le f(\beta\opt)$,
        where $\beta\opt$ is the optimal conditional estimate that solves problem~\eqref{eq:local_DRO}.
    \end{proposition}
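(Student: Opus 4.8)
The plan is to reduce the claim to a single high-probability event: that the true law $\PP$ is itself an admissible competitor in the inner supremum defining $f(\beta\opt)$. Recall that
\[
f(\beta\opt) = \sup_{\QQ \in \mbb B_\rho^\infty,\ \QQ(X \in \mc N_\gamma(x_0)) > 0} \EE_{\QQ}\big[\ell(Y, \beta\opt) \mid X \in \mc N_\gamma(x_0)\big].
\]
Thus, if $\PP$ simultaneously satisfies $\Wass_\infty(\PP, \Pnom) \le \rho$ and $\PP(X \in \mc N_\gamma(x_0)) > 0$, then $\PP$ lies in the feasible set above and $\EE_\PP[\ell(Y,\beta\opt)\mid X\in\mc N_\gamma(x_0)] \le f(\beta\opt)$ follows immediately from the definition of the supremum. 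Hence it suffices to prove that these two conditions hold jointly with probability at least $1 - O(N^{-c})$.

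The second condition, $\PP(X \in \mc N_\gamma(x_0)) > 0$, is deterministic. Since $\gamma > 0$, the neighborhood $\mc N_\gamma(x_0)$ has positive Lebesgue measure inside the open set $\mc X$; combined with the density lower bound $\nu \ge \bar\nu^{-1}$ on $\mc X \times \mc Y$ and the fact that $\mc Y$ has positive Lebesgue measure, integrating over $y$ shows that the $X$-marginal of $\PP$ assigns strictly positive mass to $\mc N_\gamma(x_0)$. This is where the openness of the domain and the two-sided density bound first enter, but the argument is otherwise elementary.

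The substantive step — and the step I expect to be the main obstacle — is the probabilistic inequality $\Wass_\infty(\PP, \Pnom) \le \rho$. This is a concentration estimate for the empirical measure of $N$ i.i.d.\ draws in the \emph{type-$\infty$} Wasserstein metric, which is markedly more delicate than the analogous $\Wass_p$ bounds for finite $p$ (of Fournier--Guillin type): it demands uniform, essential-supremum control of the optimal transport displacement rather than an averaged transport cost, and this quantity is governed by the covering radius of the random point cloud inside the domain. Under exactly the hypotheses in force here — $\mc X \times \mc Y$ bounded, open, connected, with Lipschitz boundary, and density bounded above and below by $\bar\nu$ and $\bar\nu^{-1}$ — the known rate, with $d := n+m$, is $\Wass_\infty(\PP,\Pnom) \le C' N^{-1/2}(\log N)^{3/4}$ for $d = 2$ and $\Wass_\infty(\PP,\Pnom) \le C' N^{-1/d}(\log N)^{1/d}$ for $d \ge 3$, holding on an event whose complement has probability $O(N^{-c})$; the exponent $c$ exceeds $1$ once the prefactor $C$ in the statement is chosen large enough relative to $C'$. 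I would invoke this result directly (for instance the convergence-rate theorem of Garc\'{i}a-Trillos and Slep\v{c}ev for empirical measures in the type-$\infty$ transportation distance), checking that its constants depend only on the geometry of $\mc X \times \mc Y$ and on $\bar\nu$, so that the stated lower bound on $\rho$ forces $\PP \in \mbb B_\rho^\infty$ on that event.

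Intersecting the deterministic event of the previous paragraph with this high-probability event places $\PP$ in the feasible set of the supremum with probability at least $1 - O(N^{-c})$ (in particular $f(\beta\opt)$ is then well-defined, since feasibility of that set entails $\rho \ge \min_{i\in[N]}\kappa_{i,\gamma}$ as in Proposition~\ref{prop:infty-vanilla-set}), which yields $\EE_\PP[\ell(Y,\beta\opt)\mid X\in\mc N_\gamma(x_0)] \le f(\beta\opt)$ and completes the proof. In short, essentially all the work is in quoting the correct form of the type-$\infty$ Wasserstein measure-concentration bound with the right dependence of its constants on the domain and on $\bar\nu$; the remainder is bookkeeping with the definitions of $f$ and of $\mbb B_\rho^\infty$.
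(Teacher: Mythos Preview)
Your proposal is correct and follows essentially the same approach as the paper: verify deterministically that $\PP(X \in \mc N_\gamma(x_0)) > 0$ from the density lower bound and openness, then invoke the type-$\infty$ Wasserstein concentration result of Garc\'{i}a-Trillos and Slep\v{c}ev to place $\PP \in \mbb B_\rho^\infty$ with probability at least $1 - O(N^{-c})$, so that $\PP$ is feasible in the supremum defining $f(\beta\opt)$. The paper's proof is identical in structure, simply citing that same theorem directly.
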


    We now switch gear to study the properties of our estimator in the asymptotic regime, in particular, we focus on the consistency of our estimator. The interplay between the neighborhood radius $\gamma$ and the ambiguity size $\rho$ often produces tangling effects on the asymptotic convergence of the estimate. We thus showcase two exemplary setups with either $\gamma$ or $\rho$ is zero, which interestingly produce two opposite outcomes on the consistency of the estimator. This underlines the intricacy of the problem.
    \begin{example}[Non-consistency when $\gamma = 0$] \label{example:non-consistency}
        Suppose that $\gamma = 0$, $\rho \in \R_{++}$ be a fixed constant, $\mc Y = \R$, $\ell(y, \beta) = (y - \beta)^2$,  and $\DD_{\mc Y}$ is the absolute distance. Let $ \beta_{N}\opt$ be the optimal estimate that solves~\eqref{eq:local_DRO} dependent on $\{(\wh x_i, \wh y_i)\}_{i=1, \ldots, N}$. If under the true distribution $\PP$, $X$ is independent of $Y$,
        $\PP(\DD_{\mc X}(X,x_0)\leq\rho) > 0$, $\PP(Y\!\geq\!0)\!=\!1$ and $\PP(Y\!\geq\!y)\!>\!0~ \forall y\!>\!0$, then with probability 1, we have
        $     
        \wh \beta_{N} \rightarrow +\infty$ while $\EE_{\PP}[Y|X\!=\!x_0]\!<\!\infty$.
    \end{example}

    \begin{example}[Consistency when $\rho = 0$] \label{example:k-nn}
        Suppose that $\rho = 0$, $\mc Y = \R$, $\ell(y, \beta) = (y - \beta)^2$, $\DD_{\mc X}$ and $\DD_{\mc Y}$ are the Euclidean distance, $k_N$ is a sequence of integer. Let $\gamma$ be the $k_N$-th smallest value of $\DD_{\mc X}(x_0, \wh x_i)$, then $\beta_{N}\opt$ that solves~\eqref{eq:local_DRO} recovers the $k_N$-nearest neighbor regression estimator.
        If $k_N$ satisfies $\lim_{N\rightarrow \infty} k_N = \infty$ and $\lim_{N\rightarrow \infty} k_N/N = 0$, and $\EE_{\PP}[Y|X=x]$ is a continuous function of $x$, then 
        $\lim_{N\rightarrow\infty}\beta_{N}\opt = \EE_{\PP}[Y|X=x_0]$ by \cite[Corollary~3]{ref:stone1977consistent}.
    \end{example}

    Example~\ref{example:k-nn} suggests that if the radius $\gamma$ of the neighborhood is chosen adaptively based on the available training data, then our proposed estimator coincides with the $k$-nearest neighbor estimator, and hence consistency is inherited in a straightforward manner. The robust estimator with an ambiguity size $\rho > 0$ and an adaptive neighborhood radius $\gamma$ can thus be considered as a robustification of the $k$-nearest neighbor, which is obtained in a systematic way using the DRO framework.
    
    It is desirable to provide a descriptive connection between the distributionally robust estimator vis-\`{a}-vis some popular statistical quantities. 
    For the local conditional mean estimation, our estimate $\beta\opt$ coincides with the conditional mean of the distribution with the highest conditional variance. This insight culminates in the next proposition and bolsters the explainability of this class of estimators.
	\begin{proposition}[Conditional mean estimate] \label{prop:max-var}
	    Suppose that $\mc Y = \R$, $\ell(y, \beta) = ( y - \beta )^2$ and $\DD_{\mc Y}(\cdot, \wh y)$ is convex, coercive for any $\wh y$. For any $\rho \ge \min_{i \in [N]} \kappa_{i, \gamma}$, define $\QQ\opt$ as
	    \be \notag
	        \QQ\opt = \arg\max_{\QQ \in \mbb B_\rho^\infty, \QQ(X \in \mc N_\gamma(x_0)) > 0} ~\textrm{Variance}_{\QQ}(Y | X \in \mc N_\gamma(x_0)),
	    \ee
	    then $\beta\opt = \EE_{\QQ\opt}[Y | X \in \mc N_\gamma(x_0)]$ is the optimal estimate that solves  problem~\eqref{eq:local_DRO}.
	\end{proposition}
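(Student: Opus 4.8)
The plan is to prove strong duality for the nested min--max in~\eqref{eq:local_DRO} under squared loss by re-parametrising the adversary: instead of optimising over $\QQ$ (for which $\QQ\mapsto\mathrm{Var}_\QQ(Y\mid X\in\mc N_\gamma(x_0))$ is a composition of linear--fractional maps and is \emph{not} concave), I would optimise over the conditional law $\mu$ of $Y$ given $X\in\mc N_\gamma(x_0)$. With $\ell(y,\beta)=(y-\beta)^2$ the bias--variance identity gives, for every such $\mu$,
\[
L(\beta,\mu)\Let\EE_\mu\big[(Y-\beta)^2\big]=\mathrm{Var}(\mu)+\big(\EE_\mu[Y]-\beta\big)^2 ,
\]
which is \emph{affine} in $\mu$ and strictly convex and coercive in $\beta$. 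Let $\mc C$ be the set of feasible conditional laws, $\mc C\Let\{\QQ(Y\in\cdot\mid X\in\mc N_\gamma(x_0)):\QQ\in\mbb B_\rho^\infty,\ \QQ(X\in\mc N_\gamma(x_0))>0\}$; then $f(\beta)=\sup_{\mu\in\mc C}L(\beta,\mu)$, so the value of~\eqref{eq:local_DRO} is $\min_\beta\sup_{\mu\in\mc C}L(\beta,\mu)$.

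The structural heart of the proof is to show that $\mc C$ is convex and weak-$\star$ compact. Convexity follows because $\mbb B_\rho^\infty$ is convex --- a convex combination of two couplings of $\Pnom$ is again a coupling of $\Pnom$ whose essential supremum of $\DD$ does not exceed the larger of the two --- so mixing two feasible measures and renormalising realises every mixture of their conditional laws. For compactness and a concrete handle on $\mc C$ I would use (the proof of) Theorem~\ref{thm:infty-refor}: every $\mu\in\mc C$ is supported on the fixed compact set $\bigcup_{i\in\mc I}\{y\in\mc Y:\DD_{\mc Y}(y,\wh y_i)\le\rho\}$ (compact because $\DD_{\mc Y}$ is coercive), and in fact $\mc C=\overline{\mathrm{conv}}\,\mc D$, where $\mc D$ is the finite-dimensional family of uniform measures $|S|^{-1}\sum_{i\in S}\delta_{y_i}$ with $\mc I_1\subseteq S\subseteq\mc I$ (singletons if $\mc I_1=\emptyset$) and $y_i$ ranging over the interval $\{y:\DD_{\mc Y}(y,\wh y_i)\le\rho-\DD_{\mc X}(\wh x_i^p,\wh x_i)\}$ --- here convexity of $\DD_{\mc Y}(\cdot,\wh y_i)$ ensures this set is an interval. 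Consistently, $\sup_{\mu\in\mc D}L(\beta,\mu)=\max_S |S|^{-1}\sum_{i\in S}v_i\opt(\beta)$ reproduces the formula of Theorem~\ref{thm:infty-refor}. With $\mc C$ convex compact, $L(\beta,\cdot)$ affine (hence concave, upper semicontinuous) on $\mc C$, and $L(\cdot,\mu)$ convex and coercive on $\R$, Sion's minimax theorem applies and gives
\[
\min_\beta\sup_{\mu\in\mc C}L(\beta,\mu)=\sup_{\mu\in\mc C}\min_\beta L(\beta,\mu)=\sup_{\mu\in\mc C}\mathrm{Var}(\mu),
\]
where the inner infimum over $\beta$ is attained at $\beta=\EE_\mu[Y]$. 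Since the conditional variance of a measure depends only on its conditional law, the right-hand side equals $\sup_\QQ\mathrm{Var}_\QQ(Y\mid X\in\mc N_\gamma(x_0))$, so $\min_\beta f(\beta)$ is the maximal conditional variance.

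To conclude, note that $f$ is strictly convex --- being a finite maximum of averages of the strictly convex functions $v_i\opt$ --- and coercive, so~\eqref{eq:local_DRO} has a unique solution $\beta\opt$; compactness of $\mc C$ and continuity of $\mathrm{Var}(\cdot)$ also guarantee that the maximiser $\QQ\opt$ in the statement exists, with conditional law $\mu\opt\in\mc C$. Write $\vartheta$ for the common value of the two sides of the minimax equality, so $\vartheta=\min_\beta L(\beta,\mu\opt)$ and $\vartheta=\min_\beta\sup_{\mu\in\mc C}L(\beta,\mu)=f(\beta\opt)$. Then $\vartheta=\min_\beta L(\beta,\mu\opt)\le L(\beta\opt,\mu\opt)\le\sup_{\mu\in\mc C}L(\beta\opt,\mu)=f(\beta\opt)=\vartheta$, so $\beta\opt$ minimises $L(\cdot,\mu\opt)$; this minimiser is unique and equals $\EE_{\mu\opt}[Y]$, hence $\beta\opt=\EE_{\mu\opt}[Y]=\EE_{\QQ\opt}[Y\mid X\in\mc N_\gamma(x_0)]$, which is the assertion.

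I expect the main obstacle to be the characterisation of $\mc C$ in the second paragraph, and especially its compactness, because conditioning on the closed set $\mc N_\gamma(x_0)$ is not weak-$\star$ continuous at measures charging the boundary $\{\DD_{\mc X}(x,x_0)=\gamma\}$; the safest remedy is to bypass the abstract measure-theoretic argument and simply adopt $\mc C=\overline{\mathrm{conv}}\,\mc D$ as above (validated through Theorem~\ref{thm:infty-refor}), which turns the problem into a finite-dimensional convex--affine minimax to which Sion's theorem applies without subtlety. A more computational alternative avoids Sion's theorem altogether: take the unique minimiser $\beta\opt$ of $f$, let $\QQ\opt$ be the worst-case distribution at $\beta\opt$ furnished by the worst-case-distribution Lemma stated just before Section~\ref{sect:guarantee}, and combine the subgradient formula of Proposition~\ref{prop:gradient} with the optimality condition $0\in\partial f(\beta\opt)$ to read off $\beta\opt=(\sum_{i\in\mc I}\alpha_i)^{-1}\sum_{i\in\mc I}\alpha_i y_i\opt=\EE_{\QQ\opt}[Y\mid X\in\mc N_\gamma(x_0)]$; the sandwich $f(\beta\opt)=\mathrm{Var}_{\QQ\opt}(Y\mid X\in\mc N_\gamma(x_0))\le\sup_\QQ\mathrm{Var}_\QQ(Y\mid X\in\mc N_\gamma(x_0))\le\min_\beta f(\beta)=f(\beta\opt)$ then re-establishes strong duality and shows every variance-maximiser has conditional mean $\beta\opt$.
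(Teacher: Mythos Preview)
Your approach is essentially the paper's: reparametrise the adversary by the conditional law $\mu$ (the paper writes $\mc B_{x_0,\gamma}(\mbb B_\rho^\infty)$ for your $\mc C$), prove this set is convex via a mixing argument on $\mbb B_\rho^\infty$, apply Sion's minimax theorem, and close with the bias--variance sandwich to force $\beta\opt=\EE_{\mu\opt}[Y]$. The one place you make life harder than necessary is the compactness hypothesis in Sion: the paper does \emph{not} compactify the $\mu$ side at all, but instead uses coercivity of $\beta\mapsto\sup_\mu L(\beta,\mu)$ to restrict $\beta$ to a ball $\{\|\beta\|_2\le R\}$ without loss, and then only needs weak continuity of $\mu\mapsto\EE_\mu[\ell(Y,\beta)]$ on $\mc C$ (which follows because every $\mu\in\mc C$ is supported in the bounded set $\bigcup_i\{y:\DD_{\mc Y}(y,\wh y_i)\le\rho\}$). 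This sidesteps your boundary-discontinuity worry entirely and makes the finite-dimensional description of $\mc C$ via Theorem~\ref{thm:infty-refor} unnecessary for the argument.
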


	\section{Numerical Experiment}
	\label{sect:numerical}
    
    \newcommand{\mewn}{{DRCME} }
    \newcommand{\Bertsimas}{{BertEtAl} }
    In this section we compare the quality of our proposed Distributionally Robust Conditional Mean Estimator (DRCME) to $k$-nearest neighbour ($k$-NN), Nadaraya-Watson (N-W), and Nadaraya-Epanechnikov (N-E) estimators, together with the robust $k$-NN approach in \cite{ref:bertsimas2019dynamic} (BertEtAl) using a synthetic and the MNIST datasets.
	
	\begin{figure}[ht]
    \centering
    \begin{minipage}{0.48\textwidth}
    \centering
     \includegraphics[width=0.90\textwidth]{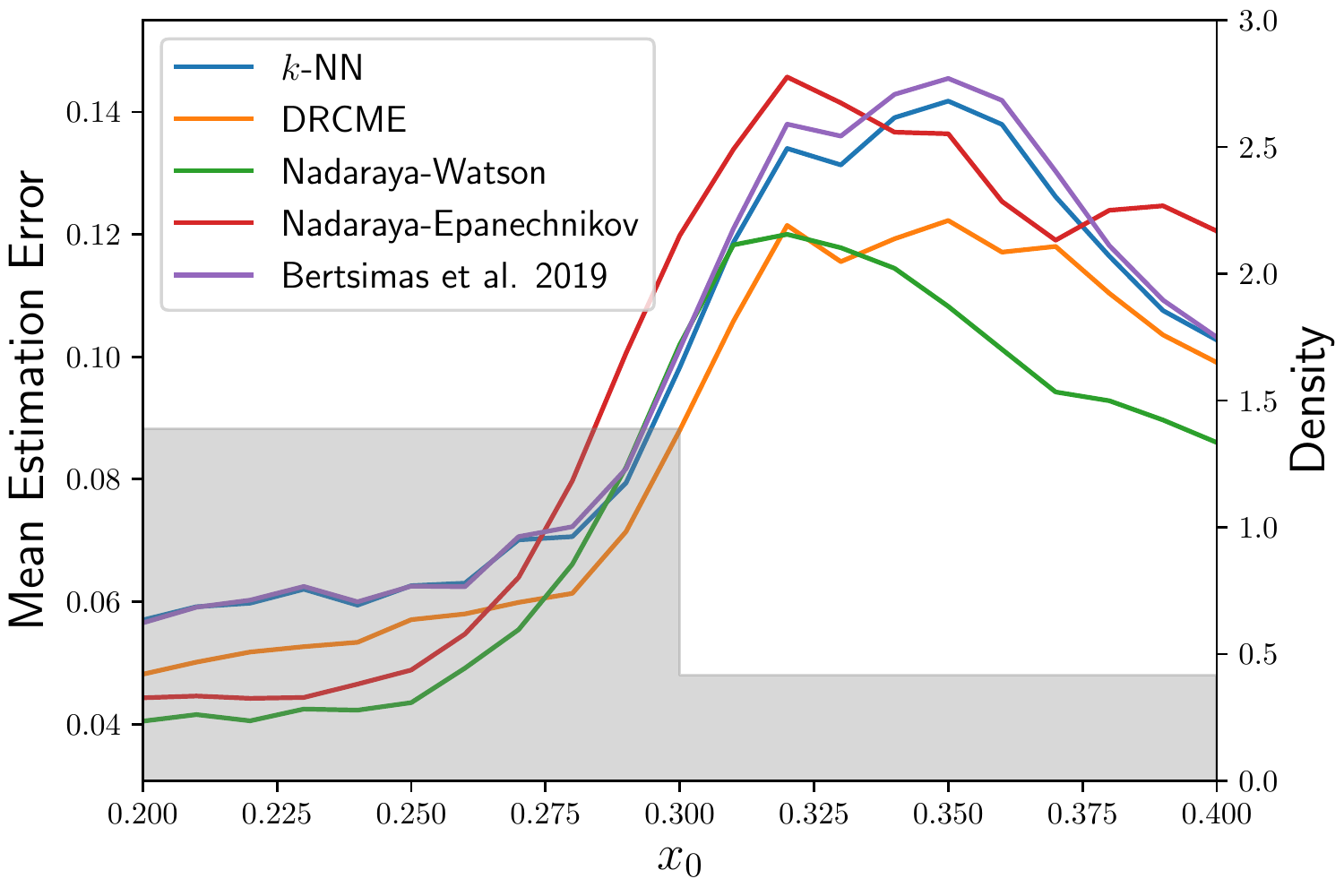} 
    \caption{Comparison of the mean absolute errors of conditional mean estimators for synthetic data. The gray shade shows the density of $X$.}
    \label{fig:local-error}
    \end{minipage}\hfill
    \begin{minipage}{0.48\textwidth}
    \centering
    \includegraphics[width=0.90\textwidth]{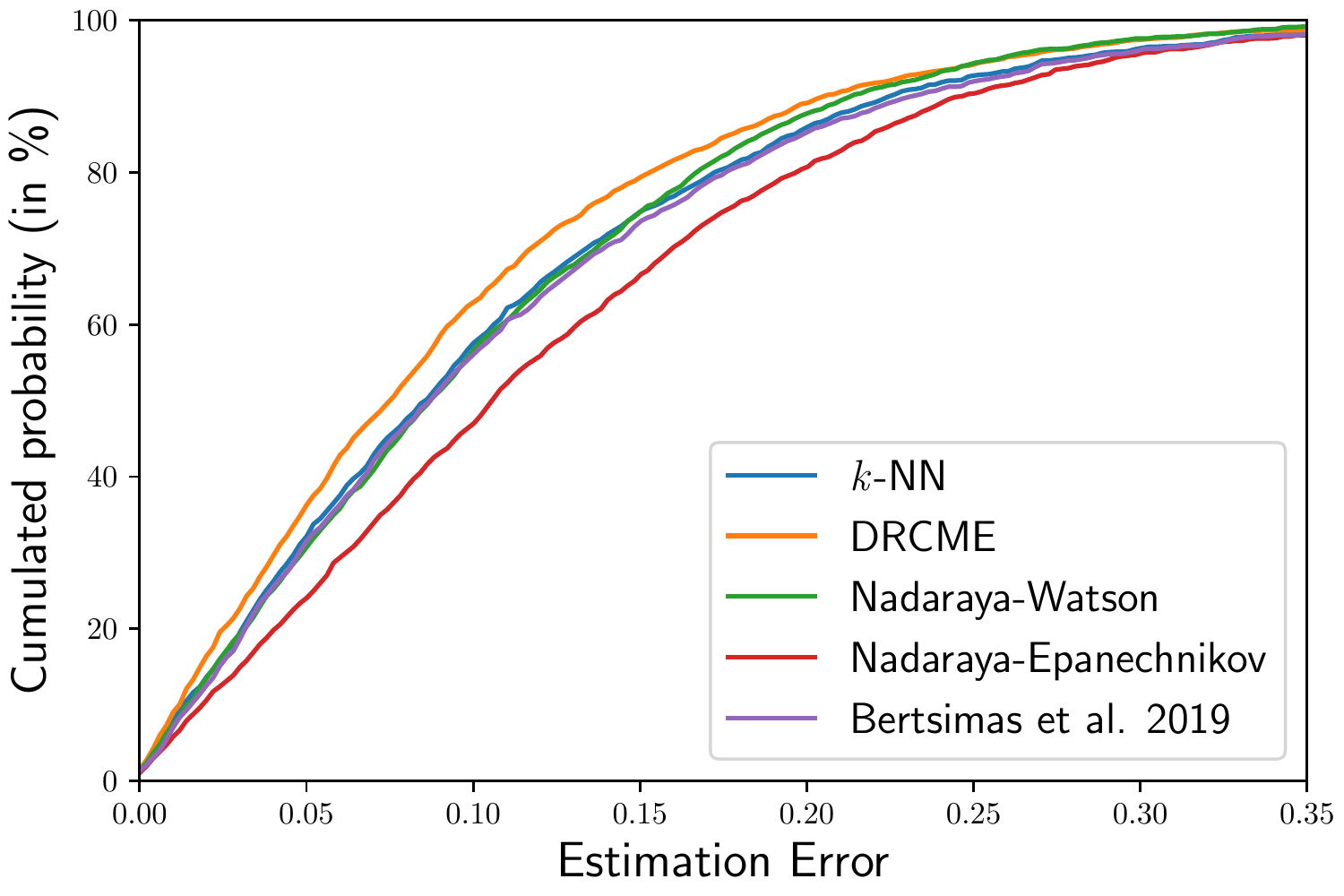} 
    \caption{Comparison of the distributions of absolute estimation errors of conditional mean estimators for synthetic data.}
    \label{fig:local-cdf}
    \end{minipage}
    \vspace{-3mm}
\end{figure} 
	\subsection{Conditional Mean Estimation With Synthetic Data}

	In this section, we conducted $500$ independent experiments where the training set contains $N = 100$ i.i.d.~samples of $(X,Y)$ in each experiment. The marginal distribution of $X$ has piecewise constant density function $p(x)$, which is chosen as $p(x) = 100/72$ if $x\in[0,0.3]\cup[0.7,1]$ and $p(x) = 30/72$ if $x\in(0.3,0.7)$. Given $X$, the distribution of $Y$ is determined by $Y = f(X) + \varepsilon$, where $f = \sin(10\cdot x)$ and $\varepsilon$ is i.i.d.~Gaussian noise independent of $X$ with mean $0$ and variance $0.01$. The conditional mean estimation problem is challenging when $x_0$ is close to the jump points of the density function $p(x)$, that is at $x_0 = 0.3$ or $x_0 = 0.7$, because the data are gathered unequally in the neighborhoods. Thus, to test the robustness of all the estimators, we employ all the five estimators to estimate the conditional mean $\EE_{\PP} [Y| X = x_0]$, for $x_0 = 0.2,0.21,\ldots, 0.4$ around the jump point $x_0 = 0.3$. We select $\DD_{\mc X}(x,x') = |x-x'|$ and $\DD_{\mc Y}(y,y') = |y-y'|$. The hyperparameters of all the estimators, whose range and selection are given in Appendix~\ref{sect:add-result}, are chosen by leave-one-out cross validation.
    
    Figure \ref{fig:local-error} displays the average of the mean estimation errors taken over 500 independent runs for different values $x_0\in[0.2,0.4]$. One can observe from the figure that \mewn uniformly outperforms $k$-NN, \Bertsimas for all $x_0$ of interest. When compared with N-W and N-E, we remark that \mewn is the most accurate estimator around the jump point of $p(x)$. As $x_0$ moves away from the location $0.3$, the performance of \mewn decays and becomes slightly worse than N-W as $x_0$ goes far from the jump point. 
    Figure~\ref{fig:local-cdf} presents the cumulative distribution of the estimation errors when $x_0\in[0.28,0.32]$. The empirical error distribution of \mewn is stochastically smaller than that of other estimators, which reinforces that \mewn outperforms around the jump point in a strong sense. 

	\subsection{Digit Estimation With MNIST Database}

In this section, we compare the quality of the estimators on a digit estimation problem using the MNIST database~\cite{MNIST}. While to this date most studies have focused on out-of-sample classification performances for this dataset,
here we shift our attention to the task of estimation of digits as \textbf{cardinal} quantities and are especially interested in performance at a low-data regime. Treating the labels as cardinal quantities allows us to assess the distinctive features of \mewn in its most simplistic form (i.e. univariate conditional mean estimation of a real random variable). Mean estimation might in fact be more relevant than classification when trying to recognize handwritten measurements where confusing a 0 with a 6 is more damaging than  with a 3.
\begin{table}
\parbox{.48\linewidth}{
\centering
\begin{tabular}{ |l|c|c|c|c| } 
 \hline
 Method & H.P. &$N$=50 & $N$=100 & $N$=500\\
 \hline
 $k$-NN & $k$&3 & 4 & 4\\ 
 \hline
 N-W & $h$ & 0.022 & 0.019 & 0.015 \\ 
 \hline
 N-E & $h$ & 0.087 & 0.078 & 0.068 \\ 
\hline
\Bertsimas & $k$ & 3 & 4 & 5 \\
  & $\rho$ & 0.712 & 1.313 & 1.313\\
 \hline
 & $\gamma$ & $h^\gamma_{1.3}(\cdot)$ & $h^\gamma_{1.3}(\cdot)$ & $h^\gamma_{1.6}(\cdot)$ \\
 \mewn & $\rho$ & 0.13$\gamma$ & 0.13$\gamma$ & 0.06$\gamma$\\
 & $\theta$ & 0.004& 0.002 & 0.001\\
 \hline
\end{tabular}
\caption{Median of hyper-parameters (H.P.) obtained with cross-validation.}\label{table:MNISThyperParams}
}
\hspace{0.7cm}
\parbox{.4\linewidth}{
\centering
\begin{tabular}{ |l|c|c|c| } 
 \hline
 Method & $N$=50 & $N$=100 & $N$=500\\
 \hline
 $k$-NN & $24 \pm 2$ & $35 \pm 2$ & $60 \pm 1$ \\ 
 \hline
 N-W &  $30 \pm 2$ & $38 \pm 2$ & $65 \pm 1$ \\ 
 \hline
 N-E &  $26 \pm 1$ & $32 \pm 1$ & $50 \pm 1$ \\ 
 \hline
 \Bertsimas&  $29 \pm 2$ & $41 \pm 2$ & $67 \pm 1$ \\
 \hline
 \mewn&  $36 \pm 2$ & $46 \pm 2$ & $71 \pm 1$ \\
 \hline
\end{tabular}
\caption{Comparison of expected out-of-sample classification accuracy (in \% with 90\% confidence intervals) from rounded estimates.}\label{table:accuracyMNIST}
}
\vspace{-8mm}
\end{table}

We executed 100 experiments where training and test sets were randomly drawn without replacement from the 60,000 training examples of this dataset. Training set sizes were $N=50$, 100, or 500 while test sets' size remained at 100. Each $(x,y)$ pair is composed of the normalized vector, in $\myRe^{28^2}$ of grayscale intensities normalized so that $\|x\|_1=1$. For simplicity, we let $\DD_{\mc X}(x,\wh x)=\|x-\wh x\|_2$ and $\DD_{\mc Y}(y,\wh y)=\theta|y-\wh y|$.
In each experiment, the hyper-parameters of all four methods were chosen based on a leave-one-out cross validation process. In the case of \mewn\!, we adapt the radius of the neighborhood $\gamma$ and $\rho$ locally at $x_0$ to account for the non-uniform density of $X$.\footnote{
Specifically, we let $\gamma=h^\gamma_i(x_0):=\kappa_{[\lfloor i\rfloor],0}+(i-\lfloor i\rfloor)(\kappa_{[\lceil i\rceil],0}-\kappa_{[\lfloor i\rfloor],0})$, where $[j]$ refers to the $j$-th smallest element while $\lfloor \cdot\rfloor$ and $\lceil \cdot\rceil$ refer to the floor and ceiling operations, i.e., the radius is set to the linear interpolation between the distance of the $\lfloor i\rfloor$-th and $\lfloor i\rfloor+1$-th closest members of the training set to $x_0$. We further let $\rho$ be proportional to $\gamma$. This lets \mewn reduce to $k$-NN when $\gamma=h^\gamma_k(x_0)$, $\rho=0$, and $\theta=1$.} Table \ref{table:MNISThyperParams} presents the median choice of hyper parameters for each estimator.

	\begin{figure}[!ht]
		\centering
		\includegraphics[width=0.9\textwidth]{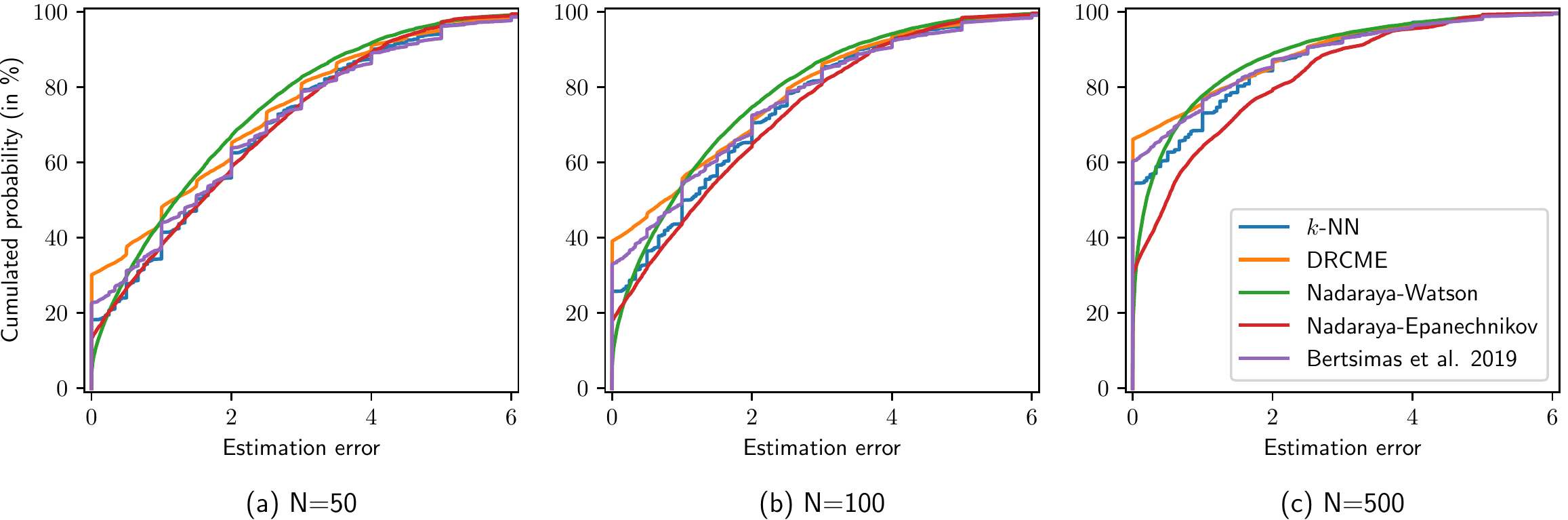}
		\caption{Comparison of the distributions of out-of-sample absolute estimation errors of conditional mean estimators for the MNIST database under different training set sizes.}
		\label{fig:errorDistMNIST}
	\end{figure}

\begin{figure}[!ht]
		\centering
		\includegraphics[width=0.49\textwidth]{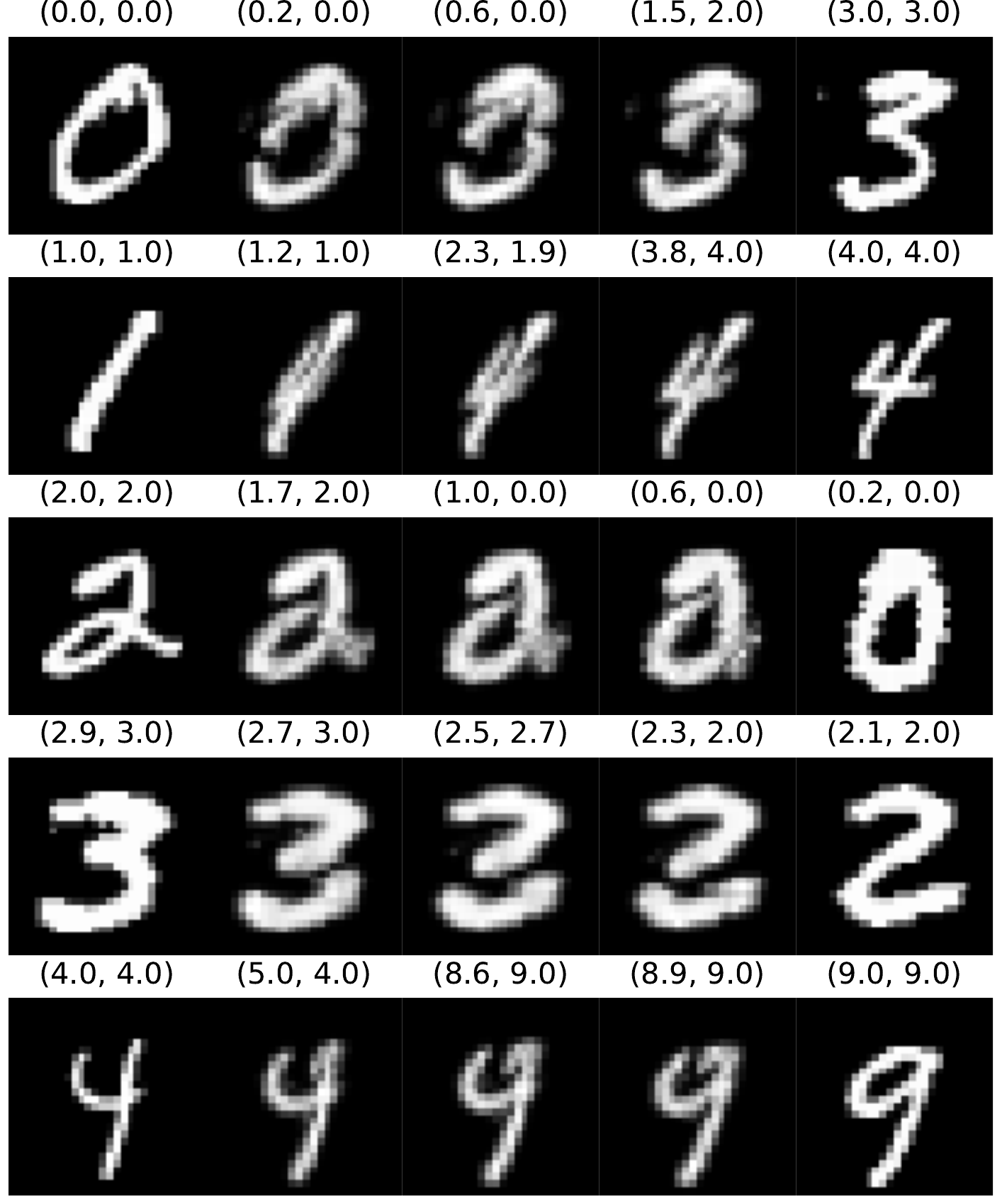}\hspace{0.015\textwidth}\includegraphics[width=0.49\textwidth]{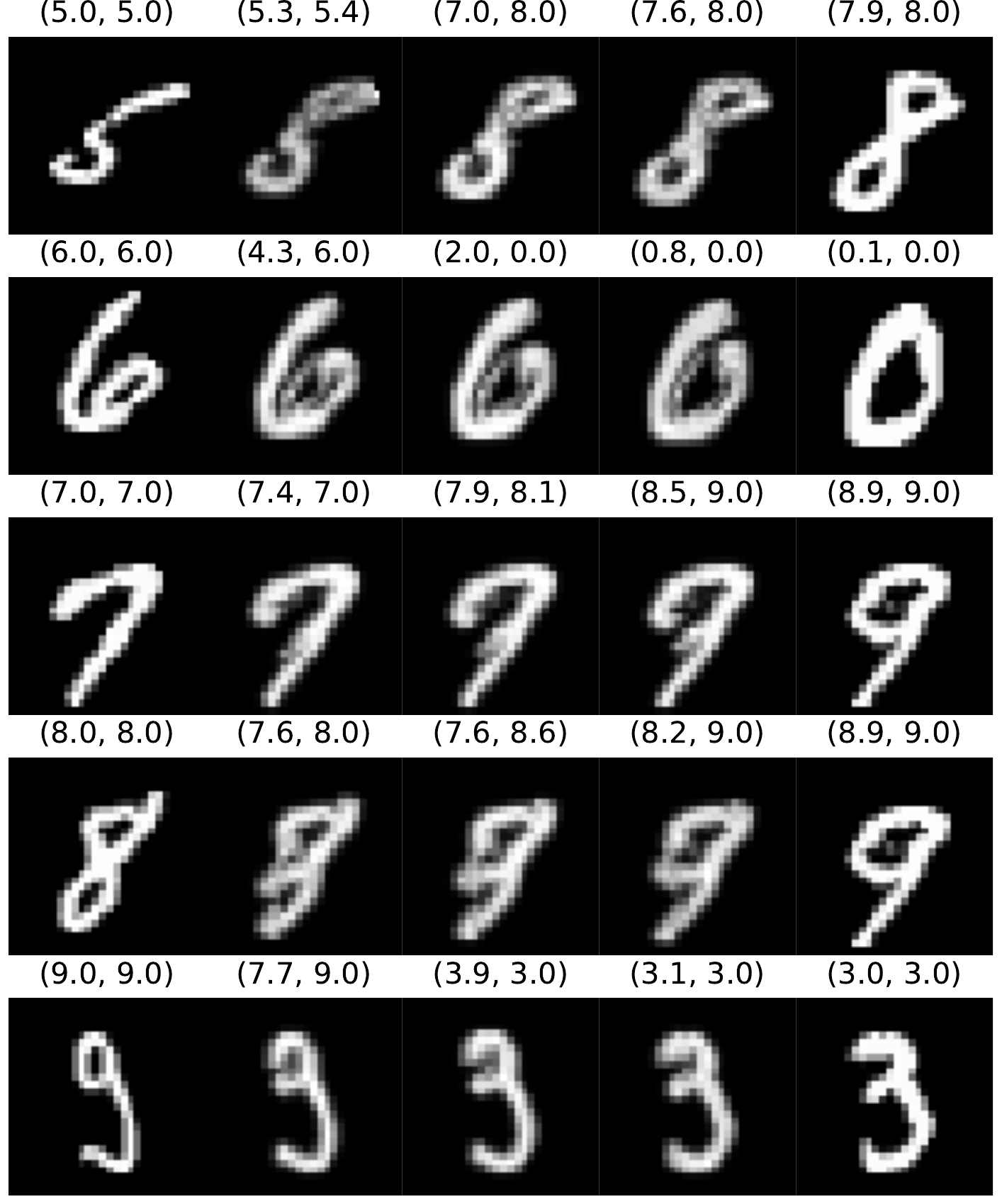}
		\caption{Comparison of estimations from N-W and \mewn on entropic regularized Wasserstein barycenters of pairs of images from the training set. Estimations are presented above each image in the format \quoteIt{(N-W,\;\mewn)}.}
		\label{fig:advMNIST}
		\vspace{-5mm}
	\end{figure}

Figure \ref{fig:errorDistMNIST} presents the out-of-sample estimation error distribution of all four conditional estimators. One can quickly remark that the \mewn outperforms \Bertsimas, $k$-NN, and N-E estimators, especially for low-data regime. In particular, for all three training set sizes, the distribution of error for \mewn stochastically dominates the three other distributions.
In particular, one even notices in (c) that \mewn has the largest chance of reaching an exact estimation: 66\% compared to 60\%, 55\%, 30\%, and 8\% for the other estimators. 
This explains why \mewn is also the most accurate estimator when rounding it to the nearest integer as reported in Table \ref{table:accuracyMNIST}: with a margin greater than 4\% from all estimators across all $N$'s.
It is worth noting that while N-W does not produce high accuracy estimate, it however has less chances of producing estimation with large errors. This is also apparent when comparing the expected type-$p$ deviation of the estimation error, i.e. $(\EE[|y-\hat{y}|^p])^{1/p}$, for each estimator. Specifically, N-W slightly outperforms \mewn for deviation metrics of type $p\geq 1$, e.g. with a root mean square error of 1.32 compared to 1.41 when $N=500$. On the other hand, \mewn significantly outperforms N-W when $p<1$ where high precision estimators are encouraged. We refer the reader to Appendix~\ref{sect:add-result} for further details.

Finally, we report on an experiment that challenges the capacity of both N-W and \mewn estimators to be resilient to adversarial corruption of the test images. This is done by exposing the two estimators to images from the training set ($N=100$) that have been corrupted in a way that makes them resemble the closest differently-labeled image in the set.\footnote{Implementation wise, we exploit the Python Optimal Transport toolbox \cite{flamary2017pot} to compute different entropic regularized Wasserstein barycenters of the two normalized images treated as distributions.} Figure \ref{fig:advMNIST} presents several visual examples of the progressively corrupted images and the resulting N-W and \mewn estimations. Overall, one quickly notices how the estimation produced by \mewn is less sensitive to such attacks, \quoteIt{sticking} to the original label until there is substantial evidence of a new label. More examples are in Appendix~\ref{sect:add-result}.

\paragraph{\bf Acknowledgments.}
Material in this paper is based upon work supported by the Air Force Office of Scientific Research under award number FA9550-20-1-0397. Additional support is gratefully acknowledged from NSF grants 1915967, 1820942, 1838676 and from the China Merchant Bank.

	\appendix
	 
    \section{Additional Experiment Results}
    \label{sect:add-result}

    \subsection{Conditional Mean Estimation With Synthetic Data}
	
	We report in Figure \ref{fig:local-error-full} the plot of mean estimation errors versus $x_0$ for different training set sizes $N = 50, 100, 200$. In Figure \ref{fig:local-cdf-full} we present the plot of the distribution of absolute estimation errors for $x_0\in [0.28,0.32]$. For comparison, we also include the results of training set size $N = 100$ that are already reported in Figure~\ref{fig:local-error} and~\ref{fig:local-cdf}. We remark that the estimation error of all the estimators becomes smaller when training set size is larger, and \mewn has best estimation performance among all the estimators around the jump point $x = 0.3$ for all different training set sizes. 
	
	We report the hyper-parameters selected by cross-validation in Table \ref{table:synthetichyperParams}.
	
	\begin{figure}[!h]
		\centering
         \includegraphics[width=\textwidth]{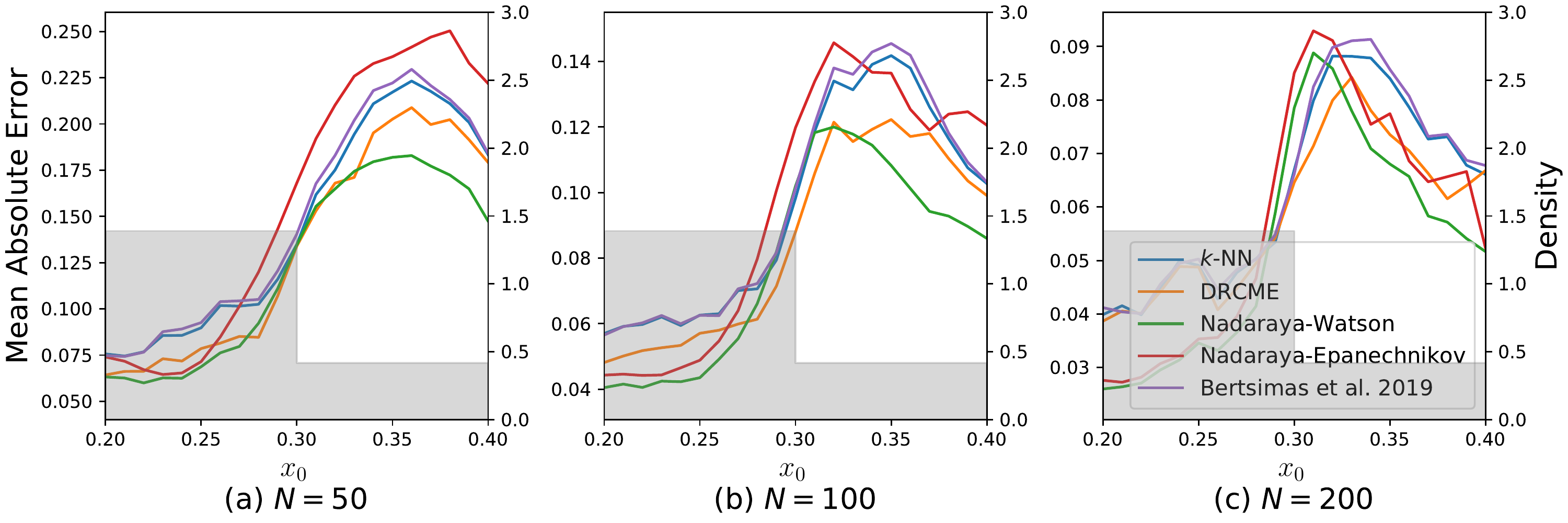}
        \caption{Comparison of the mean absolute errors of conditional mean estimators for synthetic data under different training set sizes. The gray shade shows the density of $X$.}
        \label{fig:local-error-full}
	\end{figure}
	
	\begin{figure}
		\centering
         \includegraphics[width=\textwidth]{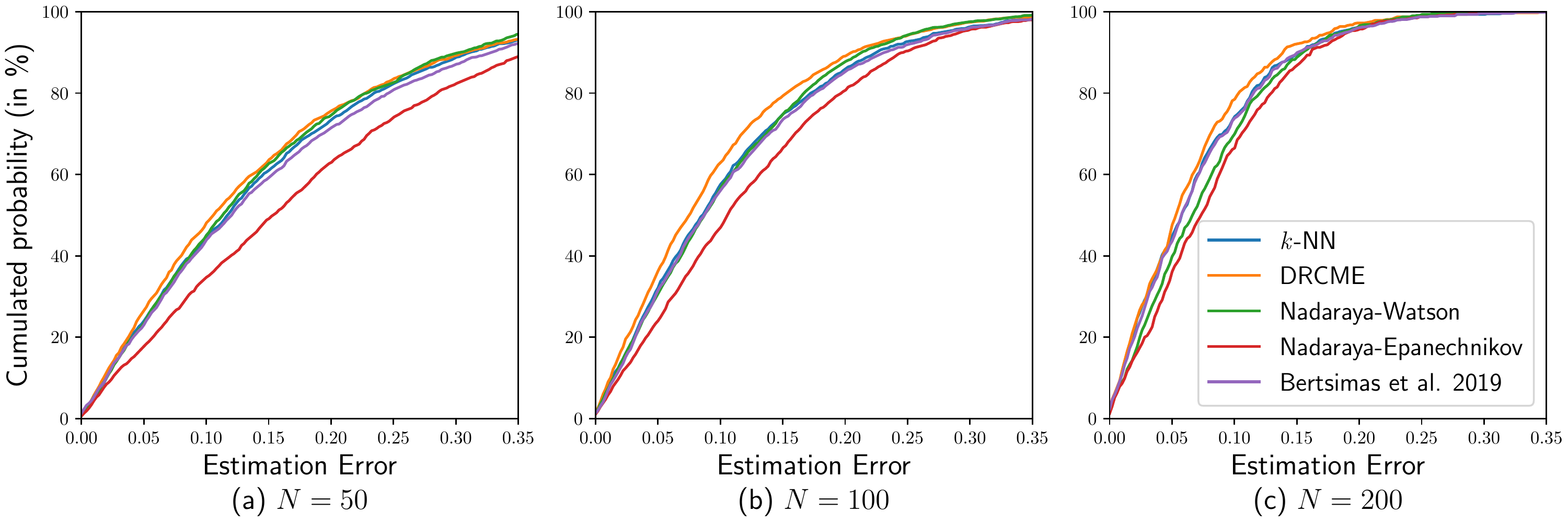}
        \caption{Comparison of the distributions of absolute estimation errors of conditional mean estimators for synthetic data under different training set sizes. }
        \label{fig:local-cdf-full}
	\end{figure}
	
	\begin{table}
    \centering
        \begin{tabular}{ |l|c|c|c|c| } 
         \hline
         Method & H.P. &$N$=50 & $N$=100 & $N$=200\\
         \hline
         $k$-NN & $k$&1 & 3 & 5\\ 
         \hline
         N-W & $h$ & 0.026 & 0.019 & 0.018 \\ 
         \hline
         N-E & $h$ & 0.078 & 0.055 & 0.038 \\ 
        \hline
        \Bertsimas & $k$ & 1 & 3 & 5 \\
          & $\rho$ & 0.063 & 0.016 & 0.000\\
         \hline
         & $\gamma$ & $h^\gamma_{1}(\cdot)$ & $h^\gamma_{2}(\cdot)$ & $h^\gamma_{3}(\cdot)$ \\
         \mewn & $\rho$ & 0.031$\gamma$ & 0.063$\gamma$ & 0.063$\gamma$\\
         \hline
        \end{tabular}
        \caption{Median of hyper-parameters (H.P.) for synthetic data experiment obtained with cross-validation.}\label{table:synthetichyperParams}
    \end{table}
    
    \subsection{Digit Estimation With MNIST Database}

    The distinction between N-W and \mewn is also apparent in Figure \ref{fig:typePdeviationMNIST} which presents the normalized expected type-$p$ deviation of the estimation error for each estimator, i.e. $\sqrt{2/p}(\EE[|y-\hat{y}|^p])^{1/p}$. Specifically, N-W slightly outperforms \mewn for deviation metrics of type $p\geq 1$, e.g. with a root mean square error of 1.34 compared to 1.45 when $N=500$. On the other hand, \mewn significantly outperforms N-W when $p<1$ where high precision estimators are encouraged.

	\begin{figure}[!h]
		\centering
		\includegraphics[width=\textwidth]{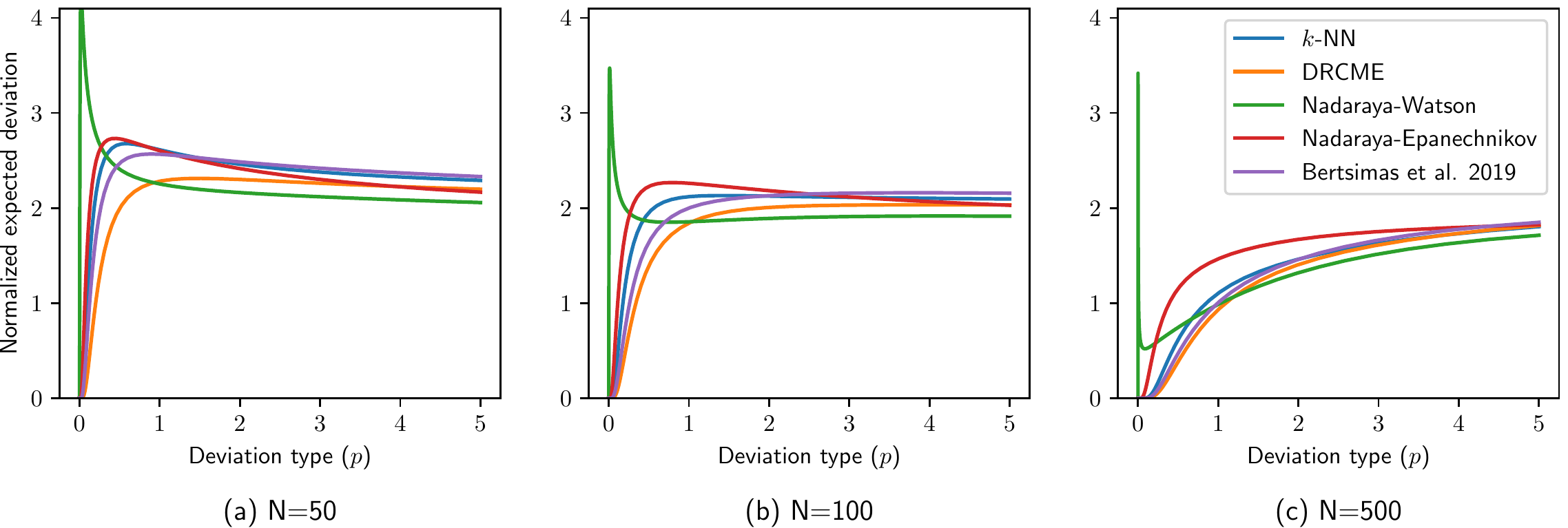}
		\caption{Comparison of normalized expected type-$p$ deviation of the out-of-sample error of four non-parametric conditional mean estimation methods for the MNIST database under different training set sizes. For example, $p=2$ is presented the root-mean square error.}
		\label{fig:typePdeviationMNIST}
	\end{figure}
    
    We also include in Figure \ref{fig:advMNISTExtra1} some additional examples of labels from \mewn and N-W. On the other hand, Figure \ref{fig:advMNISTExtraBert} compares the labels from \mewn and \Bertsimas.

	\begin{figure}[!ht]
		\centering
		\includegraphics[width=0.49\textwidth]{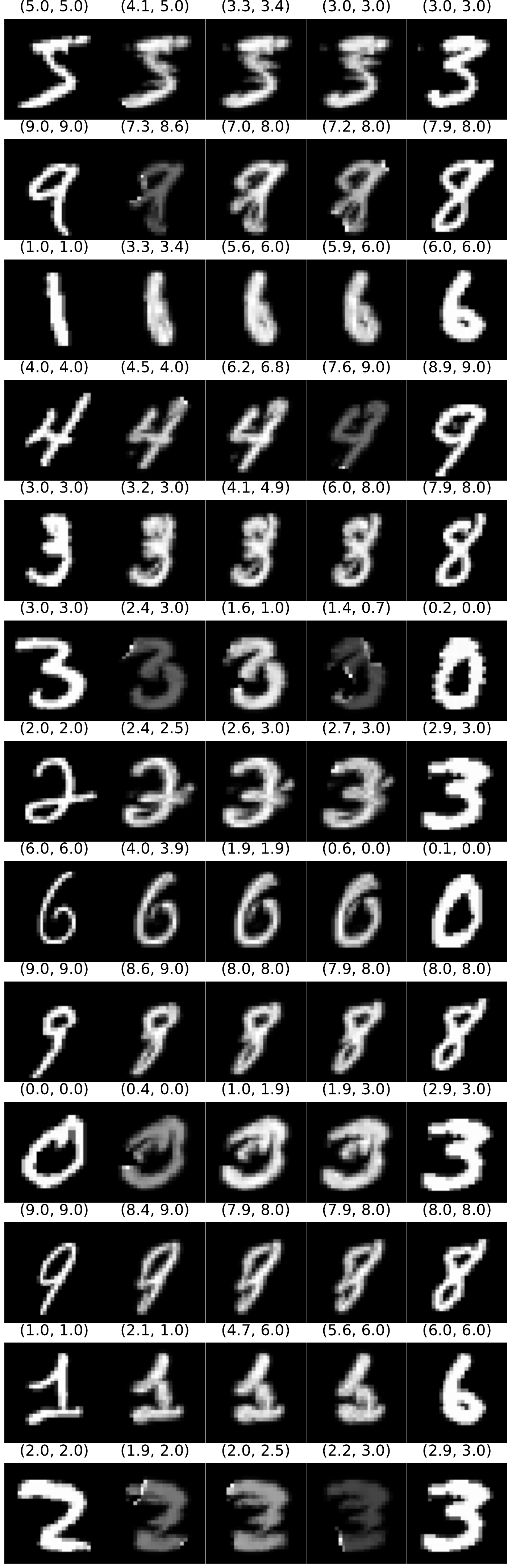}\hspace{0.015\textwidth}\includegraphics[width=0.49\textwidth]{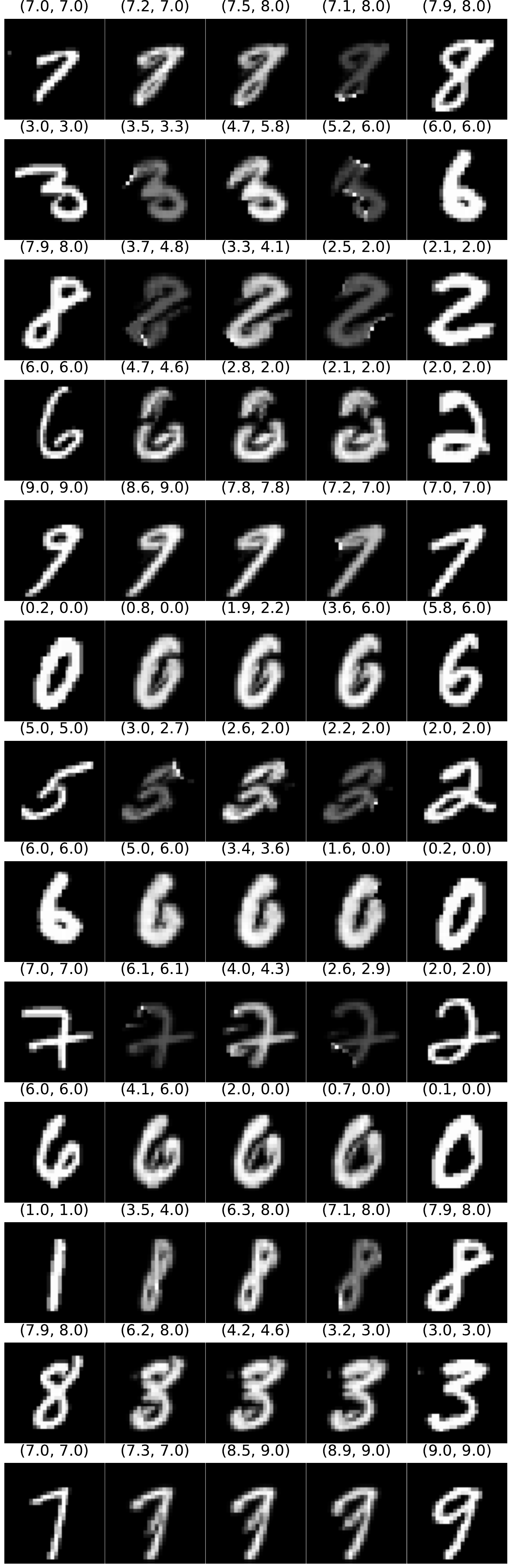}
		\caption{Comparison of estimations from N-W and \mewn on entropic regularized Wasserstein barycenters of pairs of images from the training set. Estimations are presented above each image in the format \quoteIt{(N-W,\;\mewn)}.}
		\label{fig:advMNISTExtra1}
	\end{figure}

	\begin{figure}[!ht]
		\centering
		\includegraphics[width=0.49\textwidth]{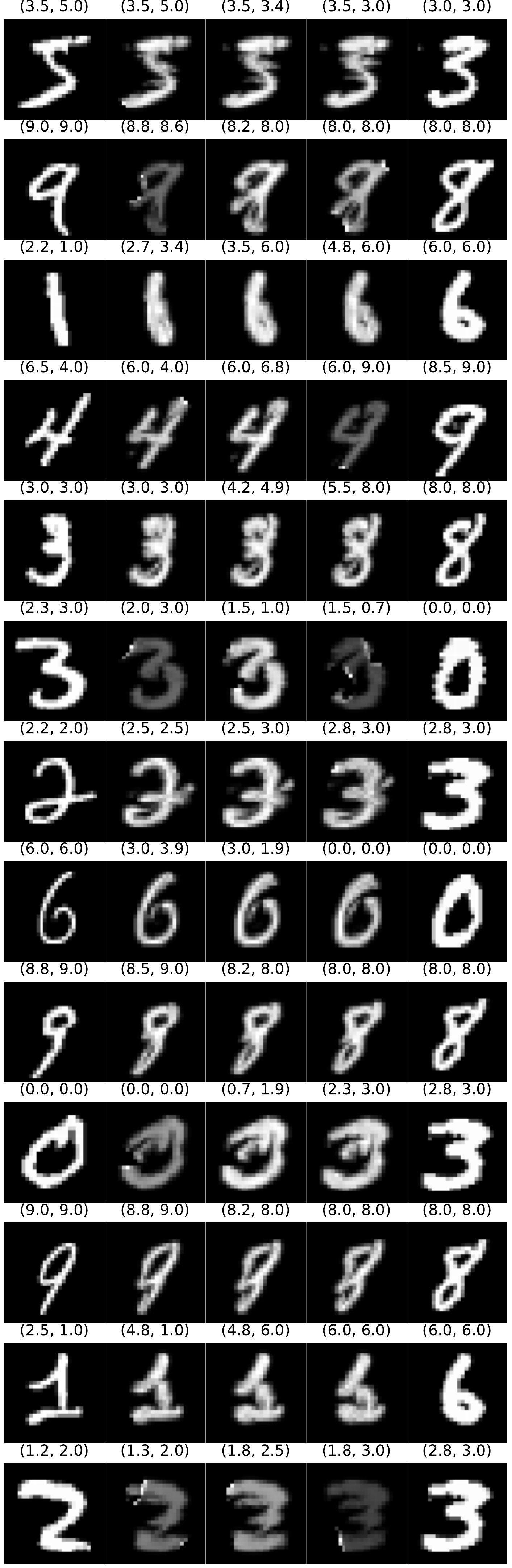}\hspace{0.015\textwidth}\includegraphics[width=0.49\textwidth]{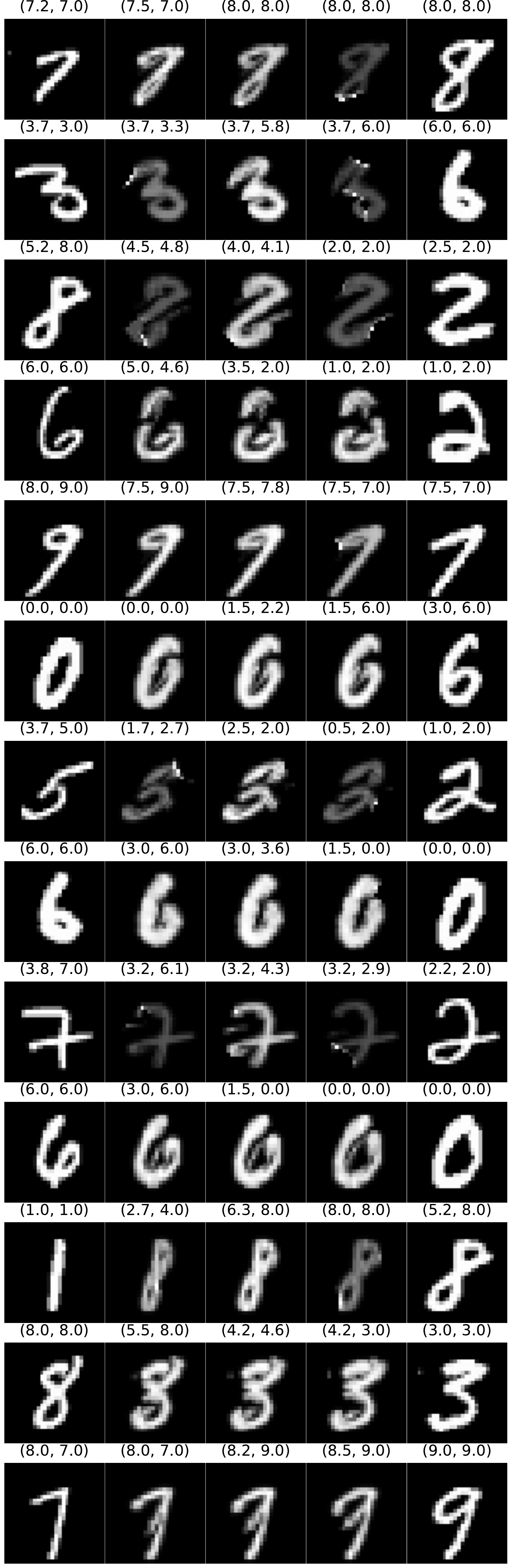}
		\caption{Comparison of estimations from \Bertsimas and \mewn on entropic regularized Wasserstein barycenters of pairs of images from the training set. Estimations are presented above each image in the format \quoteIt{(\Bertsimas\!,\;\mewn\!)}.}
		\label{fig:advMNISTExtraBert}
	\end{figure}	     

	\newpage
	\section{Proofs}
	
	This section contains the proofs of all technical results presented in the main paper.
	
	\subsection{Proofs of Section~\ref{sect:infty-set}}
	
	\begin{proof}[Proof of Proposition~\ref{prop:infty-vanilla-set}]
		Using the definition of the type-$\infty$ Wasserstein distance, we can re-express the ambiguity set $\mbb B^\infty_\rho$ as
		\begin{align*}
		\mbb B^\infty_\rho 
		=& \left\{  \QQ \in \mc M(\mc X \times \mc Y):
		\begin{array}{l}
		\exists \pi \in \Pi(\QQ, \Pnom) \text{ such that }\\
		\mathrm{ess} \Sup{\pi } \{\DD_{\mc X}(x, x') + \DD_{\mc Y}(y, y')\} \leq \rho
		\end{array}
		\right\} \\
		=&\left\{ \QQ \in \mc M(\mc X \times \mc Y):
		\begin{array}{l}
		\exists \pi_i \in \mc M(\mc X \times \mc Y) ~\forall i \in [N] \text{ such that } \QQ = \frac{1}{N} \sum_{i\in[N]} \pi_i\\
		\ds \mathrm{ess} \Sup{\frac{1}{N} \sum_{i \in [N]} \pi_i \otimes \delta_{(\wh x_i, \wh y_i)}} \big\{\DD_{\mc X}(x, x') + \DD_{\mc Y}(y, y')\big\} \leq \rho
		\end{array}
		\right\},
		\end{align*}
		where in the second equality we exploit the fact that $\Pnom$ is an empirical measure and thus any joint probability measure $\pi \in \Pi(\QQ, \Pnom)$ can be written as $\pi = N^{-1} \sum_{i \in [N]} \pi_i \otimes \delta_{(\wh x_i, \wh y_i)}$, where each $\pi_i$ is a probability measure supported on $\mc X \times \mc Y$. The last constraint can now be written as
		\[
		\DD_{\mc X}(x, \wh x_i) + \DD_{\mc Y}(y, \wh y_i) \leq \rho \quad \forall (x, y) \in \mathrm{supp}(\pi_i) \quad \forall i \in [N],
		\]
		where $\mathrm{supp}(\pi_i)$ denotes the support of the probability measure $\pi_i$ \cite[Page~441]{ref:aliprantis06hitchhiker}. We thus have 
		\begin{align*}
		\mbb B^\infty_\rho 
		&= \left\{ \QQ \in \mc M(\mc X \times \mc Y):
		\begin{array}{l}
		\exists \pi_i \in \mc M(\mc X \times \mc Y) ~\forall i \in [N] \text{ such that } \QQ = \frac{1}{N} \sum_{i\in[N]} \pi_i \\ \DD_{\mc X}(x, \wh x_i) + \DD_{\mc Y}(y, \wh y_i) \leq \rho \quad \forall (x, y) \in \mathrm{supp}(\pi_i) \quad \forall i \in [N]
		\end{array}
		\right\}.
		\end{align*}
		Suppose that $\rho < \min_{i \in [N]} \kappa_{i, \gamma}$, then this implies by the last constraint of the feasible set that $\pi_i(\mc N_\gamma(x_0) \times \mc Y) = 0$ for all $i \in [N]$. As a consequence, any $\QQ \in \mbb B_\rho^\infty$ should satisfy
		\[
		    \QQ( X \in \mc N_\gamma(x_0)) = \sum_{i \in [N]} \pi_i(\mc N_\gamma(x_0) \times \mc Y) = 0.
		\]
		Hence $\mbb B_\rho^\infty \cap \{ \QQ \in \mc M(\mc X \times \mc Y): \QQ(X \in \mc N_\gamma(x_0)) > 0\} = \emptyset$. 
		
		Suppose on the contrary that $\rho \ge \min_{i \in [N]} \kappa_{i, \gamma}$. Let $i\opt = \arg\min_{i \in [N]} \kappa_{i, \gamma}$, and consider the following set of probability measures
		\[
		    \forall i \in [N]: \qquad \pi_i = \begin{cases}
		        \delta_{(\wh x_i^p, \wh y_i)} & \text{if } i = i\opt, \\
		        \delta_{(\wh x_i, \wh y_i)} &\text{otherwise,}
		    \end{cases}
		\]
		and set $\QQ = \frac{1}{N} \sum_{i \in [N]} \pi_i$. It is easy to verify that $\QQ \in \mbb B_\rho^\infty$, and that \[\QQ(X \in \mc N_\gamma(x_0)) \ge \frac{1}{N} \pi_{i\opt}(X \in \mc N_\gamma(x_0)) = \frac{1}{N} > 0.\] This observation completes the proof.
	\end{proof}

The proof of Theorem~\ref{thm:infty-refor} relies on the following result.
\begin{lemma}[Optimal solution of a fractional linear program] \label{lemma:fractional}
	    Let $d$ be an strictly positive integer. The linear fractional program
	    \[
	        \min \left\{ \frac{c + \sum_{i=1}^K v_i \alpha_i}{d + \sum_{i=1}^K \alpha_i} : \alpha \in [0, 1]^K  \right\}
	    \]
	    admits the optimal solution
	    \[
	        \forall i \in [K] : \qquad \alpha_i\opt = \begin{cases}
	            1 & \text{if } \ds v_i > \frac{c + \sum_{j: v_j > v_i} v_j}{d + |\{j: v_j > v_i \}|},\\
	            0 & \text{otherwise.}
	        \end{cases}
	    \]
	\end{lemma}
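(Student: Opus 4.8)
The plan is to reduce this box-constrained linear-fractional program to a purely combinatorial problem over subsets of $[K]$, and then to solve the latter by a greedy/threshold argument. First I would argue that an optimal $\alpha$ may be taken in $\{0,1\}^K$. The denominator $d+\sum_i\alpha_i$ is bounded below by $d>0$ on the box $[0,1]^K$, so the objective is a well-defined linear-fractional function there; fixing all coordinates but $\alpha_i$ it restricts to a M\"obius function $t\mapsto (A+v_i t)/(B+t)$ with $B>0$ whose derivative $(v_iB-A)/(B+t)^2$ has a sign independent of $t$, hence is monotone along every coordinate direction. A coordinate-by-coordinate push to the endpoints (equivalently, the fact that a linear-fractional function attains both its minimum and its maximum over a polytope at a vertex) then produces an optimal vertex. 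Identifying such a vertex with the set $S\Let\{i:\alpha_i=1\}$, the problem becomes to optimize $g(S)\Let(c+\sum_{i\in S}v_i)/(d+|S|)$ over $S\subseteq[K]$; I will carry out the argument for the maximization, as that is the direction consistent with the claimed form of $\alpha\opt$ (the minimization is entirely analogous).

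Next I would show by an exchange argument that an optimal $S$ is an upper level set of $v$: if $i\in S$, $j\notin S$ and $v_j>v_i$, then replacing $i$ by $j$ leaves $|S|$ unchanged and increases the numerator by $v_j-v_i>0$, contradicting optimality. Relabelling so that $v_1\ge v_2\ge\cdots\ge v_K$, it follows that the optimum is attained at some prefix $S=\{1,\ldots,k\}$, so it suffices to maximize $h(k)\Let(c+\sum_{j=1}^k v_j)/(d+k)$ over $0\le k\le K$.

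The heart of the argument is the threshold characterization of the optimal $k$. Cross-multiplying (all denominators are positive) gives $h(k)>h(k-1)\iff(d+k-1)v_k>c+\sum_{j=1}^{k-1}v_j\iff v_k>h(k-1)$; and since $h(k)=\big((d+k-1)h(k-1)+v_k\big)/(d+k)$ is a weighted average of $h(k-1)$ and $v_k$ with positive weights, $v_k>h(k-1)$ also forces $h(k-1)<h(k)<v_k$, while $v_k\le h(k-1)$ gives $v_k\le h(k)\le h(k-1)$. Combined with $v_{k+1}\le v_k$, this shows that once a step fails to increase $h$ every later step does too, so $h$ is unimodal and the optimal length is $k^\star=\max\{k:v_k>h(k-1)\}$ (with $k^\star=0$ if no such $k$ exists). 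Translating back, $\alpha_i\opt=1$ iff $i\le k^\star$, which in the tie-free case is exactly $v_i>h(i-1)=(c+\sum_{j:v_j>v_i}v_j)/(d+|\{j:v_j>v_i\}|)$. Finally I would treat equal values: using the same weighted-average identity, a maximal block of indices sharing a common $v$-value is either entirely included or entirely excluded according to the single inequality comparing that value to $(c+\sum_{j:v_j>v_i}v_j)/(d+|\{j:v_j>v_i\}|)$, which removes any dependence on the (arbitrary) tie-breaking order and yields the stated formula verbatim.

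The main obstacle is this last step: proving unimodality of $h$ and handling ties carefully enough that the closed form can be stated purely in terms of the set $\{j:v_j>v_i\}$ defined by a strict inequality, rather than via a chosen sorted order. The weighted-average identity $h(k)=\big((d+k-1)h(k-1)+v_k\big)/(d+k)$ is the workhorse that makes both the unimodality claim and the tie analysis go through; everything else is routine cross-multiplication and bookkeeping.
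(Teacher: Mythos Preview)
Your proposal is correct and follows essentially the same route as the paper: reduce to binary vertices, sort the $v_i$, reduce to choosing a prefix length $k$, and prove unimodality of $h(k)$ via the increment identity $h(k)-h(k-1)=(v_k-h(k-1))/(d+k)$, with the same tie analysis. The only cosmetic difference is that you obtain the vertex reduction by an elementary coordinate-wise monotonicity argument for M\"obius functions, whereas the paper invokes pseudolinearity of the objective to reach the same conclusion; and you correctly flag that the claimed $\alpha\opt$ matches the \emph{maximization} rather than the stated minimization, which the paper's own proof also silently treats as a max.
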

	\begin{proof}[Proof of Lemma~\ref{lemma:fractional}]
	Without loss of generality assume that $v_i$ are ordered decreasingly. Because  the  objective  function  is  pseudolinear,  the  optimal  solution is at some binary vertex~\cite[Lemma~3.3]{ref:kruk1999psedulinear}. Consider the equivalent problem
	\[
	        \max_{k,\alpha} \left\{ \frac{c + \sum_{i=1}^K v_i \alpha_i}{d + \sum_{i=1}^K \alpha_i} : \alpha \in \{0, 1\}^K ,\;\sum_{i=1}^K \alpha_i = k,\; k \in [K] \right\}.
	    \]
	For any value $k \in [K]$, the corresponding optimal value of $\alpha$ dependent on $k$ is
	\[
	    \alpha_i\opt(k) = \begin{cases}
	        1 & \text{if } i \leq k, \\
	        0 & \text{otherwise,}
	    \end{cases}
	\]
	where we exploit the fact that $v_i$ are ordered decreasingly. The above optimization problem can be simplified to
    \be \label{eq:max_k}
    \max_{k} \left\{ \frac{c + \sum_{i=1}^k v_i }{d + k}: k \in [K] \right\}.	
    \ee
    
    Now we need to show that the objective function $g(k) \Let (c + \sum_{i=1}^k v_i)/(d + k)$ becomes non-increasing once it starts decreasing. Indeed, the incremental improvement in the objective value of~\eqref{eq:max_k} at $k$ can be written as
    \begin{align*}
    \Delta_g (k) = g(k+1)-g(k)&=\frac{c+\sum_{i=1}^{k+1} v_i }{d+k+1}-g(k)\\
    &=\frac{(d+k)g(k)+v_{k+1}}{d+k+1}-g(k)\\
    &=\frac{v_{k+1}-g(k)}{d+k+1}.
    \end{align*}
    If $\Delta_g(k)<0$, this implies that $v_{k+1}<g(k)$. We also know that $v_{k+2}\leq v_{k+1}$. So we can show that:
    \begin{align*}
    \Delta_g(k+1) = g(k+2)-g(k+1)&=\frac{v_{k+2}-g(k+1)}{d+k+2}
    \\
    &=\frac{(d+k+1)v_{k+2}-(d+k)g(k)-v_{k+1}}{(d+k+2)(d+k+1)}\\
    &\leq\frac{(d+k+1)v_{k+1}-(d+k)g(k)-v_{k+1}}{(d+k+2)(d+k+1)}\\
    &=\frac{(d+k)(v_{k+1}-g(k))}{(d+k+2)(d+k+1)}<0.
    \end{align*}
    Moreover, the above line of arguments also reveals that if $v_{k+2} = v_{k+1}$ then both $\Delta_g(k)$ and $\Delta_g(k+1)$ have the same sign. Thus, the value $k\opt$ that maximizes~\eqref{eq:max_k} is also the solution of 
    \[
    \max\{k: \Delta_g(k-1) \ge 0\}.
    \]
    Leveraging on the formula of $\alpha_i\opt(k)$, the solution $\alpha\opt$ of the original fractional linear program has the form
    \begin{align*}
        \forall i: \qquad \alpha_i\opt &= 
        \begin{cases}
            1 & \text{if } \ds v_i > \frac{c + \sum_{j: j < i} v_i}{d + |\{j: j < i\}|},\\
            0 & \text{otherwise,}
        \end{cases}  \\
        &= \begin{cases}
	            1 & \text{if } \ds v_i > \frac{c + \sum_{j: v_j > v_i} v_j}{d + |\{j: v_j > v_i \}|},\\
	            0 & \text{otherwise,}
	        \end{cases}
    \end{align*}
    where the second equality comes from the ordering of $v_i$. This observation completes the proof.
	\end{proof}

	\begin{proof}[Proof of Theorem~\ref{thm:infty-refor}]
	    A conditional measure $\mu_0$ of $Y$ given $X \in \mc N_{\gamma}(x_0)$ induced by a probability measure $\QQ$ satisfying $\QQ(X \in \mc N_\gamma(x_0)) > 0$ can be written as
	    \[
	        \QQ(\mc N_\gamma(x_0) \times A) = \mu_{0}(A) \QQ(\mc N_\gamma(x_0) \times \mc Y) \qquad \forall A \subseteq \mc Y~\text{measurable}.
	    \]
	    One can rewrite the worst-case conditional expected loss $f(\beta)$ as
	    \begin{align*}
	        f(\beta)= \left\{
	            \begin{array}{cl}
	                \sup & \ds \int_{\mc Y} \ell(y, \beta) ~\mu_{0}(\mathrm{d}y)  \\
	                \st & \QQ \in \mbb B_\rho^\infty,~\QQ(\mc N_\gamma(x_0) \times \mc Y) > 0 \\[1ex]
	                & \QQ(\mc N_\gamma(x_0) \times A) = \mu_{0}(A) \QQ(\mc N_\gamma(x_0) \times \mc Y) \qquad \forall A \subseteq \mc Y~\text{measurable}.
	            \end{array}
	        \right.
	    \end{align*}
	    By decomposing the measure $\QQ$ using the set of probability measures $\pi_i$ and exploiting the definition of the type-$\infty$ Wasserstein distance as in the proof of Proposition~\ref{prop:infty-vanilla-set}, we have
	    \begin{align*}
	        f(\beta)= \left\{
        		\begin{array}{cll}	
        			\sup & \ds \int_{\mc Y} \ell(y, \beta) ~\mu_{0}(\mathrm{d}y)\\
        			\st & \mu_{0} \in \mc M(\mc Y), \, \pi_i \in \mc M(\mc X \times \mc Y) \quad \forall i \in [N] \\
        			&  \ds \sum_{i \in [N]} \pi_i(\mc N_\gamma(x_0) \times \mc Y) > 0 \\
        			& \ds \sum_{i \in [N]} \pi_i(\mc N_\gamma(x_0) \times A) = \mu_{0}(A) \sum_{i \in [N]} \pi_i(\mc N_\gamma(x_0) \times \mc Y) &\forall A \subseteq \mc Y \text{ measurable}\\
        			& \DD_{\mc X}(x, \wh x_i) + \DD_{\mc Y}(y, \wh y_i) \leq \rho \quad \forall (x, y) \in \mathrm{supp}(\pi_i)& \forall i \in [N].
        		\end{array}
        		\right.
	    \end{align*}
		For any set of feasible solutions $\{ \pi_i\}_{i \in [N]}$, we have $\sum_{i \in [N]} \pi_i(\mc N_\gamma(x_0) \times \mc Y) > 0$. We can thus re-express $\mu_{0}(A)$ for any Borel measurable set $A \subseteq \mc Y$ as
		\[
		\mu_{0}(A) = \frac{\sum_{i \in [N]} \pi_i(\mc N_\gamma(x_0) \times A) }{\sum_{i \in [N]} \pi_i(\mc N_\gamma(x_0) \times \mc Y) }    \quad \forall A \subseteq \mc Y \text{ measurable}.
		\]
		Thus, we can eliminate the variables $\mu_{0}$ from the above optimization problem to obtain the equivalent representation
		\be \label{eq:refor-2}
		f(\beta) = \left\{
		\begin{array}{cll}
			\sup & \displaystyle \frac{1}{\sum_{i \in [N]} \pi_i (\mc N_\gamma(x_0) \times \mc Y)} \sum_{i \in [N]} \int_{\mc Y} \ell(y, \beta)~ \pi_i(\mc N_\gamma(x_0) \times \dd y) \\
			\st  & \pi_i \in \mc M(\mc X \times \mc Y) & \forall i \in [N] \\
			& \DD_{\mc X}(x, \wh x_i) + \DD_{\mc Y}(y, \wh y_i) \leq \rho \quad \forall (x, y) \in \mathrm{supp}(\pi_i) & \forall i \in [N]\\
			& \sum_{i \in [N]} \pi_i(\mc N_\gamma(x_0) \times \mc Y) >0.
		\end{array}
		\right.
		\ee
		We now show that
		problem~\eqref{eq:refor-2} now can be written as
		\be \label{eq:refor-3}
		f(\beta) = \left\{
		\begin{array}{cll}
			\sup & \displaystyle \frac{1}{\sum_{i \in [N]}  \alpha_i} \sum_{i \in [N]}  \alpha_i v_i\opt(\beta) \\
			\st &  \alpha \in [0, 1]^N \\
			& \alpha_i = 1 \text{ if } \DD_{\mc X}(x_0, \wh x_i) + \rho \le \gamma \\
			& \alpha_i = 0 \text{ if } \DD_{\mc X}(x_0, \wh x_i) > \rho + \gamma  \\
			& \sum_{i \in [N]}  \alpha_i > 0,
		\end{array}
		\right.
		\ee
		where the value $v_i\opt(\beta)$ is calculated as
		\[
		v_i\opt(\beta) = \sup\left\{\ell(y_i, \beta)~:y_i\in \mc Y,~\DD_{\mc Y}(y_i, \wh y_i) \leq \rho - \DD_{\mc X}(\wh x_i^p, \wh x_i) \right\}.
		\]
The equivalence between the supremum problems  \eqref{eq:refor-2} and \eqref{eq:refor-3} can be shown in two steps. First, for \eqref{eq:refor-2} $\leq$ \eqref{eq:refor-3}, given any feasible solution of \eqref{eq:refor-2}, one can construct a feasible solution of \eqref{eq:refor-3} using $\alpha_i = \pi_i (\mc N_\gamma(x_0) \times \mc Y)$. For this candidate we have 
\[
\frac{\sum_{i \in [N]} \int_{\mc Y} \ell(y, \beta)~ \pi_i(\mc N_\gamma(x_0) \times \dd y)}{\sum_{i \in [N]} \pi_i (\mc N_\gamma(x_0) \times \mc Y)}  \leq \frac{\sum_{i \in [N]} \alpha_i \ell(y_i\opt, \beta)}{\sum_{i \in [N]} \alpha_i} \,.
\]		
Alternatively, given a feasible solution for \eqref{eq:refor-3}, one can construct the following feasible solution for \eqref{eq:refor-2}: for any $\epsilon > 0$, let $y_i^\epsilon \in \mc Y$ be such that $\DD_{\mc Y}(y_i^\epsilon, \wh y_i) \le \rho - \DD_{\mc X}(x_0, \wh x_i)$ and $\ell (y_i^\epsilon, \beta) \ge v_i\opt(\beta) - \epsilon$, and let
\[
\forall i \in [N] : \quad \pi_i^\epsilon
= \begin{cases}
\delta_{(\wh x_i^p,y_i^\epsilon)}&\text{if } \DD_{\mc X}( x_0, \wh x_i) + \rho \le \gamma,\\
\alpha_i\delta_{(\wh x_i^p,y_i^\epsilon)}+ (1-  \alpha_i)\delta_{(x_i^r,\wh y_i)} & \text{if }  \DD_{\mc X}(x_0, \wh x_i) > \rho + \gamma,\\
\delta_{(\wh x_i, \wh y_i)}&\text{otherwise,}\end{cases}
\]
where $x_i^r$ is any point such that $\DD_{\mc X}(x_i^r, \wh x_i)\leq \rho$ and $x_i^r\notin \mc N_\gamma(x_0)$. Again, this candidate is feasible in \eqref{eq:refor-2} and we have that
\begin{align*}
    f(\beta) &\ge \Sup{\epsilon > 0} \frac{\sum_{i \in [N]} \int_{\mc Y} \ell(y, \beta)~ \pi_i^\epsilon(\mc N_\gamma(x_0) \times \dd y)}{\sum_{i \in [N]} \pi_i^\epsilon (\mc N_\gamma(x_0) \times \mc Y)}  \\
    &\ge \Sup{\epsilon > 0} \frac{\sum_{i \in [N]} \alpha_i (\ell(y_i\opt, \beta) -\epsilon) }{\sum_{i \in [N]} \alpha_i } \\
    &= \frac{\sum_{i \in [N]} \alpha_i \ell(y_i\opt, \beta)  }{\sum_{i \in [N]} \alpha_i } = \frac{\sum_{i \in [N]} \alpha_i v_i\opt(\beta)  }{\sum_{i \in [N]} \alpha_i }.
\end{align*}

 Let $\mc I$ and $\mc I_1$ be the index sets defined as in~\eqref{eq:I-def1}-\eqref{eq:I-def2}, the value $f(\beta)$ is equal to the optimal value of a fractional linear program
		\begin{subequations}
		\begin{align}
		    f(\beta) &= \max ~\left\{\ds \frac{ \sum_{i \in \mc I} v_i\opt(\beta) \alpha_i }{ \sum_{i \in \mc I} \alpha_i} : \alpha \in [0, 1]^N,
		        \; \alpha_i = 1 ~ \forall i \in \mc I_1, \; \sum_{i \in \mc I} \alpha_i > 0 \right\} \label{eq:f-fractional1}\\
		        &= \max ~\left\{\ds \frac{ \sum_{i \in \mc I_1} v_i\opt(\beta) + \sum_{i \in \mc I_2} v_i\opt(\beta) \alpha_i }{ |\mc I_1| + \sum_{i \in \mc I_2} \alpha_i} : \alpha \in [0, 1]^N,\; \alpha_i = 1 ~ \forall i \in \mc I_1,
		        \; |\mc I_1| + \sum_{i \in \mc I_2} \alpha_i > 0 \right\}. \label{eq:f-fractional2}
		\end{align}
		\end{subequations}
		Notice that the objective function and the constraints of~\eqref{eq:f-fractional2} depend only on $\alpha_i$ for $i \in \mc I$. Suppose that $\mc I_1 \not= \emptyset$, Lemma~\ref{lemma:fractional} indicates that the optimal solution $\alpha\opt$ that solves~\eqref{eq:f-fractional2} is
		\be \label{eq:alpha-opt1}
	        \forall i \in \mc I: \quad \alpha_i\opt = \begin{cases}
	            1 & \text{if } i \in \mc I_1, \\
	            1 & \text{if } \ds v_i\opt(\beta) > \frac{\sum_{i \in \mc I_1} v_i\opt(\beta) + \sum_{j: v_j\opt(\beta) > v_i\opt(\beta)} v_j\opt(\beta)}{|\mc I_1| + |\{j: v_j\opt(\beta) > v_i\opt(\beta) \}|},\\
	            0 & \text{otherwise.}
	        \end{cases} 
	    \ee
		Suppose that $\mc I_1 = \emptyset$, then the optimal solution of problem~\eqref{eq:f-fractional2} is
		\[
		    \forall i \in \mc I: \quad \alpha_i\opt = \begin{cases}
		        1 &\text{if } v_i\opt(\beta) \ge \max_{j \in \mc I_2} v_j\opt(\beta), \\
		        0 & \text{otherwise.}
		    \end{cases}
		\]
		Combining the above two cases, we can rewrite the optimal value of $\alpha$ that solves~\eqref{eq:f-fractional2} as in the statement of the theorem. This completes the proof.
	\end{proof}
	
	\begin{proof}[Proof of Corollary~\ref{corol:vi-opt}]
	    Because $\DD_{\mc Y}$ is an absolute distance, we have
	    \[
	        \{ y_i\in \mc Y: |y_i -  \wh y_i| \le \rho - \DD_{\mc X}(\wh x_i^p, \wh x_i) \}\!=\! [\max\{a, \wh y_i - \rho + \DD_{\mc X}(\wh x_i^p, \wh x_i)\}, \min\{b, \wh y_i + \rho - \DD_{\mc X}(\wh x_i^p, \wh x_i)\} ],
	    \]
	    where the equality follows from $\mc Y = [a, b]$. Because both the $\|\cdot\|_2^2$ and the quantile loss functions are convex, the value $v_i\opt(\beta)$ is thus attained at the extreme points of the interval. Calculating the value of $\ell(\cdot, \beta)$ at these two endpoints and taking the maximum between them completes the proof.
	\end{proof}

	Before proving Proposition~\ref{prop:cond-exp}, we need the following two results which asserts the analytical optimal value of maximizing a convex quadratic functions over a norm ball. These results can be found in the literature, the proof is included here for completeness.
	\begin{lemma}[Convex quadratic maximization over a norm ball] \label{lemma:easy}
		For any $\beta \in \R^m$, $\wh y \in \R^m$ and $r \in \R_+$, the following assertions hold.
		\begin{enumerate}[label=(\roman*)]
		\item \label{lemma:easy-1}
		Over a $\|\cdot\|_2$ ball, we have
		\[
		\sup \left\{ \| y - \beta \|_2^2 : ~ \| y - \wh y \|_2^2 \leq r^2  \right\} = (r + \|\wh y - \beta\|_2)^2.
		\]
		\item \label{lemma:easy-2}
		Over a $\|\cdot\|_\infty$ ball, we have
		\[
		\sup \left\{ \| y - \beta \|_2^2 : ~ \| y - \wh y \|_\infty \leq r  \right\} = \sum_{j \in [m]} \max \left\{ (\wh y_j - \beta_j - r)^2, (\wh y_j - \beta_j + r)^2 \right\},
		\]
		where $\beta_j$ and $\wh y_{j}$ denote the $j$-th element of the vector $\beta$ and $\wh y$, respectively.
		\end{enumerate}
	\end{lemma}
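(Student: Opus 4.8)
The plan is to handle the two parts separately, each reducing to an elementary constrained optimization. For part~\ref{lemma:easy-1}, I would substitute $y = \wh y + u$ with $\|u\|_2 \le r$ and use the triangle inequality to get the upper bound: $\|y - \beta\|_2 = \|(\wh y - \beta) + u\|_2 \le \|\wh y - \beta\|_2 + \|u\|_2 \le \|\wh y - \beta\|_2 + r$ for every feasible $y$, so squaring yields $\|y - \beta\|_2^2 \le (r + \|\wh y - \beta\|_2)^2$. For the matching lower bound I would exhibit an explicit maximizer: when $\wh y \ne \beta$, take $y = \wh y + r\,(\wh y - \beta)/\|\wh y - \beta\|_2$, which is feasible and attains equality in both inequalities; when $\wh y = \beta$, any point on the sphere $\|y - \wh y\|_2 = r$ gives objective value $r^2 = (r + 0)^2$. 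Hence the supremum is attained and equals $(r + \|\wh y - \beta\|_2)^2$.

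For part~\ref{lemma:easy-2}, the key observation is that the feasible set $\{y : \|y - \wh y\|_\infty \le r\}$ is the box $\prod_{j \in [m]} [\wh y_j - r, \wh y_j + r]$, while the objective $\|y - \beta\|_2^2 = \sum_{j \in [m]} (y_j - \beta_j)^2$ is separable across coordinates. The supremum therefore decomposes as $\sum_{j \in [m]} \sup_{y_j \in [\wh y_j - r,\, \wh y_j + r]} (y_j - \beta_j)^2$. Each scalar subproblem maximizes a convex quadratic over a compact interval, so the maximum is attained at an endpoint, giving $\max\{(\wh y_j - r - \beta_j)^2,\, (\wh y_j + r - \beta_j)^2\}$; summing over $j$ delivers the claimed formula.

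Both parts are classical, so I do not anticipate a genuine obstacle; the only points requiring care are the degenerate case $\wh y = \beta$ in part~\ref{lemma:easy-1}, where the normalized direction $(\wh y - \beta)/\|\wh y - \beta\|_2$ is undefined and a boundary point must be chosen directly, and the explicit invocation in part~\ref{lemma:easy-2} of the fact that a convex function on a compact convex polytope attains its maximum at an extreme point, which is what licenses restricting each coordinate to $\wh y_j \pm r$.
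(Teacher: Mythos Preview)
Your proposal is correct and follows essentially the same approach as the paper: for part~\ref{lemma:easy-1} you use the triangle inequality for the upper bound and exhibit the explicit maximizer $y=\wh y + r(\wh y-\beta)/\|\wh y-\beta\|_2$, and for part~\ref{lemma:easy-2} you decompose the separable objective over the box into $m$ independent scalar problems and evaluate at the endpoints. Your treatment is in fact slightly more careful, since you explicitly address the degenerate case $\wh y=\beta$ in part~\ref{lemma:easy-1}, which the paper's proof leaves implicit.
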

	\begin{proof}[Proof of Lemma~\ref{lemma:easy}]
		We first prove Assertion~\ref{lemma:easy-1}. First, the optimal value is upper bounded by $(r+\|\wh y - \beta\|_2)^2$ because
\[\|y-\beta\|_2 \leq \|y-\wh y\|_2 + \|\wh y - \beta\|_2 \leq r+\|\wh y - \beta\|_2\]
by triangle inequality.
Yet, it is equal to that amount since that amount is attained when $y=\wh y + r(\wh y-\beta)/\|\wh y - \beta\|_2$.	
		
		Consider now Assertion~\ref{lemma:easy-2}. Using a change of variables $z \leftarrow y - \beta$ and a change of parameters $w \leftarrow \wh y - \beta$, we find
		\be \label{eq:geometric-refor-2}
		\sup \left\{ \| y - \beta \|_2^2 : ~ \| y - \wh y \|_\infty \leq r \right\} = \max \left\{ \| z \|_2^2: \| z - w \|_\infty \leq r
		\right\},
		\ee
		where the maximization operators are justified by Weierstrass' maximum value theorem~\cite[Theorem~2.43]{ref:aliprantis06hitchhiker} because the feasible set is compact and the objective function is continuous. By extending the norm constraint into the vector form, we have the equivalence
		\[
		\max \left\{ \| z \|_2^2:  w - r \mathbbm{1}_m \leq z \leq w + r \mathbbm{1}_m \right\},
		\]
		where the inequalities in the constraints are understood as element-wise inequalities, and $\mathbbm{1}_m$ is an $m$-dimensional vector of ones. This maximization problem is separable in the decision variables and can be decomposed into $m$ independent univariate subproblems of the form
		\[
		\max\left\{ z_j^2 : w_j - r \leq z_j \leq w_j + r\right\} 
		\]
		for each $j \in [m]$. It is easy to verify that the optimal value of each univariate subproblem is equal to
		\[
		\max \left\{ (w_j - r)^2, (w_j + r)^2 \right\},
		\]
		and summing up the optimal values over $j$ completes the proof.
	\end{proof}

	We are now ready to prove Proposition~\ref{prop:cond-exp}.
	
	\begin{proof}[Proof of Proposition~\ref{prop:cond-exp}]
	    Following from equation~\eqref{eq:f-fractional1} in the proof of Theorem~\ref{thm:infty-refor}, we have
	    \[
	    f(\beta) = \max ~\left\{\ds \frac{ \sum_{i \in \mc I} v_i\opt(\beta) \alpha_i }{ \sum_{i \in \mc I} \alpha_i} : \alpha \in [0, 1]^N,
		        \; \alpha_i = 1 ~ \forall i \in \mc I_1, \; \sum_{i \in \mc I} \alpha_i > 0 \right\}
		 \]

	    By applying the Charnes-Cooper transformation~\cite{ref:charnes1962programming} with
		\[
		    z_i = \frac{\alpha_i}{\sum_{i \in \mc I} \alpha_i }, \quad \text{and} \quad t = \frac{1}{\sum_{i \in \mc I} \alpha_i}
		\]
		to reformulate this fractional linear problem, we have 
		\begin{align*}
			f(\beta) &= \left\{
			    \begin{array}{cll}
			        \max & \sum_{i \in \mc I} v_i\opt(\beta) z_i \\
			        \st & \sum_{i \in \mc I} z_i = 1, ~ t \ge 0 \\
			        &z_i - t = 0 & \forall i \in \mc I_1 \\
			        & 0 \le z_i \le t & \forall i \in \mc I_2. 			    \end{array}
			\right. \\
			&= \left\{
			    \begin{array}{cll}
			        \min & \lambda \\
			        \st & \lambda \in \R,~u_i \in \R~\forall i \in \mc I_1,~u_i \in \R_+~\forall i \in \mc I_2 \\
			        & \lambda + u_i \ge v_i\opt(\beta) \quad \forall i \in \mc I \\
			        & \sum_{i \in \mc I} u_i \le 0,
			    \end{array}
			\right. 
		\end{align*}
		where the second equality follows from linear programming duality. Using the last minimization reformulation of $f(\beta)$, problem~\eqref{eq:local_DRO} is now equivalent to
		\[
		\Min{\beta}~f(\beta) = 
		\left\{
			    \begin{array}{cll}
			        \min & \lambda \\
			        \st & \beta \in \R^m,~\lambda \in \R,~u_i \in \R~\forall i \in \mc I_1,~u_i \in \R_+~\forall i \in \mc I_2 \\
			        & \lambda + u_i \ge v_i\opt(\beta) \quad \forall i \in \mc I \\
			        & \sum_{i \in \mc I} u_i \le 0,
			    \end{array}
			\right. 
		\]
		
		When $\DD_{\mc Y}$ is a 2-norm, each value $v_i\opt (\beta)$ calculated from~\eqref{eq:vi-def} becomes
		\[
		v_i\opt(\beta) = \sup \left\{ \| y - \beta \|_2^2 : ~ \| y - \wh y_i \|_2 \leq \rho - \DD_{\mc X}(\wh x_i^p, \wh x_i)  \right\} \qquad \forall i \in [N].
		\]
		For any $i \in \mc I$, the value $v_i\opt(\beta)$ is finite and $v_i\opt(\beta)$ can be re-expressed by exploiting Lemma~\ref{lemma:easy}\ref{lemma:easy-1} as
		\[
		v_i\opt(\beta) = \left( \rho - \DD_{\mc X}(\wh x_i^p, \wh x_i) + \| \wh y_i - \beta \|_2 \right)^2.
		\]
		Problem~\eqref{eq:local_DRO} is now equivalent to
		\begin{align}
		    \begin{array}{cll}
		        \min & \lambda \\
		        \st & \beta \in \R^m,\;\lambda \in \R,\;u_i \in \R~\forall i \in \mc I_1,\;u_i \in \R_+~\forall i \in \mc I_2 \\
		        & \lambda + u_i \ge \left( \rho - \DD_{\mc X}(\wh x_i^p, \wh x_i) + \| \wh y_i - \beta \|_2 \right)^2 \quad \forall i \in \mc I \\
		        & \sum_{i \in \mc I} u_i \le 0.
		    \end{array}
		\end{align}
		To obtain a second-order cone program formulation, it now suffices to add the hypergraph formulation $t_i \ge \| \wh y_i - \beta \|_2$ with $t_i \ge 0$, and reformulate the quadratic constraint into a second-order cone constraint using results from~\cite[Section~2]{ref:alizadeh2003second}. This completes the proof for claim~\ref{item:cond-exp-2}. 
		
		We now proceed to prove claim~\ref{item:cond-exp-infty}. When $\DD_{\mc Y}$ is the $\infty$-norm, each value $v_i\opt(\beta)$ becomes
		\begin{align*}
		v_i\opt(\beta) = \sup \left\{ \| y - \beta \|_2^2 : ~ \| y - \wh y_i \|_\infty \leq \rho - \DD_{\mc X}(\wh x_i^p, \wh x_i)  \right\} \qquad \forall i \in [N].
		\end{align*}
		For any $i \in \mc I$, the value $v_i\opt(\beta)$ is finite and $v_i\opt(\beta)$ can be re-expressed using  Lemma~\ref{lemma:easy}\ref{lemma:easy-2} as
		\[
		v_i\opt(\beta) = \sum_{j \in [m]} \max \left\{ (\wh y_{ij} - \beta_j - \rho + \DD_{\mc X}(\wh x_i^p, \wh x_i))^2, (\wh y_{ij} - \beta_j + \rho - \DD_{\mc X}(\wh x_i^p, \wh x_i))^2 \right\}.
		\] 
		By adding auxiliary variables $T_{ij}$ with the constraints
		\[
		(\wh y_{ij} - \beta_j - \rho + \DD_{\mc X}(\wh x_i^p, \wh x_i))^2 \leq T_{ij}^2, \quad \text{and} \quad (\wh y_{ij} - \beta_j + \rho - \DD_{\mc X}(\wh x_i^p, \wh x_i))^2 \le T_{ij}^2,
		\]
		problem~\eqref{eq:local_DRO} is now equivalent to
		\[
		   \begin{array}{cll}
		        \min & \lambda \\
		        \st & \beta\in \R^m,\;\lambda \in \R,\; T \in \R_+^{|\mc I| \times m},\;u_i \in \R \;\forall i \in \mc I_1,\;u_i \in \R_+\;\forall i \in \mc I_2 \\
		        & \sum_{i \in \mc I} u_i \le 0\\
		        & \lambda + u_i \ge \sum_{j \in [m]} T_{ij}^2 \quad \forall i \in \mc I \\
		        & (\wh y_{ij} - \beta_j - \rho + \DD_{\mc X}(\wh x_i^p, \wh x_i))^2 \leq T_{ij}^2 &\forall (i, j) \in \mc I \times [m] \\
		       &(\wh y_{ij} - \beta_j + \rho - \DD_{\mc X}(\wh x_i^p, \wh x_i))^2 \le T_{ij}^2  &\forall (i, j) \in \mc I \times [m].
		    \end{array}
		\]
		The last two constraints can be re-expressed as linear constraints of the form
		\[
		    \begin{array}{ll}
		    	 -T_{ij} \le \wh y_{ij} - \beta_j - \rho + \DD_{\mc X}(\wh x_i^p, \wh x_i) \leq T_{ij} &\forall (i, j) \in \mc I \times [m] \\
		       - T_{ij} \le \wh y_{ij} - \beta_j + \rho - \DD_{\mc X}(\wh x_i^p, \wh x_i) \le T_{ij}  &\forall (i, j) \in \mc I \times [m].
		     \end{array}
		\]
		Formulating the quadratic constraint $\lambda + u_i \ge \sum_{j \in [m]} T_{ij}^2$ using~\cite[Section~2]{ref:alizadeh2003second} completes the proof.
	\end{proof}

		\begin{proof}[Proof of Proposition~\ref{prop:gradient}]
	For the purpose of this proof, define the following sets
	\[
	    \mc Y_{i} \Let \left\{y_i \in \mc Y: \DD_{\mc Y}(y_i, \wh y_i) \leq \rho - \DD_{\mc X}(\wh x_i^p, \wh x_i) \right\} \qquad \forall i \in \mc I.
	\]
	Because $\DD_{\mc Y}$ is coercive and continuous, each set $\mc Y_i$ is compact. Because the loss function is continuous, there thus exists $y_i\opt$ satisfying $y_i\opt \in \mc Y_i$ and $\ell(y_i\opt, \beta) = v_i\opt(\beta)$ for any $i \in \mc I$. Following from Equation~\eqref{eq:f-fractional1} in the proof of Theorem~\ref{thm:infty-refor}, we have
	\begin{align*}
	    f(\beta) &= \max ~\left\{\ds \frac{ \sum_{i \in \mc I} v_i\opt(\beta) \alpha_i }{ \sum_{i \in \mc I} \alpha_i} : \alpha \in [0, 1]^N,
		        \; \alpha_i = 1 ~ \forall i \in \mc I_1, \; \sum_{i \in \mc I} \alpha_i > 0 \right\} \\
		        &= \max ~\left\{\ds \frac{ \sum_{i \in \mc I} \ell(y_i, \beta) \alpha_i }{ \sum_{i \in \mc I} \alpha_i} : \alpha \in [0, 1]^N,
		        \; \alpha_i = 1 ~ \forall i \in \mc I_1, \; \sum_{i \in \mc I} \alpha_i > 0,\; y_i \in \mc Y_i~\forall i \in \mc I \right\}.
	\end{align*}
	If $\mc I_1 = \emptyset$, then we have
	\[
	    f(\beta) = \ell(y_{i\opt}, \beta)  \quad \forall i\opt \in \arg \max_{i \in \mc I_2} v_i\opt(\beta),
	\]
	and a subgradient of $f$ is $\partial f(\beta) = \partial_\beta \ell(y_{i\opt}, \beta)$ for any $i\opt \in \arg \max_{i \in \mc I_2} v_i\opt(\beta)$. By incorporating the optimal value of $\alpha$ in the statement of Theorem~\ref{thm:infty-refor}, we have  $\partial f(\beta) = \alpha_i \partial_\beta \ell(y_i\opt, \beta)$.
	
	If $\mc I_1 \not= \emptyset$, then we have
	\begin{align*}
	    f(\beta) &= \max ~\left\{\ds \frac{ \sum_{i \in \mc I_1} v_i\opt(\beta) + \sum_{i \in \mc I_2} v_i\opt(\beta) \alpha_i }{ |\mc I_1| + \sum_{i \in \mc I_2} \alpha_i} : \alpha \in [0, 1]^N,\; \alpha_i = 1 ~ \forall i \in \mc I_1 \right\} \\
	    &= \max ~\left\{\ds \frac{ \sum_{i \in \mc I_1} \ell(y_i, \beta) + \sum_{i \in \mc I_2} \ell(y_i, \beta) \alpha_i }{ |\mc I_1| + \sum_{i \in \mc I_2} \alpha_i} : \alpha \in [0, 1]^N,\; \alpha_i = 1 ~ \forall i \in \mc I_1,\; y_i \in \mc Y_i ~\forall i \in \mc I \right\}
	\end{align*}
	Notice that the function 
    \[
    \beta \mapsto \ds \frac{ \sum_{i \in \mc I_1} \ell(y_i, \beta) + \sum_{i \in \mc I_2} \ell(y_i, \beta) \alpha_i }{ |\mc I_1| + \sum_{i \in \mc I_2} \alpha_i}
    \] 
    is convex for any feasible value of $(\alpha, y)$ in the above optimization problem. Moreover, by Tychonoff's theorem~\cite[Theorem~2.61]{ref:aliprantis06hitchhiker}, the feasible set of the above  optimization problem is a compact set in the product topology. One can now apply \cite[Proposition~A.22]{ref:bertsekas1971control} to conclude that a subgradient of $f$ in this case is
    \[
        \partial f(\beta) = \frac{ \sum_{i \in \mc I_1} \partial_\beta \ell(y_i, \beta) + \sum_{i \in \mc I_2} \partial_\beta \ell(y_i, \beta) \alpha_i }{ |\mc I_1| + \sum_{i \in \mc I_2} \alpha_i}.
    \]
    Combining the two cases, we have the postulated result.
	\end{proof}

	\subsection{Proofs of Section~\ref{sect:guarantee}}

    \begin{proof}[Proof of Proposition~\ref{prop:finite}]
        Under the conditions of the proposition, we have $\PP(X \in \mc N_{\gamma}(x_0)) > 0$ because $\PP$ admits a density, and that $\mc N_{\gamma}(x_0) \cap \mc X$ is a set with non-empty interior for any $\gamma > 0$. The proof now follows trivially from~\cite[Theorem~1.1]{ref:trillos2015rate}. Indeed, under the conditions of the proposition, with probability of at least $1 - O(N^{-c})$, we have $\PP \in \mbb B_\rho^\infty$, and hence the bound follows.
    \end{proof}
    \begin{proof}[Proof of Example~\ref{example:non-consistency}]
    For the purpose of this proof, we let $\PP^{\infty}
        = \PP\otimes\PP\otimes\cdots$ be the joint distribution of 
        $(\wh x_1, \wh y_1),
        (\wh x_2, \wh y_2),\cdots$.
    The selection of parameter $\gamma = 0$ implies that $\mc I$ = $\mc I_2$, and for any fixed $\rho>0$ we have $\PP^{\infty}
    \left(\lim_{N\rightarrow\infty}
    |\mc I| = +\infty
    \right) = 1$ by Borel-Cantelli lemma. In this example, the DRO problem is feasible if $\mc I$ is nonempty, and we have an explicit optimal solution
    \[
    \beta_N\opt = 
    \frac{1}{2}
    \min_{i\in\mc I} 
    \left\{
    \wh y_i - \rho + 
    \DD_{\mc X}(\wh x_i, x_0)
    \right\}
    + 
    \frac{1}{2}
    \max_{i\in\mc I} 
    \left\{
    \wh y_i + \rho - 
    \DD_{\mc X}(\wh x_i, x_0)
    \right\}
    \]
    Notice that with probability $1$ we have
    \[
    \min_{i\in\mc I} 
    \left\{
    \wh y_i - \rho + 
    \DD_{\mc X}(\wh x_i, x_0)
    \right\} \geq -\rho
    \mbox{   and   }
    \max_{i\in\mc I} 
    \left\{
    \wh y_i + \rho - 
    \DD_{\mc X}(\wh x_i, x_0)
    \right\}\geq 
    \max_{i\in\mc I} 
    \left\{
    \wh y_i\right\}.
    \]
    Consequently we have 
    $\beta_N\opt\geq
    \frac{1}{2}\max_{i\in\mc I} 
    \left\{\wh y_i\right\} - \frac{1}{2}\rho.$ For all $y>0$, we have 
    \begin{align*}
    \PP^{\infty}
    \left(\lim_{N\rightarrow\infty}
    \beta_{N}\opt > y
    \right)
    &\geq 
    \PP^{\infty}
    \left(\lim_{N\rightarrow\infty}
    \max_{i\in\mc I}\left\{\wh y_i\right\} > 2y+\rho
    \right)
    = \lim_{N\rightarrow\infty}
    \PP^{\infty}
    \left(
    \max_{i\in\mc I}\left\{\wh y_i\right\} > 2y+\rho
    \right)\\
    & = 
    \lim_{N\rightarrow\infty}
    1-
    \PP(Y\leq2y+\rho)^{|\mc I|}= 1.
    \end{align*} 
    Let $y$ tend to infinity concludes the proof.
    \end{proof}

    Before proving Proposition~\ref{prop:max-var}, we first present the following minimax result.
	\begin{lemma}[Minimax result] \label{lemma:minimax}
	    Suppose that $\ell(y, \cdot)$ is convex and coercive for any $y \in \mc Y$, and that $\DD_{\mc Y}(\cdot, \wh y)$ is convex and coercive for any $\wh y$. For any $\rho \ge \min_{i \in [N]} \kappa_{i, \gamma}$, we have
	    \begin{align*}
	        &\Min{\beta \in \R^m} \sup_{\QQ \in \mbb B_\rho^\infty, \QQ (X \in \mc N_{\gamma}(x_0)) > 0 } \EE_{\QQ} \big[ \ell(Y, \beta) | X \in \mc N_{\gamma}(x_0) \big] \\
	        & \hspace{5cm}=  \sup_{\QQ \in \mbb B_\rho^\infty, \QQ (X \in \mc N_{\gamma}(x_0)) > 0 } \Min{\beta \in \R^m} \EE_{\QQ} \big[ \ell(Y, \beta) | X \in \mc N_{\gamma}(x_0) \big].
	    \end{align*}
	\end{lemma}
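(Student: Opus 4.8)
The plan is to eliminate the joint measure $\QQ$ in favour of the conditional law of $Y$ given $X\in\mc N_\gamma(x_0)$ and then invoke Sion's minimax theorem. The inequality ``$\ge$'' is automatic weak duality, so only ``$\le$'' requires work. Recall from the proof of Theorem~\ref{thm:infty-refor} --- in particular the passage from \eqref{eq:refor-2} to \eqref{eq:refor-3}, together with the fibrewise expansion of $v_i\opt$ used in the proof of Proposition~\ref{prop:gradient} --- that a measure $\QQ\in\mbb B_\rho^\infty$ with $\QQ(X\in\mc N_\gamma(x_0))>0$ induces a conditional law of $Y$ given $X\in\mc N_\gamma(x_0)$ of the form $\nu=\sum_{i\in\mc I}w_i\mu^i$, where $\mu^i\in\mc M(\mc Y_i)$ with $\mc Y_i\Let\{y\in\mc Y:\DD_{\mc Y}(y,\wh y_i)\le\rho-\DD_{\mc X}(\wh x_i^p,\wh x_i)\}$ and the weight vector ranges over the ``capped simplex''
\[
W\Let\Big\{w\in\R_+^{\mc I}:\ \sum_{i\in\mc I}w_i=1,\ w_i=w_j\ \forall i,j\in\mc I_1,\ w_i\le w_j\ \forall i\in\mc I_2,\,j\in\mc I_1\Big\};
\]
moreover the explicit constructions in the proof of Theorem~\ref{thm:infty-refor} show that \emph{every} $\nu$ of this form is realized by some feasible $\QQ$. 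Writing $\mc M_0\Let\{\sum_{i\in\mc I}w_i\mu^i:\,w\in W,\,\mu^i\in\mc M(\mc Y_i)\}$ and using $\sup_{\mu\in\mc M(S)}\EE_\mu[\ell(Y,\beta)]=\sup_{y\in S}\ell(y,\beta)$, the two sides of the lemma become, respectively,
\[
\Min{\beta\in\R^m}\ \sup_{\nu\in\mc M_0}\EE_\nu[\ell(Y,\beta)]
\qquad\text{and}\qquad
\sup_{\nu\in\mc M_0}\ \Min{\beta\in\R^m}\EE_\nu[\ell(Y,\beta)],
\]
so it suffices to exchange $\Min{\beta}$ and $\sup_\nu$ for $g(\beta,\nu)\Let\EE_\nu[\ell(Y,\beta)]$.

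To this end I would verify the hypotheses of Sion's minimax theorem. The set $\mc M_0$ is convex: a convex combination of $\sum_i w_i\mu^i$ and $\sum_i \tilde w_i\tilde\mu^i$ is again of the same form with weights in the convex set $W$ and fibre measures in the convex sets $\mc M(\mc Y_i)$. It is weak-$*$ compact: continuity and coercivity of $\DD_{\mc Y}(\cdot,\wh y_i)$ make each $\mc Y_i$ compact, hence each $\mc M(\mc Y_i)$ is weak-$*$ compact, and $\mc M_0$ is the continuous image of the compact set $W\times\prod_{i\in\mc I}\mc M(\mc Y_i)$ under the mixing map. The function $g$ is affine and weak-$*$ continuous in $\nu$ (integration of the bounded continuous $\ell(\cdot,\beta)$), and convex and finite --- hence continuous --- in $\beta$ on all of $\R^m$. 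Finally, coercivity of $\ell(y,\cdot)$ together with compactness of $K\Let\bigcup_{i\in\mc I}\mc Y_i$ makes $g(\cdot,\nu)$ coercive in $\beta$ uniformly in $\nu\in\mc M_0$: since $g(\beta,\nu)\ge\underline\ell(\beta)\Let\inf_{y\in K}\ell(y,\beta)$, which tends to $+\infty$ as $\|\beta\|\to\infty$ (a recession-function argument from joint continuity of $\ell$ and fibrewise coercivity), any minimizer $\beta\opt(\nu)$ satisfies $\underline\ell(\beta\opt(\nu))\le g(0,\nu)\le\sup_{y\in K}\ell(y,0)<\infty$, hence lies in a fixed compact ball $B_R$. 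Replacing $\R^m$ by $B_R$ changes neither side, and Sion's theorem applied to $g$ on the compact convex sets $B_R$ and $\mc M_0$ yields the identity.

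The substance of the proof is the first step. A minimax theorem cannot be applied to $(\beta,\QQ)$ directly because $\QQ\mapsto\EE_\QQ[\ell(Y,\beta)\,|\,X\in\mc N_\gamma(x_0)]$ is a ratio and hence not concave, while parametrizing by the pair $(w,(\mu^i))$ reintroduces the bilinear terms $w_i\EE_{\mu^i}[\ell(Y,\beta)]$; the resolution is to collapse $\sum_i w_i\mu^i$ into a single measure $\nu$, on which $g$ is affine, and to recognise --- using the analysis already carried out for Theorem~\ref{thm:infty-refor} --- that the attainable $\nu$'s form exactly the convex, weak-$*$ compact set $\mc M_0$. Everything after that is routine; the only additional inputs are the two coercivity hypotheses, with coercivity of $\DD_{\mc Y}$ supplying compactness of $\mc M_0$ and coercivity of $\ell(y,\cdot)$ confining the inner minimizers to a common compact ball, which are precisely what is needed to put the problem into the form Sion's theorem requires.
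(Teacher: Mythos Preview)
Your proof is correct and follows the same high-level strategy as the paper: pass from the joint measure $\QQ$ to the induced conditional law of $Y$ given $X\in\mc N_\gamma(x_0)$, restrict $\beta$ to a fixed compact ball using coercivity, and invoke Sion's minimax theorem. The difference lies in how convexity of the set of attainable conditional laws is obtained. You characterise this set explicitly as $\mc M_0=\{\sum_{i\in\mc I}w_i\mu^i:w\in W,\ \mu^i\in\mc M(\mc Y_i)\}$ by recycling the structural analysis from Theorem~\ref{thm:infty-refor}, and read off convexity (and weak-$*$ compactness) directly from this parametrisation. The paper instead defines the conditional ambiguity set $\mc B_{x_0,\gamma}(\mbb B_\rho^\infty)$ abstractly and proves its convexity by a self-contained argument (Lemma~\ref{lemma:convexity}): given $\QQ^0,\QQ^1\in\mbb B_\rho^\infty$ with conditionals $\mu_0^0,\mu_0^1$, a carefully \emph{reweighted} mixture $\theta\QQ^1+(1-\theta)\QQ^0$---with $\theta$ chosen to undo the normalisation hidden in the conditioning---yields exactly the convex combination $\lambda\mu_0^1+(1-\lambda)\mu_0^0$. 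Your route is more concrete and economical once Theorem~\ref{thm:infty-refor} is available; the paper's route is more modular, does not rely on that theorem, and the reweighting trick is of independent interest for conditional DRO problems. Two minor remarks: the paper does not need (and does not prove) compactness of the conditional set, since Sion only requires compactness on one side, supplied by the ball $B_R$; and your claim that $\underline\ell(\beta)=\inf_{y\in K}\ell(y,\beta)\to\infty$ is correct but deserves the one-line justification you gesture at---if $\ell(y_n,\beta_n)\le C$ with $\|\beta_n\|\to\infty$ and $y_n\to y^*\in K$, then convexity of $\ell(y_n,\cdot)$ bounds $\ell(y_n,R\beta_n/\|\beta_n\|)$ uniformly, and passing to the limit contradicts coercivity of $\ell(y^*,\cdot)$.
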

	
	To facilitate the proof of Lemma~\ref{lemma:minimax}, we define the following conditional ambiguity set induced by $\mbb B_\rho^\infty$ as
	\be \label{eq:Bx-def}
	\mc B_{x_0, \gamma}(\mbb B_\rho^\infty) \Let \left\{ \mu_{0} \in \mc M(\mc Y): \!\!
	\begin{array}{l}
		\exists \QQ \in \mbb B_\rho^\infty,~\QQ(\mc N_\gamma(x_0) \times \mc Y) > 0\\
		\QQ(\mc N_\gamma(x_0) \times A ) = \mu_{0}(A) \, \QQ(\mc N_\gamma(x_0) \times \mc Y) ~~ \forall A\subseteq \mc Y \text{ measurable} 
	\end{array}
	\right\},
	\ee
	where the last constraint defining the set $\mc B_{x_0, \gamma}(\mbb B_\rho^\infty)$ is from the dis-integration of the joint measure into a marginal distribution and the corresponding conditional distributions~\cite[Theorem~9.2.2]{ref:stroock2011probability}.
	
	The proof of Lemma~\ref{lemma:minimax} relies on the following two results which assert the convexity of the joint ambiguity set $\mbb B_\rho^\infty$ and its induced conditional ambiguity set $\mc B_{x_0, \gamma}(\mbb B_\rho^\infty)$.
	
	\begin{lemma}[Convexity of $\mbb B_\rho^\infty$] \label{lemma:convexity-joint}
	    The ambiguity set $\mbb B_\rho^\infty$ is convex.
	\end{lemma}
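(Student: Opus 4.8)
The plan is to reduce everything to the explicit description of $\mbb B_\rho^\infty$ that was already extracted in the proof of Proposition~\ref{prop:infty-vanilla-set}. There it was shown that $\QQ \in \mbb B_\rho^\infty$ if and only if $\QQ = N^{-1}\sum_{i\in[N]}\pi_i$ for some probability measures $\pi_i \in \mc M(\mc X\times\mc Y)$ satisfying $\DD_{\mc X}(x,\wh x_i)+\DD_{\mc Y}(y,\wh y_i)\le\rho$ for every $(x,y)\in\mathrm{supp}(\pi_i)$ and every $i\in[N]$. So I would take two arbitrary members $\QQ^1,\QQ^2\in\mbb B_\rho^\infty$, fix decompositions $\{\pi_i^1\}_{i\in[N]}$ and $\{\pi_i^2\}_{i\in[N]}$ as above, fix $\theta\in[0,1]$, and define the mixtures $\pi_i \Let \theta\pi_i^1+(1-\theta)\pi_i^2 \in \mc M(\mc X\times\mc Y)$. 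Linearity gives $N^{-1}\sum_{i}\pi_i = \theta\QQ^1+(1-\theta)\QQ^2$, so it only remains to verify that each $\pi_i$ respects the support constraint, after which the characterization applied in the reverse direction yields $\theta\QQ^1+(1-\theta)\QQ^2\in\mbb B_\rho^\infty$.

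The support check is the one point needing a moment's care. For $\theta\in(0,1)$ a Borel set is $\pi_i$-null exactly when it is simultaneously $\pi_i^1$-null and $\pi_i^2$-null, hence $\mathrm{supp}(\pi_i)\subseteq\mathrm{supp}(\pi_i^1)\cup\mathrm{supp}(\pi_i^2)$; the degenerate cases $\theta\in\{0,1\}$ are immediate. Since the inequality $\DD_{\mc X}(x,\wh x_i)+\DD_{\mc Y}(y,\wh y_i)\le\rho$ holds on $\mathrm{supp}(\pi_i^1)$ and on $\mathrm{supp}(\pi_i^2)$, it holds on their union and therefore on $\mathrm{supp}(\pi_i)$. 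This closes the argument.

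An equivalent route, avoiding the reformulation, is to establish joint quasiconvexity of $\Wass_\infty$ directly via mixtures of couplings: if $\pi^k\in\Pi(\QQ^k,\Pnom)$ achieves $\mathrm{ess}\sup_{\pi^k}\DD\le\rho+\eps$, then $\theta\pi^1+(1-\theta)\pi^2\in\Pi(\theta\QQ^1+(1-\theta)\QQ^2,\Pnom)$ because marginals are linear in the coupling and the reference marginal is the fixed measure $\Pnom$ in both; moreover $\{\DD>\rho+\eps\}$ is null under both $\pi^1$ and $\pi^2$, hence under their mixture, so $\Wass_\infty(\theta\QQ^1+(1-\theta)\QQ^2,\Pnom)\le\rho+\eps$, and letting $\eps\downarrow 0$ finishes the proof. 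I expect no genuine obstacle here: the statement is elementary, and the only subtlety worth spelling out is the behavior of null sets (equivalently, supports) under convex combinations of measures, together with the observation that mixing couplings leaves the $\Pnom$-marginal untouched.
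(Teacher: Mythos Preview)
Your proposal is correct and follows essentially the same approach as the paper: both use the decomposition $\QQ = N^{-1}\sum_i \pi_i$ derived in the proof of Proposition~\ref{prop:infty-vanilla-set}, mix the $\pi_i$'s componentwise, and verify the support constraint on the mixtures (the paper states $\mathrm{supp}(\pi_i^\lambda)=\mathrm{supp}(\pi_i^0)\cup\mathrm{supp}(\pi_i^1)$ while you only claim the inclusion $\subseteq$, which is all that is needed and is more carefully justified in your write-up). Your alternative route via joint quasiconvexity of $\Wass_\infty$ through mixed couplings is a pleasant bonus that the paper does not include; it has the advantage of not relying on the discreteness of $\Pnom$ and would work for any fixed reference measure.
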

	\begin{proof}[Proof of Lemma~\ref{lemma:convexity-joint}]
	    Because the nominal probability measure is an empirical measure, the ambiguity set $\mbb B_\rho^\infty$ can be represented as
	    \[
	    \mbb B_\rho^\infty = \left\{
	        \QQ \in \mc M(\mc X \times \mc Y): 
	        \begin{array}{l}
	            \exists \pi_i \in \mc M(\mc X \times \mc Y) ~\forall i \in [N] \text{ such that :} \\
	            \QQ = N^{-1} \sum_{i \in [N]} \pi_i,~ \sum_{i \in [N]} \pi_i(\mc N_\gamma(x_0) \times \mc Y) > 0 \\
	            \DD_{\mc X}(x, \wh x_i) + \DD_{\mc Y}(y, \wh y_i) \leq \rho \quad \forall (x, y) \in \mathrm{supp}(\pi_i) \quad \forall i \in [N]
	        \end{array}
	    \right\}.
	    \]
	    Pick any arbitrary $\QQ^0$ and $\QQ^1$ from $\mbb B_\rho^\infty$. Associated with $\QQ^j$, $j \in \{0, 1\}$ is a collection of probability measures $\{\pi_i^j\} \in \mc M(\mc X \times \mc Y)^N$ satisfying
	    \[
	        \left\{
	            \begin{array}{l}
	                \QQ^j = N^{-1} \sum_{i \in [N]} \pi_i^j,~ \sum_{i \in [N]} \pi_i^j(\mc N_\gamma(x_0) \times \mc Y) > 0 \\
	                \DD_{\mc X}(x, \wh x_i) + \DD_{\mc Y}(y, \wh y_i) \leq \rho \quad \forall (x, y) \in \mathrm{supp}(\pi_i^j) \quad \forall i \in [N].
	            \end{array}
	        \right.
	    \]
	    Consider any convex combination $\QQ^\lambda = \lambda \QQ^1 + (1-\lambda) \QQ^0$ for $\lambda \in (0, 1)$. It is easy to verify that the joint measure $\pi_i^\lambda = \lambda \pi_i^1 + (1- \lambda) \pi_i^0$ for any $i \in [N]$ satisfies
	    \[
	        \left\{
	            \begin{array}{l}
	                \QQ^\lambda = N^{-1} \sum_{i \in [N]} \pi_i^\lambda,~ \sum_{i \in [N]} \pi_i^\lambda(\mc N_\gamma(x_0) \times \mc Y) > 0 \\
	                \DD_{\mc X}(x, \wh x_i) + \DD_{\mc Y}(y, \wh y_i) \leq \rho \quad \forall (x, y) \in \mathrm{supp}(\pi_i^\lambda) \quad \forall i \in [N],
	            \end{array}
	        \right.
	    \]
	    where the last constraint is satisfied by noticing that $\mathrm{supp}(\pi_i^\lambda) = \mathrm{supp}(\pi_i^0) \cup \mathrm{supp}(\pi_i^1)$.
	    This observation implies that $\QQ^\lambda \in \mbb B_\rho^\infty$. 
	\end{proof}

	\begin{lemma}[Convexity of $\mc B_{x_0, \gamma}(\mbb B_\rho^\infty)$] \label{lemma:convexity}
	     The conditional ambiguity set $\mc B_{x_0, \gamma}(\mbb B^\infty_\rho)$ is convex.
	\end{lemma}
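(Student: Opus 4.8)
The plan is to exploit the convexity of the joint ambiguity set (Lemma~\ref{lemma:convexity-joint}) together with a reparametrization trick needed because the conditional distribution of a mixture is \emph{not} the mixture of the conditional distributions. Fix $\lambda \in (0,1)$ and take two elements $\mu_0^0, \mu_0^1 \in \mc B_{x_0,\gamma}(\mbb B_\rho^\infty)$; by definition there exist witnesses $\QQ^0, \QQ^1 \in \mbb B_\rho^\infty$ with $p_j \Let \QQ^j(\mc N_\gamma(x_0) \times \mc Y) > 0$ and $\QQ^j(\mc N_\gamma(x_0) \times A) = \mu_0^j(A)\, p_j$ for all measurable $A \subseteq \mc Y$ and $j \in \{0,1\}$. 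For a parameter $\theta \in (0,1)$ to be chosen, set $\QQ^\theta \Let \theta \QQ^1 + (1-\theta)\QQ^0$. By Lemma~\ref{lemma:convexity-joint} we have $\QQ^\theta \in \mbb B_\rho^\infty$, and $\QQ^\theta(\mc N_\gamma(x_0) \times \mc Y) = \theta p_1 + (1-\theta) p_0 > 0$.

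Next I would compute the conditional distribution induced by $\QQ^\theta$: for every measurable $A \subseteq \mc Y$,
\[
\frac{\QQ^\theta(\mc N_\gamma(x_0) \times A)}{\QQ^\theta(\mc N_\gamma(x_0) \times \mc Y)} = \frac{\theta p_1 \mu_0^1(A) + (1-\theta) p_0 \mu_0^0(A)}{\theta p_1 + (1-\theta) p_0}.
\]
This is a convex combination of $\mu_0^1$ and $\mu_0^0$ with weight $\theta p_1 / (\theta p_1 + (1-\theta) p_0)$ on $\mu_0^1$. To force this weight to equal $\lambda$, I would solve $\theta p_1 = \lambda(\theta p_1 + (1-\theta) p_0)$ for $\theta$, obtaining
\[
\theta = \frac{\lambda p_0}{(1-\lambda) p_1 + \lambda p_0}.
\]
Since $p_0, p_1 > 0$ and $\lambda \in (0,1)$, one checks directly that $\theta \in (0,1)$ (it is positive, and strictly less than $1$ because $(1-\lambda) p_1 > 0$), so $\QQ^\theta$ is a genuine convex combination. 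With this choice the conditional distribution of $\QQ^\theta$ given $X \in \mc N_\gamma(x_0)$ is exactly $\lambda \mu_0^1 + (1-\lambda) \mu_0^0$, which therefore lies in $\mc B_{x_0,\gamma}(\mbb B_\rho^\infty)$, proving convexity.

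The only genuinely non-routine point is recognizing that taking $\theta = \lambda$ does \emph{not} work, because conditioning reweights the two components by their masses $p_0, p_1$ on the conditioning event; the fix is the explicit algebraic solve for $\theta$ above, after which everything is a direct verification. The disintegration identity in~\eqref{eq:Bx-def} and Lemma~\ref{lemma:convexity-joint} do the remaining work, and no measure-theoretic subtleties beyond those already used in the excerpt are needed.
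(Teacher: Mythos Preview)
Your proof is correct and follows essentially the same route as the paper's: both pick witnesses $\QQ^0,\QQ^1$, form the mixture $\theta \QQ^1 + (1-\theta)\QQ^0$, and solve for the reparametrized weight $\theta = \lambda p_0 / (\lambda p_0 + (1-\lambda) p_1)$ so that the induced conditional distribution is exactly $\lambda \mu_0^1 + (1-\lambda)\mu_0^0$. Your explicit remark that $\theta = \lambda$ fails is a nice expository addition, but the argument is otherwise the same.
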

	\begin{proof}[Proof of Lemma~\ref{lemma:convexity}]
	    Let $\mu_0^0, \mu_0^1 \in \mc B_{x_0, \gamma}(\mbb B_\rho^\infty)$ be two arbitrary probability measures. Associated with each $\mu_0^j$, $j \in \{0, 1\}$, is a corresponding joint measure $\QQ^j \in \mc M(\mc X \times \mc Y)$ such that
	    \[
	        \QQ^j(\mc N_\gamma(x_0) \times \mc Y) > 0 \quad \text{and} \quad 
		        \ds\frac{\QQ^j(\mc N_\gamma(x_0) \times A)}{\QQ^j(\mc N_\gamma(x_0) \times \mc Y)} = \mu_0^j(A) \quad \forall A \subseteq \mc Y~\text{measurable}.
	    \]
	    Select any $\lambda \in (0, 1)$. We proceed to show that $\mu_0^\lambda = \lambda \mu_0^1 + (1-\lambda) \mu_0^0 \in \mc B_{x_0, \gamma}(\mbb B_\rho^\infty)$. Indeed, consider the joint measure 
	    \[
	        \QQ^\lambda = \theta \QQ^1 + (1-\theta) \QQ^0
	    \]
	    with $\theta$ being defined as
	    \[
	        \theta = \frac{\lambda \QQ^0(\mc N_\gamma(x_0) \times \mc Y)}{\lambda \QQ^0(\mc N_\gamma(x_0) \times \mc Y) + (1-\lambda) \QQ^1(\mc N_\gamma(x_0) \times \mc Y)} \in [0, 1].
	    \]
	    By definition, we have $\QQ^\lambda(\mc N_\gamma(x_0) \times \mc Y) > 0$, and by convexity of $\mbb B_\rho^\infty$ from Lemma~\ref{lemma:convexity-joint}, we have $\QQ^\lambda \in \mbb B_\rho^\infty$. Moreover, we have for any set $A \subseteq \mc Y$ measurable,
	    \begin{align*}
	        \frac{\QQ^\lambda(\mc N_\gamma(x_0) \times A)}{\QQ^\lambda(\mc N_\gamma(x_0) \times \mc Y)} &= \frac{\theta \QQ^1(\mc N_\gamma(x_0) \times A) + (1-\theta) \QQ^0(\mc N_\gamma(x_0) \times A)}{\theta \QQ^1(\mc N_\gamma(x_0) \times \mc Y) + (1-\theta) \QQ^0(\mc N_\gamma(x_0) \times \mc Y)} \\
	        &= \frac{\lambda \QQ^0(\mc N_\gamma(x_0) \times \mc Y) \QQ^1(\mc N_\gamma(x_0) \times A) + (1-\lambda) \QQ^1(\mc N_\gamma(x_0) \times \mc Y) \QQ^0(\mc N_\gamma(x_0) \times A)}{\QQ^0(\mc N_\gamma(x_0) \times \mc Y) \QQ^1(\mc N_\gamma(x_0) \times \mc Y)} \\
	        &= \frac{\lambda \QQ^1(\mc N_\gamma(x_0) \times A)}{\QQ^1(\mc N_\gamma(x_0) \times \mc Y)} + \frac{(1-\lambda) \QQ^0(\mc N_\gamma(x_0) \times A)}{\QQ^0(\mc N_\gamma(x_0) \times \mc Y)} \\
	        &= \lambda \mu_0^1(A) + (1-\lambda) \mu_0^0(A),
	    \end{align*}
	   where the second equality follows from the definition of $\theta$. This implies that $\mu_0^\lambda \in \mc B_{x_0, \gamma}(\mbb B_\rho^\infty)$, and further implies the convexity of $\mc B_{x_0, \gamma}(\mbb B_\rho^\infty)$. 
	\end{proof}
	
	We are now ready to prove Lemma~\ref{lemma:minimax}.
	
	\begin{proof}[Proof of Lemma~\ref{lemma:minimax}]
	    By the definition of the conditional ambiguity set $\mc B_{x_0, \gamma}(\mbb B_\rho^\infty)$, it suffices to prove the equivalence
	    \[
	     \Min{\beta \in \R^m}  \Sup{\mu_0 \in \mc B_{x_0, \gamma}(\mbb B_\rho^\infty)}~\EE_{\mu_0}[\ell(Y, \beta)] =   \Sup{\mu_0 \in \mc B_{x_0, \gamma}(\mbb B_\rho^\infty)} \Min{\beta \in \R^m}~\EE_{\mu_0}[\ell(Y, \beta)].
	    \]
	
	    First, consider the mapping $\beta \mapsto \sup_{\mu_0 \in \mc B_{x_0, \gamma}(\mbb B_\rho^\infty)}~\EE_{\mu_0}[\ell(Y, \beta)]$. The properties of $\ell$ implies that this mapping is lower semi-continuous and coercive. As a consequence, without loss of optimality, we can restrict the feasible set $\beta$ to some convex, compact ball $\mc S \Let \{ \beta: \| \beta\|_2 \le R\}$ for some radius $R \in \R_{++}$ sufficiently big.

	    We now consider the mapping $\mu_0 \mapsto \EE_{\mu_0}[\ell(Y, \beta)]$ parametrized by $\beta$. For any $\beta$, it is a linear function of $\mu_0$, and hence it is concave. It is also weakly continuous. To see this, notice that when $\DD(\cdot, \wh y)$ is coercive, the set
	    \[
	        \mc A \Let \bigcup_{i \in [N]} \{ y: \DD_{\mc Y}(y, \wh y_i) \leq \rho \},
	    \]
	    being a finite union of bounded sets, is bounded. Pick any $\QQ \in \mbb B_\rho^\infty$, by the definition of the type-$\infty$ Wasserstein distance, we have $\QQ(\mc A) = 1$. Consider the conditional measure $\mu_0^{\QQ}$ induced by $\QQ$, then we have
	    \[
	        \mu_0^{\QQ}(\mc A \cap \mc Y) = \frac{\QQ(\mc N_\gamma(x_0) \times (\mc A \cap \mc Y))}{\QQ(\mc N_\gamma(x_0) \times \mc Y)} \ge \frac{\QQ(\mc N_\gamma(x_0) \times (\mc A \cap \mc Y))}{\QQ(\mc N_\gamma(x_0) \times (\mc A \cap \mc Y))}= 1,
	    \]
	    which implies that $\mu_0^{\QQ}$ has a bounded support. This implies that $\mc B_{x_0, \gamma}(\mbb B_\rho^\infty) \subseteq \mc M(\mc A)$, where $\mc M(\mc A)$ is the set of all probability measures supported on a bounded set $\mc A$.
	    Because $\ell(\cdot, \beta)$ is continuous, there exists a bound $U\in \R_{++}$ such that $|\ell(y, \beta)|\le U$ for every $y \in \mc A$. Define now the function $\ell_U(\cdot, \beta) = \max\{-U,\min\{\ell(\cdot, \beta), U\}\}$, which is continuous and bounded. Consider any sequence of conditional measures $\{\mu_0^k\} \in \mc M(\mc A)$ that weakly converges to $\mu_0^\infty$, we have
	    \begin{align*}
	        \lim_{k \uparrow \infty} \EE_{\mu_0^k}[\ell(Y, \beta)] = \lim_{k \uparrow \infty} \EE_{\mu_0^k}[\ell_U(Y, \beta)] =  \EE_{\mu_0^\infty}[\ell_U(Y, \beta)] = \EE_{\mu_0^\infty}[\ell(Y, \beta)],
	    \end{align*}
	    which implies that the function $\mu_0 \mapsto \EE_{\mu_0}[\ell(Y, \beta)]$ is weakly continuous over $\mc M(\mc A)$.
	    
	    This line of argument suggests that 
	    \begin{subequations}
	    \begin{align}
	        \Min{\beta} \Sup{\mu_0 \in \mc B_{x_0, \gamma}(\mbb B_\rho^\infty)}~\EE_{\mu_0}[\ell(Y, \beta)] &= \Min{\beta: \| \beta\|_2\le R} \Sup{\mu_0 \in \mc B_{x_0, \gamma}(\mbb B_\rho^\infty)}~\EE_{\mu_0}[\ell(Y, \beta)] \notag\\
	        &=\Sup{\mu_0 \in \mc B_{x_0, \gamma}(\mbb B_\rho^\infty)} \Min{\beta: \| \beta\|_2 \le R} ~\EE_{\mu_0}[\ell(Y, \beta)] \label{eq:minimax-2}\\
	        &= \Sup{\mu_0 \in \mc B_{x_0, \gamma}(\mbb B_\rho^\infty)} \Min{\beta} ~\EE_{\mu_0}[\ell(Y, \beta)], \label{eq:minimax-3}
	    \end{align}
	    \end{subequations}
	    where equality~\eqref{eq:minimax-3} follows from the coercivity of the loss function, thus the constraint on $\beta$ can be dropped for $R$ sufficiently big. Equality~\eqref{eq:minimax-2} holds by Sion's minimax theorem~\cite{ref:sion1958minimax}. This finishes the proof.
	\end{proof}

	\begin{proof}[Proof of Proposition~\ref{prop:max-var}]
        Because the loss function is coercive and convex in $\beta$, we have
        \begin{align*}
            \Min{\beta \in \R} \Sup{\mu_0 \in \mc B_{x_0, \gamma}(\mbb B_\rho^\infty)}~\EE_{\mu_0}[(Y - \beta)^2] &= \Sup{\mu_0 \in \mc B_{x_0, \gamma}(\mbb B_\rho^\infty)} \Min{\beta \in \R}~\EE_{\mu_0}[(Y - \beta)^2] \\ &= \Sup{\mu_0 \in \mc B_{x_0, \gamma}(\mbb B_\rho^\infty)}~\EE_{\mu_0}[(Y - \EE_{\mu_0}[Y])^2] \\
            &= \textrm{Variance}_{\mu_0\opt}(Y),
        \end{align*}
        where the first equality follows from Lemma~\ref{lemma:minimax}, the second equality follows from the fact that for any $\mu_0 \in \mc B_{x_0, \gamma}(\mbb B_\rho^\infty)$, the estimate $\beta\opt(\mu_0) = \EE_{\mu_0}[Y]$ minimizes the objective $\EE_{\mu_0}[(Y- \beta)^2]$. The last equality follows from the definition of $\mu_0\opt$. 
        
        Let $\beta\opt$ be the optimal estimate that solves~\eqref{eq:local_DRO}, we now have
    \begin{align*}
        \textrm{Variance}_{\mu_0\opt}(Y)
    &=\Sup{\mu_0 \in \mc B_{x_0, \gamma}(\mbb B_\rho^\infty)}~\EE_{\mu_0}[(Y - \beta\opt)^2] \\
    &\geq
     \EE_{\mu_0\opt}[(Y - \beta\opt)^2] 
    =
    \textrm{Variance}_{\mu_0\opt}(Y)
    +  (\beta\opt - \EE_{\mu_0\opt}[Y])^2,
    \end{align*}
    where the last equality follows from the bias-variance decomposition. This implies that $\beta\opt = \EE_{\mu_0\opt}[Y]$ and completes the proof.
    \end{proof}
    
    
	\section{Golden-section Search for Univariate Conditional Estimate}
	
	We elaborate here on the procedure of applying a golden-section search to solve a one-dimensional local conditional estimation with a convex loss function $\ell$. We suppose that $\mc Y = [a, b]$ for some finite values $-\infty < a < b < \infty$, that $\ell(y, \cdot)$ is convex for every $y$ and that we have access to an oracle that solves~\eqref{eq:vi-def}. Given any $\beta$, the worst-case conditional expected loss $f(\beta)$ can be computed using Theorem~\ref{thm:infty-refor}. Algorithm~\ref{alg:golden-section} can be used to find the optimal conditional estimate $\beta\opt$ to any arbitrary precision.
	
	\begin{algorithm}[h]
	\caption{Golden-section Search Algorithm}
	\label{alg:golden-section}
	\begin{algorithmic}
		\STATE {\bfseries Input:} Range $[a, b] \in \R$, tolerance $\epsilon\in \R_{++}$
		\STATE {\bfseries Initialization:} Set $r \leftarrow 0.618$, $\beta_1 \leftarrow a$, $\beta_4 \leftarrow b$
		\WHILE{ $|\beta_4 - \beta_1| > \epsilon$}
		\STATE Set $\beta_2 \leftarrow r \beta_1 + (1-r) \beta_4$, $\beta_3 \leftarrow (1-r) \beta_1 + r \beta_4$
		
		\STATE \textbf{if} $f(\beta_2) \le f(\beta_3)$ \textbf{then} Set $\beta_4 \leftarrow \beta_3$ \textbf{else} Set $\beta_1 \leftarrow \beta_2$ \textbf{endif}
		
		\ENDWHILE
		\STATE Set $\beta\opt \leftarrow (\beta_1 + \beta_4)/2$
		\STATE{\bfseries Output:} $\beta\opt$
	\end{algorithmic}
\end{algorithm}

	\bibliographystyle{abbrv}
    \bibliography{arxiv}


\end{document}